\documentclass{article}

\usepackage[margin=1in]{geometry}

\usepackage[utf8]{inputenc} %
\usepackage[T1]{fontenc}    %
\usepackage{url}            %
\usepackage{booktabs}       %
\usepackage{amsfonts}       %
\usepackage{nicefrac}       %
\usepackage{microtype}      %
\usepackage{xcolor}         %

\title{
On the non-universality of deep learning:\\ quantifying the cost of symmetry
}

\author{%
  Emmanuel~Abbe \\ EPFL
   \and
   Enric Boix-Adser\`a \\ MIT
}

\usepackage{amsmath}
\usepackage{amsthm}
\usepackage{amssymb}
\usepackage{color}
\usepackage{mathtools}
\usepackage{makecell}
\usepackage{bm}
\usepackage{todonotes}
\usepackage{diagbox}
\usepackage{tablefootnote}
\usepackage[colorlinks=true,linkcolor=blue]{hyperref}
\usepackage[ruled,vlined,linesnumbered]{algorithm2e}
\usepackage{layouts}
\usepackage{algorithmic}
\usepackage{amsmath}
\usepackage{amsthm}
\usepackage{amssymb}
\usepackage{color}
\usepackage{mathtools}
\usepackage{makecell}
\usepackage{bm}
\usepackage{todonotes}
\usepackage{diagbox}
\usepackage{tablefootnote}
\usepackage{fancybox}
\usepackage[bottom]{footmisc}
\usepackage{appendix}

\DeclarePairedDelimiter{\floor}{\lfloor}{\rfloor}

\DeclareMathOperator{\E}{\mathbb{E}}

\DeclareMathOperator{\sgn}{sgn}
\DeclareMathOperator{\poly}{poly}

\DeclareMathOperator{\Rad}{Rad}

\let\baraccent=\= %
\renewcommand{\=}[1]{\stackrel{#1}{=}} %

\let\S=\Sec
\providecommand{\S}{\mathbb{S}}
\providecommand{\R}{\mathbb{R}}
\providecommand{\NN}{\mathbb{N}}
\providecommand{\CC}{\mathbb{C}}

\providecommand{\ZZ}{\mathbb{Z}}

\providecommand{\cA}{\mathcal{A}}
\providecommand{\cB}{\mathcal{B}}

\providecommand{\cD}{\mathcal{D}}

\providecommand{\cF}{\mathcal{F}}

\providecommand{\cL}{\mathcal{L}}

\providecommand{\cN}{\mathcal{N}}
\providecommand{\cP}{\mathcal{P}}

\providecommand{\cS}{\mathcal{S}}
\providecommand{\cV}{\mathcal{V}}
\providecommand{\cW}{\mathcal{W}}
\providecommand{\cX}{\mathcal{X}}
\providecommand{\cY}{\mathcal{Y}}

\providecommand{\PP}{\mathbb{P}}
\providecommand{\EE}{\mathbb{E}}

\providecommand{\eps}{\epsilon}

\providecommand{\N}{\mathbb{N}}

\mathchardef\mhyphen="2D %

\providecommand{\ba}{\bm{a}}

\providecommand{\bw}{\bm{w}}
\providecommand{\bx}{\bm{x}}
\providecommand{\by}{\bm{y}}

\providecommand{\sm}{\setminus}

\providecommand{\cH}{\mathcal{H}}

\newcommand{\interior}[1]{%
  {\kern0pt#1}^{\mathrm{o}}%
}

\usepackage[capitalise]{cleveref}
\newtheorem{theorem}{Theorem}[section]
\newtheorem{lemma}[theorem]{Lemma}

\newtheorem{claim}[theorem]{Claim}
\newtheorem{proposition}[theorem]{Proposition}
\newtheorem{corollary}[theorem]{Corollary}
\newtheorem{problem}[theorem]{Problem}
\newtheorem{assumption}[theorem]{Assumption}

\newtheorem{definition}[theorem]{Definition}

\newtheorem{remark}[theorem]{Remark}

\renewcommand{\Im}{\mathrm{Im}}

\newcommand{\Djunk}{\cD_{\mathrm{junk}}}

\newcommand{\TV}{\mathrm{TV}}
\newcommand{\KL}{\mathrm{KL}}

\def\bI{{\boldsymbol I}}

\def\bM{{\boldsymbol M}}

\def\bW{{\boldsymbol W}}

\def\ba{{\boldsymbol a}}

\def\bg{{\boldsymbol g}}

\def\bq{{\boldsymbol q}}

\def\bs{{\boldsymbol s}}

\def\bv{{\boldsymbol v}}
\def\bw{{\boldsymbol w}}
\def\bx{{\boldsymbol x}}
\def\by{{\boldsymbol y}}
\def\bz{{\boldsymbol z}}

\def\bpsi{{\boldsymbol \psi}}

\def\btheta{{\boldsymbol \theta}}

\def\bxi{{\boldsymbol \xi}}

\def\bzero{{\boldsymbol 0}}

\def\bone{{\boldsymbol 1}}

\newcommand{\<}{\langle}
\renewcommand{\>}{\rangle}

\newcommand{\fNN}{f_{\mathsf{NN}}}
\newcommand{\fNNd}{f_{\mathsf{NN},d}}

\newcommand{\gNN}{g_{\mathsf{NN}}}

\newcommand{\fmodeight}{f_{\mathrm{mod8}}}

\newenvironment{fminipage}%
  {\begin{Sbox}\begin{minipage}}%
  {\end{minipage}\end{Sbox}\fbox{\TheSbox}}

\newenvironment{algbox}[0]{
\noindent 
\begin{fminipage}{5.5in}
}{
\end{fminipage}
}

\def\Gsign{{G_{\mathrm{sign}}}}
\def\Gperm{{G_{\mathrm{perm}}}}
\def\Grot{{G_{\mathrm{rot}}}}
\def\Gsignperm{{G_{\mathrm{sign,perm}}}}

\newcommand{\AGD}{\mathcal{A}^{\mathrm{GD}}}
\newcommand{\ASGD}{\mathcal{A}^{\mathrm{SGD}}}

\newcommand{\Capitalltwo}{L^2}

\begin{document}

\maketitle

\begin{abstract}

We prove limitations on what neural networks trained by noisy gradient descent (GD) can efficiently learn. Our results apply whenever GD training is equivariant, which holds for many standard architectures and initializations. As applications, (i) we characterize the functions that fully-connected networks can weak-learn on the binary hypercube and unit sphere, demonstrating that depth-2 is as powerful as any other depth for this task; (ii) we extend the merged-staircase necessity result for learning with latent low-dimensional structure \cite{abbe2022merged} to beyond the mean-field regime. Under cryptographic assumptions, we also show hardness results for learning with fully-connected networks trained by stochastic gradient descent (SGD).

\end{abstract}

\section{Introduction}

Over the last decade, deep learning has made advances in areas as diverse as image classification \cite{krizhevsky2012imagenet}, language translation \cite{bahdanau2014neural}, classical board games \cite{silver2018general}, and programming \cite{li2022competition}. Neural networks trained with gradient-based optimizers have surpassed classical methods for these tasks, raising the question: can we hope for deep learning methods to eventually replace all other learning algorithms? In other words, is deep learning a universal learning paradigm? Recently, \cite{abbe2020poly,abbe2021power} proved that in a certain sense the answer is yes: any PAC-learning algorithm \cite{valiant1984theory} can be efficiently implemented as a neural network trained by stochastic gradient descent; analogously, any Statistical Query algorithm \cite{kearns1998efficient} can be efficiently implemented as a neural network trained by noisy gradient descent.

However, there is a catch: the result of \cite{abbe2020poly} relies on a carefully crafted network architecture with memory and computation modules, which is capable of emulating an arbitrary learning algorithm. This is far from the architectures which have been shown to be successful in practice. Neural networks in practice do incorporate domain knowledge, but they have more ``regularity'' than the architectures of \cite{abbe2020poly}, in the sense that they do not rely on heterogeneous and carefully assigned initial weights (e.g., convolutional networks and transformers for image recognition and language processing \cite{lecun1995convolutional,lecun2010convolutional,vaswani2017attention}, graph neural networks for analyzing graph data \cite{gori2005new,bruna2013spectral,velickovic2017graph}, and networks specialized for particle physics \cite{bogatskiy2020lorentz}). We therefore refine our question:

\begin{center}
    {\em Is deep learning with  ``regular'' architectures and initializations a universal learning paradigm? \\
    If not, can we quantify its limitations when architectures and data are not well aligned? }
\end{center}

We would like an answer applicable to a wide range of architectures. In order to formalize the problem and develop a general theory, we take an approach similar to \cite{ng2004feature,shamir2018distribution,li2021convolutional} of understanding deep learning through the \textit{equivariance group} $G$ (a.k.a., symmetry group) of the learning algorithm. 
\begin{definition}[$G$-equivariant algorithm]\label{def:equi-intro}
A randomized algorithm $\cA$ that takes in a data distribution $\cD \in \cP(\cX \times \cY)$\footnote{The set of probability distributions on $\Omega$ is denoted by $\cP(\Omega)$. You should think of $\cD \in \cP(\cX \times \cY)$ as a distribution of pairs $(\bx,y)$ of covariates and labels.} and outputs a function $\cA(\cD) : \cX \to \cY$ is said to be $G$-equivariant if for all $g \in G$
\begin{align}
\cA(\cD) \stackrel{d}{=} \cA(g(\cD)) \circ g. \tag{$G$-equivariance}
\end{align}
Here $g$ is a group element that acts on the data space $\cX$, and so is viewed as a function $g : \cX \to \cX$, and $g(\cD)$ is the distribution of $(g(\bx),y)$, where $(\bx,y) \sim \cD$.
\end{definition}
In the case that the algorithm $\cA$ is deep learning on the distribution $\cD$, the equivariance group depends on the optimizer, the architecture, and the network initialization \cite{ng2004feature,li2021convolutional}.\footnote{Note that the equivariance group of a \textit{training algorithm} should not be confused with the equivariance group of an \textit{architecture} in the context of geometric deep learning \cite{bronstein2021geometric}. In that context, $G$-equivariance refers to the property of a neural network architecture $\fNN(\cdot;\btheta) : \cX \to \cY$ that $\fNN(g(\bx);\btheta) = g(\fNN(\bx;\btheta))$ for all 
$\bx \in \cX$ and all group elements $g \in G$. In that case, $G$ acts on both the input in $\cX$ and output in $\cY$.}

\paragraph{Examples of $G$-equivariant algorithms in deep learning} In many deep learning settings, the equivariance group of the learning algorithm is large. Thus, in this paper, we call an algorithm ``regular'' if it has a large equivariance group. For example, SGD training of fully-connected networks with Gaussian initialization is orthogonally-equivariant \cite{ng2004feature}; and is  permutation-equivariant if we add skip connections \cite{he2016deep}.  SGD training of convolutional networks is translationally-equivariant if circular convolutions are used \cite{schubert2019circular}, and SGD training of i.i.d.-initialized transformers without positional embeddings is equivariant to permutations of tokens \cite{vaswani2017attention}. Furthermore, \cite[Theorem C.1]{li2021convolutional} provides general conditions under which a deep learning algorithm is equivariant. See also the preliminaries in Section~\ref{sec:prelim}.

\paragraph{Summary of this work} Based off of $G$-equivariance, we prove limitations on what ``regular'' neural networks trained by noisy gradient descent (GD) or stochastic gradient descent (SGD) can efficiently learn, implying a separation with the initializations and architectures considered in \cite{abbe2020poly}. For GD, we prove a master theorem that enables two novel applications: (a) characterizing which functions can be efficiently weak-learned by fully-connected (FC) networks on both the hypercube and the unit sphere; and (b) a necessity result for which functions on the hypercube with latent low-dimensional structure can be efficiently learned. See Sections~\ref{sec:contrib1-informal} and \ref{sec:contrib2-informal} for more details.

\subsection{Related work}\label{ssec:related}
Most prior work on computational lower bounds for deep learning has focused on proving limitations of kernel methods (a.k.a. linear methods). Starting with \cite{barron1993universal} and more recently with \cite{wei2019regularization,allen2019can,kamath2020approximate,
allen2020backward,hsudimension,hsu2021approximation,
abbe2022merged} it is known that there are problems on which kernel methods provably fail. These results apply to training neural networks in the Neural Tangent Kernel (NTK) regime \cite{jacot2018neural}, but do not apply to more general nonlinear training. Furthermore, for specific architectures such as FC architectures \cite{ghorbani2021linearized,misiakiewicz2022spectrum} and convolutional architectures \cite{misiakiewicz2021learning}, the kernel and random features models at initialization are well understood, yielding stronger lower bounds for training in the NTK regime.

For nonlinear training, which is the setting of this paper, considerably less is known. In the context of sample complexity, \cite{ng2004feature} introduced the study of the equivariance group of SGD, and constructed a distribution on $d$ dimensions with a $\Omega(d)$ versus $O(1)$ sample complexity separation for learning with an SGD-trained FC architecture versus an arbitrary algorithm. More recently, \cite{li2021convolutional} built on \cite{ng2004feature} to show a $O(1)$ versus $\Omega(d^2)$ sample-complexity separation between SGD-trained convolutional and FC architectures. In this paper, we also analyze the equivariance group of the training algorithm, but with the goal of proving superpolynomial computational lower bounds.

In the context of computational lower bounds, it is known that networks trained with noisy\footnote{Here the noise is used to control the gradients' precision as in \cite{abbe2020poly,abbe2021power}.} gradient descent (GD) fall under the Statistical Query (SQ) framework \cite{kearns1998efficient}, which allows showing computational limitations for GD training based on SQ lower bounds. This has been combined in \cite{abbe2020poly,shalev2017failures,malach2020computational,abbe2022initial} with the permutation symmetry of GD-training of i.i.d. FC networks to prove impossibility of efficiently learning high-degree parities and polynomials. In our work, we show that these arguments can be viewed in the broader context of more general group symmetries, yielding stronger lower bounds than previously known. For stochastic gradient descent (SGD) training, \cite{abbe2022merged} proves a computational limitation for training of two-layer mean-field networks, but their result applies only when SGD converges to the mean-field limit, and does not apply to more general architectures beyond two-layer networks. Finally, most related to our SGD hardness result is \cite{shamir2018distribution}, which shows limitations of SGD-trained FC networks under a cryptographic assumption. However, the argument of \cite{shamir2018distribution} relies on training being equivariant to linear transformations of the data, and therefore requires that data be whitened or preconditioned. Instead, our result for SGD does not require any preprocessing steps.

There is also recent work showing sample complexity benefits of invariant/equivariant neural network \textit{architectures} \cite{mei2021learning,elesedy2021provablya,elesedy2021provablyb,bietti2021sample,elesedy2022group}. In contrast, we study equivariant training \textit{algorithms}. These are distinct concepts: a deep learning algorithm can be $G$-equivariant, while the neural network architecture is neither $G$-invariant nor $G$-equivariant. For example, a FC network is not invariant to orthogonal transformations of the input. However, if we initialize it with Gaussian weights and train with SGD, then the learning algorithm is equivariant to orthogonal transformations of the input (see Proposition~\ref{prop:equi} below).

\subsection{Contribution 1: Lower bounds for noisy gradient descent (GD)}\label{sec:contrib1-informal}

Consider the supervised learning setup where we train a neural network $\fNN(\cdot; \btheta) : \cX \to \R$ parametrized by $\btheta \in \R^p$ to minimize the mean-squared error on a data distribution $\cD \in \cP(\cX \times \R)$,
\begin{align}\label{eq:loss}
\ell_{\cD}(\btheta) = \E_{(\bx,y) \sim \cD}[(y - \fNN(\bx;\btheta))^2].
\end{align}

The noisy Gradient Descent (GD) training algorithm randomly initializes $\btheta^0 \sim \mu_{\btheta}$ for some initialization distribution $\mu_{\btheta} \in \cP(\R^p)$, and then iteratively updates the parameters with step size $\eta > 0$ in a direction $\bg_{\cD}(\btheta^k)$ approximating the population loss gradient, plus Gaussian noise $\bxi^k \sim \cN(0,\tau^2 \bI)$,
\begin{align}
\label{GD}
\btheta^{k+1} = \btheta^k - \eta \bg_{\cD}(\btheta^k) + \bxi^k. \tag{GD}
\end{align}
Up to a constant factor, $\bg_{\cD}(\btheta)$ is the population loss gradient, except we have clipped the gradients of the network with the projection operator $\Pi_{B(0,R)}$ to lie in the ball $B(0,R) = \{\bz : \|\bz\|_2 \leq R\} \subset \R^p$,\footnote{Note that if $\fNN$ is an $R$-Lipschitz model, then $\bg_{\cD}(\btheta)$ will simply be the population gradient of the loss.}
\begin{align*}
 \bg_{\cD}(\btheta) = -\E_{(\bx,y) \sim \cD}[(y-\fNN(\bx;\btheta)) (\Pi_{B(0,R)} \nabla_{\btheta}\fNN(\bx;\btheta))].
 \end{align*}
Clipping the gradients is often used in practice to avoid instability from exploding gradients (see, e.g., \cite{zhang2019gradient} and references within). In our context, clipping ensures that the injected noise $\bxi^k$ is on the same scale as the gradient $\nabla_{\btheta} \fNN$ of the network and so it controls the gradients' precision. Similarly to the works \cite{abbe2020poly,abbe2021power,abbe2022initial}, we consider noisy gradient descent training to be efficient if the following conditions are met.
\begin{definition}[Efficiency of GD, informal]
GD training is \textit{efficient} if the clipping radius $R$, step size $\eta$, and inverse noise magnitude $1/\tau$ are all polynomially-bounded in $d$, since then \eqref{GD} can be efficiently implemented using noisy minibatch SGD\footnote{Efficient implementability by minibatch SGD assumes bounded residual errors.}.
\end{definition}

We prove that some data distributions cannot be efficiently learned by $G$-equivariant GD training. For this, we introduce the $G$-alignment:
\begin{definition}[$G$-alignment]\label{def:g-alignment}
Let $G$ be a compact group, let $\mu_{\cX} \in \cP(\cX)$ be a distribution over data points, and let $f \in \Capitalltwo(\mu_{\cX})$ be a labeling function. The $G$-alignment of $(\mu_{\cX}, f)$ is:
\begin{align*}
\mathcal{C}((\mu_{\cX}, f); G) = \sup_{h} \E_{g \sim \mu_G}[\E_{\bx \sim \mu_{\cX}}[f(g(\bx))h(\bx)]^2],
\end{align*}
where $\mu_G$ is the Haar measure of $G$ and the supremum is over $h \in \Capitalltwo(\mu_{\cX})$ such that $\|h\|^2 = 1$.
\end{definition}

In our applications, we use tools from representation theory (see e.g., \cite{knapp1996lie}) to evaluate the $G$-alignment. Using the $G$-alignment, we can prove a master theorem for lower bounds:

\begin{theorem}[GD lower bound, informal statement of Theorem~\ref{thm:gd-lower}]\label{thm:gd-lower-informal}
Let $\cD_f \in \cP(\cX \times \R)$ be the distribution of $(\bx,f(\bx))$ for $\bx \sim \mu_{\cX}$. If $\mu_{\cX}$ is $G$-invariant\footnote{Meaning that if $\bx \sim \mu_{\cX}$, then for any $g \in G$, we also have $g(\bx) \sim \mu_{\cX}$.} and the $G$-alignment of $(\mu_{\cX}, f)$ is small, then $f$ cannot be efficiently learned by a $G$-equivariant GD algorithm.
\end{theorem}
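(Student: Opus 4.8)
The plan is to leverage the $G$-equivariance of the training algorithm $\cA$ through a hypothesis-testing argument: for a Haar-random $g\in G$, the noisy GD trajectory run against the ``rotated'' data distribution $g(\cD_f)$ will be shown statistically indistinguishable from the trajectory run against a fixed reference distribution $\cD_*$ that carries no information about $g$, so $\cA$'s output cannot fit $f\circ g^{-1}$ for typical $g$. First, since $\mu_{\cX}$ is $G$-invariant, $g(\cD_f)$ is precisely $\cD_{f\circ g^{-1}}$, the law of $(\bx,f(g^{-1}(\bx)))$; the equivariance identity $\cA(\cD_f)\stackrel{d}{=}\cA(g(\cD_f))\circ g$ then forces the population test error of $\cA(\cD_f)$ on $\cD_f$ to have the same distribution as that of $\cA(g(\cD_f))$ on $g(\cD_f)$. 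Hence, if $\cA$ efficiently learns $f$ — achieving test error below the trivial value $\|f\|^2$ (in $\Capitalltwo(\mu_\cX)$) by a non-negligible margin, with probability at least $1-\delta$ — then for \emph{every} $g$ the output $\cA(g(\cD_f))$ achieves small error on $\cD_{f\circ g^{-1}}$ with probability at least $1-\delta$.

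Next I take $\cD_*$ to be the law of $(\bx,\bar f)$ with $\bar f=\E_{\mu_\cX}[f]$; centering changes the $G$-alignment only by a constant factor, so I assume $\bar f=0$. Writing the clipped gradient as $\bg_\cD(\btheta)=-\E_\bx[(y-\fNN(\bx;\btheta))\phi_\btheta(\bx)]$ with $\|\phi_\btheta(\bx)\|_2\le R$, the entire $g$-dependence of the dynamics enters through $\bg_{g(\cD_f)}(\btheta)-\bg_{\cD_*}(\btheta)=-\E_\bx[f(g^{-1}(\bx))\phi_\btheta(\bx)]$. Expanding this $\R^p$-vector coordinatewise, normalizing each $\phi_{\btheta,j}$ to a unit vector of $\Capitalltwo(\mu_\cX)$, and using that Haar measure is inversion-invariant, the definition of $G$-alignment gives the \emph{uniform} bound $\E_{g\sim\mu_G}\bigl[\|\bg_{g(\cD_f)}(\btheta)-\bg_{\cD_*}(\btheta)\|_2^2\bigr]\le\mathcal{C}((\mu_{\cX},f);G)\cdot R^2$ for all $\btheta$. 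To propagate this to the full $T=\poly(d)$-step trajectory — whose transcript is a deterministic function of $\btheta^0$, the Gaussian noises, and the oracle responses — I compare the transcript law $P_g$ under oracle $g(\cD_f)$ with the transcript law $Q$ under $\cD_*$ using hybrids $H^{(j)}$ that answer the first $j$ queries with the reference oracle and the last $T-j$ with the true oracle, so $H^{(0)}=P_g$ and $H^{(T)}=Q$. Consecutive hybrids differ at exactly one step, and at that step the query point $\btheta^{(j+1)}$ is a function of a reference (hence $g$-independent) run; data processing on the identical tail together with the total-variation bound between two $\tau^2\bI$-Gaussians with shifted means give $\TV(H^{(j)},H^{(j+1)})\le\eta\,\E[\|\bg_{g(\cD_f)}(\btheta^{(j+1)})-\bg_{\cD_*}(\btheta^{(j+1)})\|_2]/(\tau\sqrt{2\pi})$. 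Averaging over $g$ (legitimate now, since $\btheta^{(j+1)}$ is $g$-independent), applying Jensen and the uniform bound, and summing, I obtain $\E_g[\TV(P_g,Q)]\le\eta T R\sqrt{\mathcal{C}((\mu_{\cX},f);G)}/(\tau\sqrt{2\pi})$.

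By data processing the same bound holds for $\E_g[\TV(\mathrm{law}\,\cA(g(\cD_f)),\mathrm{law}\,\cA(\cD_*))]$. Now $F:=\cA(\cD_*)$ is a random function independent of $g$, and for any fixed realization $\E_\bx[(F(\bx)-f(g^{-1}(\bx)))^2]=\|F\|^2-2\langle F,f\circ g^{-1}\rangle+\|f\|^2$ in $\Capitalltwo(\mu_\cX)$; since $\E_g[\langle F,f\circ g^{-1}\rangle^2]\le\mathcal{C}((\mu_{\cX},f);G)\,\|F\|^2$, Markov plus AM--GM give $\PP_g[\E_\bx[(F(\bx)-f(g^{-1}(\bx)))^2]\le\eps]\le\mathcal{C}((\mu_{\cX},f);G)/(\|f\|^2-\eps)$ whenever $\eps<\|f\|^2$, \emph{regardless of $\|F\|$}. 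Combining with the equivariance reduction and the indistinguishability bound forces $1-\delta-\eta TR\sqrt{\mathcal{C}}/(\tau\sqrt{2\pi})\le\mathcal{C}/(\|f\|^2-\eps)$, which is impossible once $\mathcal{C}=\mathcal{C}((\mu_{\cX},f);G)$ is sufficiently small relative to the polynomial quantities $T,R,\eta,1/\tau$ and the learning margin $\|f\|^2-\eps$ — this is the quantitative meaning of ``the $G$-alignment is small.''

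I expect the main obstacle to be the adaptivity in the hybridization: along the \emph{true} trajectory the query point $\btheta^k$ is correlated with the hidden $g$, so the uniform-in-$\btheta$ gradient bound cannot be averaged over $g$ directly. Interpolating with the reference oracle \emph{first} — so that consecutive hybrids differ at a single step whose query point depends only on a $g$-independent reference run — is precisely what removes this correlation, and getting this ordering together with the accompanying data-processing estimate right is the heart of the argument. Secondary care is needed to confirm that efficient noisy GD accesses $\cD$ only through these bounded, noise-corrupted gradient queries with $T,R,\eta,1/\tau$ all polynomial in $d$, and to handle the centering of $f$ and the case of unbounded $f\in\Capitalltwo(\mu_\cX)$ (only $\|f\|_{\Capitalltwo}<\infty$ and $\|\phi_\btheta(\bx)\|_2\le R$ enter the estimates).
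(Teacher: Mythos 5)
Your proposal is correct and follows essentially the same strategy as the paper: use equivariance to reduce learning $f$ to learning the Haar-randomized class $\{f\circ g\}_{g\in G}$, compare the noisy-GD run on the true distribution to a run on a ``reference/junk'' distribution labeled by a $G$-invariant baseline, bound the per-step gradient discrepancy coordinatewise by the $G$-alignment times $R^2$ (your uniform bound $\E_g\|\bg_{g(\cD_f)}(\btheta)-\bg_{\cD_*}(\btheta)\|_2^2\le \mathcal{C}R^2$ is exactly the paper's step), and finish with the Markov/AM--GM argument showing the $g$-independent reference output cannot correlate with $f\circ g^{-1}$ except with probability $\mathcal{C}/(\|f\|^2-\eps)$ (the paper's Lemma~\ref{lem:junk-stuck} with $\alpha$ taken to be the mean). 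The one genuine difference is the trajectory-comparison step: you interpolate with a TV hybrid argument, replacing one oracle answer at a time along a $g$-independent reference prefix, which gives $\E_g[\TV]\lesssim \eta T R\sqrt{\mathcal{C}}/\tau$, i.e.\ linear in the number of steps; the paper instead bounds the KL divergence of the full trajectories via the chain rule, the Markov property of GD, and the Gaussian-shift KL, then applies Pinsker and Jensen, obtaining $\frac{\eta R}{2\tau}\sqrt{k\mathcal{C}}$. Both suffice for the informal statement (superpolynomially small alignment versus polynomial $\eta,T,R,1/\tau$), and your hybrid ordering correctly defuses the adaptivity issue you flag, but the paper's KL route is quantitatively sharper: it yields step lower bounds of order $1/\mathcal{C}$ rather than your $1/\sqrt{\mathcal{C}}$, which is what gives the $\Omega(d^{l+1})$ (rather than $\Omega(d^{(l+1)/2})$) bound in the merged-staircase application. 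The paper's formal version also allows an arbitrary $G$-invariant baseline $\alpha$ and a distribution over targets, whereas you fix $\alpha=\E[f]$; this is immaterial for the informal claim (and your observation that centering changes $\mathcal{C}$ only by a constant factor is correct), but the general $\alpha$ is what powers the beyond-weak-learning applications.
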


\paragraph{Proof ideas} We first make an observation of \cite{ng2004feature}: if a $G$-equivariant algorithm can learn the function $f$ by training on the distribution $\cD_f$, then, for any group element $g \in G$, it can learn $f \circ g$ by training on the distribution $\cD_{f \circ g}$. In other words, the algorithm can learn the class of functions $\cF = \{f \circ g : g \in g\}$, which can potentially be much larger than just the singleton set $\{f\}$. We conclude by showing that the class of functions $\cF$ cannot be efficiently learned by GD training. The intuition is that the $G$-alignment measures the diversity of the functions in $\cF$. If the $G$-alignment is small, then there is no function $h$ that correlates with most of the functions in $\cF$, which can be used to show $\cF$ is hard to learn by gradient descent.

This type of argument appears in \cite{abbe2020poly,abbe2022initial} in the specific case of Boolean functions and for permutation equivariance; our proof both applies to a more general setting (beyond Boolean functions and permutations) and yields sharper bounds; see Appendix~\ref{app:cross-pred}. Our bound can also be interpreted in terms of the Statistical Query framework, as we discuss in Appendix~\ref{app:sq}. While Theorem~\ref{thm:gd-lower-informal} is intuitively simple, we demonstrate its power and ease-of-use by deriving two new applications.

\paragraph{Application: Characterization of weak-learnability by fully-connected (FC) networks}

In our first application, we consider weak-learnability: when can a function be learned non-negligibly better than just outputting the estimate $\fNN \equiv 0$? Using Theorem~\ref{thm:gd-lower-informal}, we characterize which functions over the binary hypercube $f : \{+1,-1\}^d \to \R$ and over the sphere $f : \S^{d-1} \to \R$ are efficiently weak-learnable by GD-trained FC networks with i.i.d. symmetric and i.i.d. Gaussian initialization, respectively. The takeaway is that a function $f : \{+1,-1\}^d \to \R$ is weak-learnable if and only if it has a nonnegligible Fourier coefficient of order $O(1)$ or $d - O(1)$. Similarly, a function $f : \S^{d-1} \to \R$ is weak-learnable if and only if it has nonnegligible projection onto the degree-$O(1)$ spherical harmonics. Perhaps surprisingly, such functions can be efficiently weak-learned by 2-layer fully-connected networks, which shows that adding more depth does not help. This application is presented in Section~\ref{sec:characterize}.

\paragraph{Application: Evidence for the staircase property}
In our second application, we consider learning a target function $f : \{+1,-1\}^d \to \R$ that only depends on the first $P$ coordinates, $f(\bx) = h(x_1,\ldots,x_P)$. Our regime of interest here is when the function $h and  : \{+1,-1\}^P \to \R$ remains fixed and the dimension $d$ grows, since this models the situation where a latent low-dimensional space determines the labels in a high-dimensional dataset. Recently, \cite{abbe2022merged} studied SGD-training of mean-field two-layer networks, and gave a near-characterization of which functions can be learned to arbitrary accuracy $\eps$ in $O_{h,\eps}(d)$ samples, in terms of the \textit{merged-staircase property} (MSP). Using Theorem~\ref{thm:gd-lower-informal}, we prove that the MSP is necessary for GD-learnability whenever training is permutation-equivariant (which applies beyond the 2-layer mean-field regime) and we also generalize it beyond leaps of size 1. Details are in Section~\ref{sec:msp}.

\subsection{Contribution 2: Hardness for stochastic gradient descent (SGD)}\label{sec:contrib2-informal}

The second part of this paper concerns Stochastic Gradient Descent (SGD) training, which randomly initializes the weights $\btheta^0 \sim \mu_{\btheta}$ , and then iteratively trains the parameters with the following update rule to try to minimize the loss \eqref{eq:loss}:
\begin{align}
\label{SGD}
\btheta^{k+1} = \btheta^k - \eta \nabla_{\btheta} (y - \fNN(\bx_{k+1}; \btheta))^2 \mid_{\btheta = \btheta^k} \tag{SGD},
\end{align}
where $(y_{k+1},\bx_{k+1}) \sim \cD$ is a fresh sample on each iteration, and $\eta > 0$ is the learning rate.\footnote{For brevity, we focus on one-pass SGD with a single fresh sample per iteration. Our results extend to empirical risk minimization (ERM) setting and to mini-batch SGD, see Remark~\ref{rem:beyond-sgd}.} 

Proving computational lower bounds for SGD is a notoriously difficult problem \cite{abbe2021power}, exacerbated by the fact that for general architectures SGD can be used to simulate any polynomial-time learning algorithm \cite{abbe2020poly}. However, we demonstrate that one can prove hardness results for SGD training based off of cryptographic assumptions when the training algorithm has a large equivariance group. We demonstrate the non-universality of SGD on a standard FC architecture. 

\begin{theorem}[Hardness for SGD, informal statement of Theorem~\ref{thm:sgd-sum-mod-8-hard}]
Under the assumption that the Learning Parities with Noise (LPN) problem\footnote{See Section~\ref{sec:results-sgd} and Appendix~\ref{app:lpgn} for definitions and discussion on LPN.} is hard, FC neural networks with Gaussian initialization trained by SGD cannot learn 
$\fmodeight : \{+1,-1\}^d \to \{0,\ldots,7\}$, $$\fmodeight(\bx) \equiv \sum_{i=1}^d x_i \pmod{8},$$ in polynomial time from noisy samples $(\bx, \fmodeight(\bx) + \xi)$ where $\bx \sim \{+1,-1\}^d$ and $\xi \sim \cN(0,1)$.
\end{theorem}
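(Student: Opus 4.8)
The plan is a polynomial-time reduction from Learning Parities with Noise (LPN): assuming SGD trained on a Gaussian-initialized fully-connected network learns $\fmodeight$ in polynomial time, we build a polynomial-time LPN solver. The two ingredients are (a) the sign-flip equivariance of this training algorithm, a special case of the orthogonal equivariance in Proposition~\ref{prop:equi}, and (b) the fact that the orbit of $\fmodeight$ under coordinate sign flips encodes every parity function.

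\textbf{Equivariance transfer and parity encoding.} Let $G=\{\pm1\}^d$ act coordinatewise, and for $S\subseteq[d]$ let $g_S\in G$ flip the signs in $S$; $G$ embeds in $O(d)$ and the uniform measure on $\{\pm1\}^d$ is $G$-invariant, so Proposition~\ref{prop:equi} together with the transfer argument behind Theorem~\ref{thm:gd-lower-informal} (following \cite{ng2004feature}) shows that if the SGD learner learns $\cD_{\fmodeight}$ then it learns $\cD_{\fmodeight\circ g_S}$ for every $S$. Now, writing $\chi_S(\bx)=\prod_{i\in S}x_i$, a direct computation gives
\[
\fmodeight(g_S(\bx))\;\equiv\;\Bigl(\textstyle\sum_{i=1}^d x_i\Bigr)-2|S|+2-2\,\chi_S(\bx)\pmod 8 ,
\]
since flipping the signs in $S$ changes $\sum_i x_i$ by $-2\sum_{i\in S}x_i$, and $2\sum_{i\in S}x_i$ depends on $\bx$ only through $|\{i\in S:x_i=-1\}|\bmod 2$, giving $2\sum_{i\in S}x_i\equiv 2|S|-2+2\chi_S(\bx)\pmod 8$. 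The modulus $8$ is exactly what is needed: the two possible values $2\chi_S(\bx)\in\{2,6\}$ are distinct mod $8$ although they coincide mod $4$, so given $\bx$ and $|S|$ the integer $\fmodeight(g_S(\bx))$ determines the parity bit $\chi_S(\bx)$. Hence a learner achieving mean-squared error $\eps$ on $\fmodeight\circ g_S$ yields, by rounding its output to the nearest integer, a predictor $\hat p$ that agrees with $\chi_S$ on a $1-O(\eps)$ fraction of $\{\pm1\}^d$.

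\textbf{The reduction.} Given LPN samples with secret $\bs$ (write $S=\mathrm{supp}(\bs)$): first, combine pairs of LPN samples with a known linear correction to obtain LPN samples for the padded secret $\bs'=(\bs,\bone\oplus\bs)\in\{0,1\}^{2d}$, which has the known weight $|S'|=d$, still determines $\bs$ (as $S=S'\cap[d]$), and has a mildly larger (still $<1/2$) noise rate. Second, convert the discrete label noise into additive Gaussian label noise using the equivalence between LPN and its Gaussian-noise variant (Appendix~\ref{app:lpgn}), giving samples $(\bx',\chi_{S'}(\bx')+\xi)$. Third, using the displayed identity and the known weight $|S'|=d$, transform these into i.i.d.\ samples $(\bx',y)$ of $\cD_{\fmodeight\circ g_{S'}}$ with $y=\fmodeight(g_{S'}(\bx'))+\cN(0,O(1))$. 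Feed these to the assumed learner to obtain $\hat h$ with small mean-squared error, and extract $\hat p$ as above. Finally recover $S'$ (hence $\bs$) coordinate-by-coordinate: for each $i$, the product $\hat p(\bx')\,\hat p(g_{\{i\}}(\bx'))$ equals $+1$ if $i\notin S'$ and $-1$ if $i\in S'$, with probability $1-O(\eps)$ over uniform $\bx'$, so a majority vote over polynomially many fresh $\bx'$ identifies membership of $i$ in $S'$ with high probability. All steps run in polynomial time and consume polynomially many LPN samples, contradicting the hardness of LPN.

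\textbf{Main obstacle.} The conceptual content --- the equivariance transfer and the parity-encoding identity --- is short; the real work is in making the reduction faithful, i.e.\ ensuring the samples fed to the learner are genuinely i.i.d.\ from $\cD_{\fmodeight\circ g_{S'}}$. The crux is the noise bookkeeping: LPN's label-flip noise must be turned into the additive $\cN(0,1)$ label noise of the theorem, which is why the Gaussian-noise formulation of LPN and its two-way reduction to standard LPN are needed (Appendix~\ref{app:lpgn}); one also has to be careful with the choice of modular representative when pushing the Gaussian noise through the ``$\bmod 8$'', and check that a small constant mean-squared error already suffices to drive the coordinate-wise parity recovery (this last point uses the majority-vote boosting above).
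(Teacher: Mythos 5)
Your proposal is correct and follows the same backbone as the paper's proof: sign-flip equivariance of Gaussian-initialized FC SGD (Proposition~\ref{prop:equi}, used in Lemma~\ref{lem:sgd-sign-helper}), the mod-$8$ identity showing that the sign-flipped orbit of $\fmodeight$ encodes every parity $\chi_S$ (the same identity appears in the proof of Lemma~\ref{lem:lpgn-to-sfsm8}), and a reduction from LPN routed through a Gaussian-noise variant with a known-weight promise (Appendix~\ref{app:lpgn}). Two of your implementation choices genuinely differ from the paper and are arguably cleaner. First, you enforce the weight promise by padding the secret with its complement, $\bs'=(\bs,\bone\oplus\bs)$, which has weight exactly $d$ in dimension $2d$; the paper instead pads with $r$ extra ``active'' coordinates, does not know the right $r$, and must try all $r\in\{0,\dots,d\}$ and estimate which one works via repeated calls and a Hoeffding/hypothesis-testing step (Lemma~\ref{lem:lpn-to-promise-lpn}, Figure~\ref{fig:lpn-to-promise-lpn}); your padding removes that search (and, done with a fresh random half rather than by XORing label pairs, it needn't even increase the noise rate). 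Second, you extract the full secret coordinate-by-coordinate by comparing $\hat p(\bx')$ with $\hat p$ at a single-bit flip and majority-voting over fresh points, whereas the paper's LPGN formulation only asks to predict $\chi_S$ at one random query, which it reads off directly from the learned network evaluated at the (relabeled) query; your endpoint is stronger but needs the extra observation that both evaluation points are uniform and that the per-point error of the rounded network is a small constant, which your Markov-plus-majority argument supplies. The one step you flag but do not resolve — turning parity-plus-Gaussian samples into genuine $\cD(\fmodeight\circ g_{S'},\cH_{2d},\gamma)$ samples despite the mod-$8$ wraparound — is a real issue (naively reducing the noisy label mod $8$ destroys the additive-Gaussian form), and the paper handles it with the piecewise relabeling of Figure~\ref{fig:lpgn-to-sfsm8}, which picks the canonical representative case-by-case and flips the sign of the Gaussian noise where needed, exploiting its symmetry; this also fixes the exact noise calibration (the factor $2$ in the identity is why the paper assumes $(\gamma/2)$-LPGN hardness, rather than your looser $\cN(0,O(1))$). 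With that detail filled in, your reduction is a valid alternative proof.
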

This result shows a limitation of SGD training based on an average-case reduction from a cryptographic problem. The closest prior result is in \cite{shamir2018distribution}, which proved hardness results for learning with SGD on FC networks, but required preprocessing the data with a whitening transformation.

\paragraph{Proof idea} The FC architecture and Gaussian initialization are necessary: an architecture that outputted $\fmodeight(\bx)$ at initialization would trivially achieve zero loss. However, SGD on Gaussian-initialized FC networks is \textit{sign-flip} equivariant, and this symmetry makes $\fmodeight$ hard to learn. If a sign-flip equivariant algorithm can learn the function $\fmodeight(\bx)$ from noisy samples, then it can learn the function $\fmodeight(\bx \odot \bs)$ from noisy samples, where $\bs \in \{+1,-1\}^d$ is an unknown sign-flip vector, and $\odot$ denotes elementwise product. However, this latter problem is hard under standard cryptographic assumptions. More details in Section~\ref{sec:results-sgd}.

\section{Preliminaries}\label{sec:prelim}

\paragraph{Notation}
Let $\cH_d = \{+1,-1\}^d$ be the binary hypercube, and $\S^{d-1} = \{\bx \in \R^d : \|\bx\|_2 = 1\}$ be the unit sphere.
The law of a random variable $X$ is $\cL(X)$. If $S$ is a finite set, then $X \sim S$ stands for $X \sim \mathrm{Unif}[S]$. Also let $\bx \sim \S^{d-1}$ denote $\bx$ drawn from the uniform Haar measure on $\S^{d-1}$. For a set $\Omega$, let $\cP(\Omega)$ be the set of distributions on $\Omega$. Let $\odot$ be the elementwise product. For any $\mu_{\cX} \in \cP(\cX)$, and group $G$ acting on $\cX$, we say $\mu_{\cX}$ is $G$-invariant if $g(\bx) \stackrel{d}{=} \bx$ for $\bx \sim \mu_{\cX}$ and any $g \in G$.

\subsection{Equivariance of GD and SGD}

We define GD and SGD equivariance separately.

\begin{definition}\label{def:equi-gd}
Let $\AGD$ be the algorithm that takes in data distribution $\cD \in \cP(\cX \times \R)$, runs \eqref{GD} on initialization  $\btheta^0 \sim \mu_{\btheta}$ for $k$ steps, and outputs the function $\AGD(\cD) = \fNN(\cdot ;\btheta^k)$

We say ``$(\fNN, \mu_{\btheta})$-GD is $G$-equivariant'' if $\AGD$ is $G$-equivariant in the sense of Definition~\ref{def:equi-intro}.
\end{definition}

\begin{definition}\label{def:equi-sgd}
Let $\ASGD$ be the algorithm that takes in samples $(\bx_i,y_i)_{i \in [n]}$, runs \eqref{SGD} on initialization $\btheta^0 \sim \mu_{\btheta}$ for $n$ steps, and outputs $\ASGD((\bx_i,y_i)_{i \in [n]}) = \fNN(\cdot; \btheta^k)$.

We say ``$(\fNN, \mu_{\btheta})$-SGD is $G$-equivariant'' if $\ASGD((\bx_i,y_i)_{i \in [n]}) \stackrel{d}{=} \ASGD((g(\bx_i),y_i)_{i \in [n]}) \circ g$ for any $g \in G$, and any samples $(\bx_i,y_i)_{i \in [n]}$.
\end{definition}

\subsection{Regularity conditions on networks imply equivariances of GD and SGD} 
We take a data space $\cX \subseteq \R^d$, and consider the following groups that act on $\R^d$.
\begin{definition} Define the following groups and  actions:
\begin{itemize}
    \item Let $G_{perm} = S_d$ denote the group of permutations on $[d]$. An element $\sigma \in G_{perm}$ acts on $\bx \in \R^d$ in the standard way: $\sigma(\bx) = (x_{\sigma(1)},\ldots,x_{\sigma(d)})$.
    \item Let $\Gsignperm$ denote the group of signed permutations, an element $g = (\bs,\sigma) \in \Gsignperm$ is given by a sign-flip vector $\bs \in \cH_d$ and a permutation $\sigma \in G_{perm}$. It acts on $\bx \in \R^d$ by $g(\bx) = \bs \odot \sigma(\bx) = (s_1x_{\sigma(1)},\ldots,s_dx_{\sigma(d)})$.\footnote{The group product is $g_1g_2 = (\bs_1,\sigma_1)(\bs_2,\sigma_2) = (\bs_1 \odot \sigma_1(\bs_2), \sigma_1 \circ \sigma_2)$.}
    \item Let $\Grot = SO(d) \subseteq GL(d,\R)$ denote the rotation group. An element $g \in \Grot$ is a rotation matrix that acts on $\bx \in \R^d$ by matrix multiplication.
\end{itemize}
\end{definition}

Under mild conditions on the neural network architecture and initialization, GD and SGD training are known to be $G_{perm}$-, $\Gsignperm$-, or $\Grot$-equivariant~\cite{ng2004feature,li2021convolutional}.
\begin{assumption}[Fully-connected i.i.d. first layer and no skip connections from the input]\label{ass:noskip}
We can decompose the parameters as $\btheta = (\bW, \bpsi)$, where $\bW \in \R^{m \times d}$ is the matrix of the first-layer weights, and there is a function $\gNN(\cdot ; \bpsi) : \R^m \to \R$ such that
$\fNN(\bx;\btheta) = \gNN(\bW \bx; \bpsi)$. 
Furthermore, the initialization distribution is $\mu_{\btheta} = \mu_{\bW} \times \mu_{\bpsi}$, where $\mu_{\bW} = \mu_w^{\otimes (m \times d)}$ for $\mu_w \in \cP(\R)$.
\end{assumption}

Notice that Assumption~\ref{ass:noskip} is satisfied by FC networks with i.i.d. initialization. Under assumptions on $\mu_w$, we obtain equivariances of GD and SGD (see Appendix~\ref{app:equi} for proofs.)
\begin{proposition}[\cite{ng2004feature,li2021convolutional}]\label{prop:equi}
Under Assumption~\ref{ass:noskip}, GD and SGD are $G_{perm}$-equivariant. If $\mu_w$ is sign-flip symmetric, then GD and SGD are $\Gsignperm$-equivariant. If $\mu_w = \cN(0,\sigma^2)$ for some $\sigma$, then GD and SGD are $\Grot$-equivariant.
\end{proposition}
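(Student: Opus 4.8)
The plan is to reduce all three equivariance claims to a single mechanism. In each case the group element $g$ acts on $\R^d$ by left-multiplication by an orthogonal matrix $Q_g \in O(d)$ — a permutation matrix, a signed-permutation matrix, or a rotation, respectively — and $g \mapsto Q_g$ is a homomorphism with $Q_g^{-1} = Q_g^\top$. Using Assumption~\ref{ass:noskip} (parameters $\btheta = (\bW,\bpsi)$ with $\fNN(\bx;\btheta) = \gNN(\bW\bx;\bpsi)$ and $\mu_{\btheta} = \mu_{\bW}\times\mu_{\bpsi}$, $\mu_{\bW} = \mu_w^{\otimes(m\times d)}$), I would introduce the reparametrization map $\Phi_g : \R^p \to \R^p$, $\Phi_g(\bW,\bpsi) = (\bW Q_g^\top, \bpsi)$, and show (a) $\Phi_g$ is a symmetry of the training dynamics on $\cD$ versus $g(\cD)$, and (b) under the stated hypothesis on $\mu_w$, $\Phi_g$ preserves the initialization law $\mu_{\btheta}$; a coupling argument then closes the proof.

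For step (a): since right-multiplication by $Q_g^\top$ preserves the Frobenius inner product and $\Phi_g$ is the identity on $\bpsi$, the map $\Phi_g$ is orthogonal on $\R^p$, so in particular it preserves $\cN(0,\tau^2\bI)$. The key identity, immediate from $Q_g^\top Q_g = \bI$, is
\[
\fNN(g(\bx);\Phi_g(\btheta)) = \gNN(\bW Q_g^\top Q_g \bx;\bpsi) = \fNN(\bx;\btheta) \qquad \text{for all }\bx,\btheta ,
\]
and here the absence of input skip connections is exactly what makes $\fNN$ depend on $\bx$ only through $\bW\bx$. Differentiating this identity in $\btheta$ and using orthogonality of $\Phi_g$ gives $\nabla_\btheta\fNN(g(\bx);\Phi_g(\btheta)) = \Phi_g\nabla_\btheta\fNN(\bx;\btheta)$; the clipping $\Pi_{B(0,R)}$ is a radial map and hence commutes with the orthogonal $\Phi_g$; and the residual $y - \fNN(g(\bx);\Phi_g(\btheta)) = y - \fNN(\bx;\btheta)$ is unchanged. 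Combining these, the clipped gradient field satisfies $\bg_{g(\cD)}(\Phi_g(\btheta)) = \Phi_g\,\bg_{\cD}(\btheta)$, and analogously the per-sample SGD direction obeys $\nabla_\btheta(y-\fNN(g(\bx);\btheta))^2\big|_{\Phi_g(\btheta)} = \Phi_g\,\nabla_\btheta(y-\fNN(\bx;\btheta))^2\big|_{\btheta}$.

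For step (b) I would verify $\bW Q_g^\top \stackrel{d}{=} \bW$ for $\bW$ with i.i.d.\ $\mu_w$ entries, case by case. If $Q_g$ is a permutation matrix, $\bW Q_g^\top$ merely permutes the columns of $\bW$, which keeps the entries i.i.d.\ with no assumption on $\mu_w$; this yields $\Gperm$-equivariance. If $Q_g$ is a signed-permutation matrix, $\bW Q_g^\top$ permutes columns and negates some of them, preserving the law precisely when $\mu_w$ is sign-flip symmetric; this yields $\Gsignperm$-equivariance. If $Q_g \in SO(d)$, each row of $\bW Q_g^\top$ is the image under $Q_g$ of the corresponding (independent, isotropic) row of $\bW$, which is distribution-preserving exactly when $\mu_w = \cN(0,\sigma^2)$; this yields $\Grot$-equivariance.

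Finally I would run the coupling: generate the GD trajectory on $\cD$ from $\btheta^0\sim\mu_{\btheta}$ with noises $(\bxi^k)_k$, and couple the GD trajectory on $g(\cD)$ to start at $\Phi_g(\btheta^0)$ with noises $(\Phi_g(\bxi^k))_k$; by steps (a) and (b) these have exactly the laws required of a genuine GD run on $g(\cD)$. An induction on $k$ using the update-equivariance from step (a) shows the coupled $g(\cD)$-iterate equals $\Phi_g(\btheta^k)$ at every step, whence the key identity gives $\AGD(g(\cD))\circ g = \fNN(g(\cdot);\Phi_g(\btheta^k)) = \fNN(\cdot;\btheta^k) = \AGD(\cD)$ under the coupling, i.e.\ $\AGD(\cD) \stackrel{d}{=} \AGD(g(\cD))\circ g$. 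The SGD case is the same argument with the (deterministic) transformed sample sequence $(g(\bx_i),y_i)_i$ in place of $g(\cD)$ and the per-sample direction from step (a), and no noise to track. None of the steps is technically deep; the main thing to get right is the bookkeeping in step (a) — in particular that $\Phi_g$ is orthogonal, so that it commutes with the clipping and with the chain rule and preserves the injected noise — and identifying in step (b) precisely which hypothesis on $\mu_w$ each of the three groups forces.
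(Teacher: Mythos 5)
Your proposal is correct and follows essentially the same route as the paper's Appendix on equivariance: the map $\Phi_g(\bW,\bpsi)=(\bW Q_g^{\top},\bpsi)$ is the paper's $\bM\,\square\,\btheta=(\bW\bM,\bpsi)$ up to the inverse/transpose convention, and the proof proceeds identically — the key identity $\fNN(g(\bx);\Phi_g(\btheta))=\fNN(\bx;\btheta)$, the chain-rule/orthogonality argument showing the clipped gradient and injected Gaussian noise transform covariantly, the case-by-case check that $\mu_{\bW}=\mu_w^{\otimes(m\times d)}$ is invariant under column permutation, signed permutation (symmetric $\mu_w$), or rotation (Gaussian $\mu_w$), and an inductive coupling of the two trajectories. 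The only cosmetic difference is that the paper writes out the coupling only for GD and defers SGD to the cited works, whereas you sketch the (identical, noise-free) SGD argument as well.
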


\section{Lower bounds for learning with GD}\label{sec:results-gd}

In this section, let $\cD(f,\mu_{\cX}) \in \cP(\cX \times \R)$ denote the distribution of $(\bx, f(\bx))$ where $\bx \sim \mu_{\cX}$.

We give a master theorem for computational lower bounds for learning with $G$-equivariant GD.

\begin{theorem}[GD lower bound using $G$-alignment]\label{thm:gd-lower}
Let $G$ be a compact group, and let $\fNN(\cdot; \btheta) : \cX \to \R$ be an architecture and $\mu_{\btheta} \in \cP(\R^p)$ be an initialization such that GD is $G$-equivariant.

Fix any $G$-invariant distribution $\mu_{\cX} \in \cP(\cX)$, any label function $f_* \in \Capitalltwo(\mu_{\cX})$, and any baseline function $\alpha \in \Capitalltwo(\mu_{\cX})$ satisfying $\alpha \circ g = \alpha$ for all $g \in G$. Let $\btheta^k$ be the random weights after $k$ time-steps of GD training with noise parameter $\tau > 0$, step size $\eta > 0$, and clipping radius $R > 0$ on the distribution $\cD = \cD(f_*, \mu_{\cX})$. Then, for any $\eps > 0$,
\begin{align*}
\PP_{\btheta^k} [\ell_{\cD}(\btheta^k) \leq \|f_* - \alpha\|^2_{\Capitalltwo(\mu_{\cX})} - \eps] \leq \frac{\eta R \sqrt{k \mathcal{C}}}{2 \tau} + \frac{ \mathcal{C}}{\eps},
\end{align*}
where $\mathcal{C} = \mathcal{C}((f_* - \alpha, \mu_{\cX}); G)$ is the $G$-alignment of Definition~\ref{def:g-alignment}.
\end{theorem}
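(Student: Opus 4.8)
The plan is to combine the equivariance observation of \cite{ng2004feature} with a comparison of the GD trajectory against a $G$-symmetric ``null'' training dynamics, and then to cash out the two resulting error terms using the $G$-alignment; throughout, $\langle\cdot,\cdot\rangle$ and unsubscripted $\|\cdot\|$ denote the $\Capitalltwo(\mu_{\cX})$ inner product and norm, and $\|\cdot\|_2$ the Euclidean norm on $\R^p$. \textbf{Step 1 (reduce to the orbit).} Write $\hat f := \fNN(\cdot;\btheta^k) = \AGD(\cD)$ for the output of GD trained on $\cD = \cD(f_*,\mu_{\cX})$, and for $g\in G$ let $\btheta^k_g$ be the iterates of GD trained on $\cD_g := \cD(f_*\circ g,\mu_{\cX})$, all runs using the initialization law $\mu_{\btheta}$ and noise law $\cN(0,\tau^2\bI)$. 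Since $\mu_{\cX}$ is $G$-invariant one has $g(\cD(f_*,\mu_{\cX})) = \cD(f_*\circ g^{-1},\mu_{\cX})$, so $G$-equivariance of $\AGD$ applied with $g^{-1}$ gives $\fNN(\cdot;\btheta^k_g)\stackrel{d}{=}\hat f\circ g$; using $G$-invariance of $\mu_{\cX}$ again, $\ell_{\cD_g}(\btheta^k_g) = \|f_*\circ g - \fNN(\cdot;\btheta^k_g)\|^2 \stackrel{d}{=} \|(f_*-\hat f)\circ g\|^2 = \ell_{\cD}(\btheta^k)$ for every fixed $g$. Consequently, if $g\sim\mu_G$ is drawn independently of the training randomness, then $p := \PP[\ell_{\cD}(\btheta^k)\le\|f_*-\alpha\|^2-\eps]$ equals $\PP[\ell_{\cD_g}(\btheta^k_g)\le\|f_*-\alpha\|^2-\eps]$, so it suffices to show that a random orbit element $f_*\circ g$ is hard for GD.

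\textbf{Step 2 (compare to a $G$-invariant null).} Introduce $\tilde\btheta^k$, the iterates of GD trained on the $G$-invariant distribution $\tilde\cD := \cD(\alpha,\mu_{\cX})$ with the same initialization and noise, which is independent of $g$; let $P_g$ and $Q$ be the laws of the trajectories $(\btheta^0_g,\dots,\btheta^k_g)$ and $(\tilde\btheta^0,\dots,\tilde\btheta^k)$. For each fixed $g$ the learner's success event is a fixed measurable set, so $\PP[\ell_{\cD_g}(\btheta^k_g)\le\cdot]\le\PP[\ell_{\cD_g}(\tilde\btheta^k)\le\cdot]+\TV(P_g,Q)$; averaging over $g$ yields $p \le (\mathrm{I})+(\mathrm{II})$ with $(\mathrm{I}) := \PP_{g,\tilde\btheta^k}[\ell_{\cD_g}(\tilde\btheta^k)\le\|f_*-\alpha\|^2-\eps]$ and $(\mathrm{II}) := \E_g[\TV(P_g,Q)]$. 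To bound $(\mathrm{I})$, set $F := \fNN(\cdot;\tilde\btheta^k)$ and $v := (F-\alpha)/\|F-\alpha\|$ (a unit element of $\Capitalltwo(\mu_{\cX})$; the event is empty if $F=\alpha$). Expanding the square and using $\alpha\circ g = \alpha$ gives $\ell_{\cD_g}(\tilde\btheta^k) = \|f_*-\alpha\|^2 + (\|F-\alpha\|-a)^2 - a^2 \ge \|f_*-\alpha\|^2 - a^2$, where $a := \langle(f_*-\alpha)\circ g, v\rangle$; hence the event defining $(\mathrm{I})$ forces $a^2\ge\eps$, and since $v$ is a fixed unit function once $\tilde\btheta^k$ is fixed, Markov's inequality and the definition of $\mathcal{C}$ give $\PP_g[a^2\ge\eps]\le\frac{1}{\eps}\E_g[\langle(f_*-\alpha)\circ g,v\rangle^2]\le\mathcal{C}/\eps$, so $(\mathrm{I})\le\mathcal{C}/\eps$.

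\textbf{Step 3 (bound the trajectory distance).} From the definition of $\bg_{\cD}$, the residual factors for $\cD_g$ and $\tilde\cD$ are $f_*(g\bx)$ and $\alpha(\bx)$, whose difference equals $(f_*-\alpha)(g\bx)$ since $\alpha\circ g=\alpha$, and the clipped-gradient factor $\Pi_{B(0,R)}\nabla_{\btheta}\fNN(\bx;\btheta)$ is common; hence $\bg_{\cD_g}(\btheta) - \bg_{\tilde\cD}(\btheta) = -\bv_g(\btheta)$ with $\bv_g(\btheta) := \E_{\bx\sim\mu_{\cX}}[(f_*-\alpha)(g\bx)\,\Pi_{B(0,R)}\nabla_{\btheta}\fNN(\bx;\btheta)]\in\R^p$. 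The crucial move is to bound $\KL(Q\,\|\,P_g)$ in this order: both chains share the initialization law and the Gaussian transition noise of covariance $\tau^2\bI$, so the KL chain rule gives $\KL(Q\|P_g) = \frac{\eta^2}{2\tau^2}\sum_{j=0}^{k-1}\E_{\btheta^j\sim Q}[\|\bv_g(\btheta^j)\|_2^2]$, where every expectation is along the $g$-free chain $Q$. Averaging over $g$ (Tonelli) and applying the definition of $\mathcal{C}$ to each coordinate of $\bv_g(\btheta)$ gives, for each fixed $\btheta$, $\E_g[\|\bv_g(\btheta)\|_2^2] \le \mathcal{C}\,\E_{\bx}[\|\Pi_{B(0,R)}\nabla_{\btheta}\fNN(\bx;\btheta)\|_2^2] \le \mathcal{C}R^2$, the last inequality being exactly where gradient clipping is used. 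Hence $\E_g[\KL(Q\|P_g)] \le \frac{\eta^2 kR^2\mathcal{C}}{2\tau^2}$, and Pinsker's inequality together with the symmetry of $\TV$ and Jensen's inequality give $(\mathrm{II}) = \E_g[\TV(Q,P_g)] \le \sqrt{\tfrac{1}{2}\E_g[\KL(Q\|P_g)]} \le \frac{\eta R\sqrt{k\mathcal{C}}}{2\tau}$. Adding $(\mathrm{I})$ and $(\mathrm{II})$ gives the claimed bound.

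\textbf{The main obstacle.} The one genuinely delicate choice is the direction of the KL divergence in Step 3. Bounding $\KL(Q\|P_g)$ rather than $\KL(P_g\|Q)$ makes the per-step errors $\|\bv_g(\btheta^j)\|_2^2$ depend on $g$ only through $\bv_g$ --- the law of $\btheta^j$ under $Q$ being $g$-free --- so that $\E_g$ can be pulled inside the sum and $\mathcal{C}$ invoked pointwise in $\btheta$; bounding the other direction leaves $g$-dependent expectations along $P_g$, for which one only has the useless bound $\|\bv_g(\btheta)\|_2^2\le R^2\|f_*-\alpha\|^2$. The second ingredient doing real work is gradient clipping, which caps $\E_{\bx}\|\Pi_{B(0,R)}\nabla_{\btheta}\fNN(\bx;\btheta)\|_2^2$ by $R^2$ uniformly in $\btheta$; everything else --- the orbit reduction of Step 1 and the quadratic expansion used for $(\mathrm{I})$ --- is routine.
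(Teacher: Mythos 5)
Your proposal is correct and follows essentially the same route as the paper: reduce to a random orbit element $f_*\circ g$ via $G$-equivariance, compare the trajectory to a ``junk'' run trained on $\cD(\alpha,\mu_{\cX})$ via the KL chain rule (in the same direction, null versus $g$-dependent chain), Pinsker and Jensen for the TV term, and a Markov/alignment argument showing the null trajectory cannot beat the baseline loss; the paper merely packages this through a general lemma for distributions over target functions, while you also handle the quadratic expansion in the ``junk does not learn'' step with the correctly normalized direction $(F-\alpha)/\|F-\alpha\|$. No gaps.
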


As discussed in Section~\ref{sec:contrib1-informal}, the theorem states that if the $G$-alignment $\mathcal{C}$ is very small, then GD training cannot efficiently improve on the trivial loss from outputting $\alpha$: either the number of steps $k$, the gradient precision $R / \tau$, or the step size $\eta$ have to be very large in order to learn.  Appendix~\ref{sec:gd} shows a generalization of the theorem for learning a class of functions $\mathcal{\cF} = \{f_1,\ldots,f_m\}$ instead of just a single function $f_*$. This result goes beyond the lower bound of \cite{abbe2020poly} even when $G$ is the trivial group with one element: the main improvement is that Theorem~\ref{thm:gd-lower} proves hardness for learning real-valued functions beyond just Boolean-valued functions. We demonstrate the usefulness of the theorem through two new applications in Sections~\ref{sec:characterize} and \ref{sec:msp}. 

\subsection{Application: Characterizing weak-learnability by FC networks}\label{sec:characterize}

In our first application of Theorem~\ref{thm:gd-lower}, we consider FC architectures with i.i.d. initialization, and show how to use their training equivariances to characterize what functions they can weak-learn: i.e., for what target functions $f_*$ they can efficiently achieve a non-negligible correlation after training.

\begin{definition}[Weak learnability]\label{def:weak-learn}
Let $\{\mu_d\}_{d \in \NN}$ be a family of distributions $\mu_d \in \cP(\cX_d)$, and let $\{f_d\}_{d \in \NN}$ be a family of functions $f_d \in \Capitalltwo(\mu_d)$. Finally, let $\{\tilde{f}_d\}_{d \in \NN}$ be a family of estimators, where $\tilde{f}_d$ is a random function in $\Capitalltwo(\mu_d)$. We say that $\{f_d,\mu_d\}_{d \in \NN}$ is ``weak-learned'' by the family of estimators $\{\tilde{f}_d\}_{d \in \NN}$ if there are constants $d_0,C > 0$ such that for all $d > d_0$,
\begin{align}\label{eq:wl-condition}
\PP_{\tilde{f}_d}[\|f_d - \tilde{f}_d\|^2_{\Capitalltwo(\mu_d)} \leq \|f_d\|_{\Capitalltwo(\mu_d)}^2 - d^{-C}] \geq 9/10.
\end{align}
\end{definition}
The constant $9/10$ in the definition is arbitrary. In words, weak-learning measures whether the family of estimators $\{\tilde{f}_d\}$ has a non-negligible edge over simply estimating with the identically zero functions $\tilde{f}_d \equiv 0$. We study weak-learnability by GD-trained FC networks.
\begin{definition}\label{def:fully-conn-weak-learn}
We say that $\{f_d,\mu_d\}_{d \in \NN}$ is efficiently weak-learnable by GD-trained FC networks if there are FC networks and initializations $\{\fNNd,\mu_{\btheta,d}\}$, and hyperparameters $\{\eta_d,k_d,R_d,\tau_d\}$ such that for some constant $c > 0$,
\begin{itemize}
    \item Hyperparameters are polynomial size: $0 \leq \eta_d, k_d, R_d, 1/\tau_d \leq O(d^c)$; 
    \item  $\{\tilde{f}_d\}$ weak-learns $\{f_d,\mu_d\}$ in the sense of Definition~\ref{def:weak-learn}, where $\tilde{f}_d = \fNN(\cdot; \btheta_d)$ for weights $\btheta_d$ that are GD-trained on $\cD(f_d,\mu_d)$ for $k_d$ steps with step size $\eta_d$, clipping radius $R_d$, and noise $\tau_d$, starting from initialization  $\mu_{\btheta,d}$.
\end{itemize} 
If $\mu_{\btheta,d}$ is i.i.d copies of a symmetric distribution, we say that the FC networks are symmetrically-initialized, and Gaussian-initialized if $\mu_{\btheta,d}$ is i.i.d. copies of a Gaussian distribution.
\end{definition}

\subsubsection{Functions on hypercube, FC networks with i.i.d. symmetric initialization}

Let us first consider functions on the Boolean hypercube $f : \cH_d \to \R$. These can be uniquely written as a multilinear polynomial $$f(\bx) = \sum_{S \subseteq [d]} \hat{f}(S) \prod_{i \in S} x_i,$$ where $\hat{f}(S)$ are the Fourier coefficients of $f$ \cite{o2014analysis}. We characterize weak learnability of functions on the hypercube in terms of their Fourier coefficients. The full proof is deferred to Appendix~\ref{app:bool-char}.

\begin{theorem}\label{thm:bool-weak-char}
Let $\{f_d\}_{d \in \NN}$ be a family of functions $f_d : \cH_d \to \R$ with $\|f_d\|_{\Capitalltwo(\cH_d)} \leq 1$. Then $\{f_d,\cH_d\}$ is efficiently weak-learnable by GD-trained symmetrically-initialized FC networks if and only if there is a constant $C > 0$ such that for each $d \in \NN$ there is $S_d \subseteq [d]$ with $|S_d| \leq C$ or $|S_d| \geq d - C$, and $|\hat{f}_d(S_d)| \geq \Omega(d^{-C})$.
\end{theorem}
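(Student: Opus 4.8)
The plan is to prove the two directions separately. For the ``if'' direction (sufficiency of a large low-order or high-order Fourier coefficient), I would exhibit an explicit efficient GD-trained FC construction. If $f_d$ has $|\hat f_d(S_d)| \ge \Omega(d^{-C})$ with $|S_d| \le C$, then the monomial $\prod_{i \in S_d} x_i$ is a degree-$O(1)$ polynomial, and a 2-layer FC net with polynomial activation (or ReLU with enough width, via a standard approximation of low-degree polynomials) can represent all degree-$\le C$ monomials after one gradient step on the first layer; the correlation with $f_d$ is at least $\hat f_d(S_d)^2 \ge \Omega(d^{-2C})$, which beats the trivial estimator by a $d^{-2C}$ margin as required by \eqref{eq:wl-condition}. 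The high-order case $|S_d| \ge d - C$ reduces to the low-order case by the parity-conjugation trick: $\prod_{i \in S_d} x_i = (\prod_{i=1}^d x_i)\cdot(\prod_{i \notin S_d} x_i)$, and one can first learn the full parity $\prod_{i=1}^d x_i$ --- wait, that is itself high-order, so instead I would have the network directly fit the low-degree monomial $\prod_{i \notin S_d} x_i$ on the \emph{reflected} labels, i.e.\ observe that a symmetric FC net is $\Gsignperm$-equivariant and exploit that $f_d(\bx)$ correlates with $\prod_{i \in S_d}x_i$ iff $f_d(\bx)\prod_{j=1}^d x_j$ correlates with $\prod_{i \notin S_d}x_i$; since the network is being asked to fit $f_d$ and not its reflection, the cleanest route is simply to note that a width-$\poly(d)$ 2-layer net with a bias term and appropriate activation can also represent the single high-degree monomial $\prod_{i=1}^d x_i$ directly (e.g.\ $\prod_i x_i = \sgn$-type construction using one neuron $\mathbf{1}\{\sum_i x_i \equiv d \bmod 4\}$ is not low-complexity, so instead use: after a random first layer, the feature $\prod_i \ReLU(w_i^\top \bx)$ is not available either). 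The honest statement is that the ``if'' direction for the high-order case should be handled by taking the input transformation $\bx \mapsto \bx$ composed with negation on a fixed coordinate only where needed; I will instead simply invoke that $\{+1,-1\}^d$ is finite and that for $|S_d| \ge d-C$ the function $\bx \mapsto \prod_{i\in S_d} x_i$ equals $x_{j_1}\cdots x_{j_{d}}$ up to flipping $\le C$ signs, and a 2-layer net \emph{can} compute the full parity $\bx \mapsto \prod_{i=1}^d x_i$ using $\Theta(d)$ neurons with threshold-type activation (a classical fact), all within polynomial hyperparameters; GD started from a symmetric initialization reaches such weights after one step because the relevant gradient coordinate is $\hat f_d(S_d)$ up to the sign-symmetry averaging, which is nonnegligible.

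For the ``only if'' direction I would apply Theorem~\ref{thm:gd-lower} with $G = \Gsignperm$ (available since the initialization is i.i.d.\ symmetric, by Proposition~\ref{prop:equi}), $\mu_{\cX} = \cH_d$ (which is $\Gsignperm$-invariant), baseline $\alpha \equiv 0$ (which is $\Gsignperm$-invariant), and $f_* = f_d$. The key computation is the $\Gsignperm$-alignment $\mathcal{C}((f_d,\cH_d);\Gsignperm) = \sup_{\|h\|=1} \E_{g\sim \mu_G}[\E_{\bx}[f_d(g(\bx))h(\bx)]^2]$. Using the Fourier expansion, for $g=(\bs,\sigma)$ one has $f_d(g(\bx)) = \sum_S \hat f_d(S)\,\bigl(\prod_{i\in S}s_i\bigr)\prod_{i\in S}x_{\sigma(i)}$, so $\E_\bx[f_d(g(\bx))h(\bx)] = \sum_S \hat f_d(S)(\prod_{i\in S}s_i)\,\hat h(\sigma^{-1}(S))$. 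Averaging the square over uniform $\bs$ kills all cross terms in $S$ (the sign characters are orthogonal), giving $\E_g[\cdots] = \sum_S \hat f_d(S)^2\, \E_\sigma[\hat h(\sigma^{-1}(S))^2]$. Now $\E_\sigma[\hat h(\sigma^{-1}(S))^2] = \frac{1}{\binom{d}{|S|}}\sum_{|T|=|S|}\hat h(T)^2 \le \frac{1}{\binom{d}{|S|}}$ since $\sum_T \hat h(T)^2 = \|h\|^2 = 1$. Hence $\mathcal{C} \le \sum_S \hat f_d(S)^2 / \binom{d}{|S|} \le \max_{0\le k \le d} 1/\binom{d}{k}$ over the degrees $k$ that carry mass --- more precisely $\mathcal{C} \le \sum_{k=0}^d \bigl(\sum_{|S|=k}\hat f_d(S)^2\bigr)/\binom{d}{k}$. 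If $f_d$ has \emph{no} Fourier coefficient of order $\le C$ or $\ge d-C$ with magnitude $\ge d^{-C}$ for every constant $C$, then I need to show this forces $\mathcal{C}$ to be superpolynomially small; the point is that mass at degree $k$ with $C < k < d-C$ is divided by $\binom{d}{k} \ge \binom{d}{C+1}$, which grows like $d^{C+1}$, i.e.\ faster than any polynomial once $C$ is taken large, while mass at degree $\le C$ or $\ge d-C$ contributes at most $\binom{d}{k}^{-1} \cdot d^{-2C'}$ for the relevant weight bound --- one has to run this as a ``for every $c$ there is a $C$'' argument matching the $\poly(d^c)$ hyperparameter budget. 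Plugging $\mathcal{C} = d^{-\omega(1)}$ into Theorem~\ref{thm:gd-lower} with $\eta R/\tau, k \le \poly(d)$ and $\eps = d^{-c}$ makes the right-hand side $o(1)$, contradicting the weak-learning success probability $9/10$ in Definition~\ref{def:weak-learn}.

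I expect the main obstacle to be the ``only if'' direction's quantifier bookkeeping: weak-learnability allows the hyperparameters and the margin exponent to be \emph{any} polynomial in $d$ (with the constant $C$ in Definition~\ref{def:weak-learn} fixed per family but arbitrary), whereas the Fourier condition in the theorem is a single universally-quantified statement; I need to show that if the Fourier condition fails then for \emph{every} polynomial budget the alignment is eventually smaller than what Theorem~\ref{thm:gd-lower} needs. Concretely: failure of the condition means that for the particular constant $C$ appearing in the hypothesized weak-learning algorithm, all coefficients of order $\le 3C$ and $\ge d-3C$ (say) have magnitude $o(d^{-3C})$ --- this is where I must be careful, because the negation of ``$\exists C$ good for all $d$'' is subtle along a subsequence of $d$; the clean fix is to prove the contrapositive per-dimension or to first reduce the theorem statement to: weak-learnable $\Rightarrow$ $\mathcal{C} \ge d^{-O(1)}$, then separately show $\mathcal{C} \ge d^{-O(1)} \iff$ the Fourier condition, the latter being a direct consequence of the bound $\binom{d}{C}^{-1}\|f_d^{\le C} + f_d^{\ge d-C}\|^2 \le \mathcal{C} \le \binom{d}{C}^{-1} + \binom{d}{\lceil d/2\rceil}^{-1}\cdot(\text{middle mass})$. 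A secondary obstacle is making the ``if'' direction's high-order case fully rigorous with ReLU activations and one GD step; this is routine but tedious, and I would follow the construction technique used elsewhere in the paper (Section~\ref{sec:characterize}) for representing low-degree monomials and the full parity by a single GD step from symmetric initialization. The low-order case is straightforward. A third minor point: I should check that clipping at radius $R$ does not interfere with the one-step construction, which holds since for bounded-width polynomial-activation nets the gradients are $\poly(d)$-bounded so $R = \poly(d)$ suffices.
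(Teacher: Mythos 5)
Your ``only if'' direction is essentially the paper's own argument: $\Gsignperm$-equivariance from Proposition~\ref{prop:equi}, the Fourier/sign-character computation of the $\Gsignperm$-alignment (the paper's Lemma~\ref{lem:technical-symm-bool} gives the exact value $\max_k \binom{d}{k}^{-1}\sum_{|S|=k}\hat f(S)^2$; your summed upper bound is weaker by at most a factor $d+1$, which is harmless), and then Theorem~\ref{thm:gd-lower} with $\alpha\equiv 0$. The quantifier bookkeeping you flag is exactly what the paper does in Lemma~\ref{lem:bool-weak-char-connection} (split the alignment into the contribution of sets with $\min(|S|,d-|S|)\le C$, bounded by $d^C\max\hat f(S)^2$, plus a remainder $O(d^{-C-1})$), so that part of your plan goes through.

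The genuine gap is in the ``if'' direction, and specifically in the high-order case, which you yourself leave unresolved. The issue is not representability: a threshold network of width $\Theta(d)$ computing the full parity exists, but that says nothing about whether noisy GD from an i.i.d.\ \emph{symmetric} initialization with $\poly(d)$ hyperparameters finds nonnegligible correlation in polynomially many steps. Your claim that ``the relevant gradient coordinate is $\hat f_d(S_d)$ up to the sign-symmetry averaging'' is false for generic initializations: the one-step gain is governed by correlations of the form $\E_{\bx}[f_d(\bx)\,\partial_{\theta_j}\fNN(\bx;\btheta^0)]$, i.e.\ by the degree-$|S_d|$ Fourier mass of the random features at initialization, and for smooth activations with dense (e.g.\ Gaussian or Rademacher) first-layer weights this mass is exponentially small when $|S_d| = d-O(1)$. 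The paper's achievability proof hinges on a specific construction that you would need to supply: first-layer weights drawn i.i.d.\ from $(1-p)\delta_0+\tfrac p2\delta_{1}+\tfrac p2\delta_{-1}$ with $p=s/d$ (so each neuron's weight vector is supported on a random set of expected size $s=|S_d|$, with random signs), a bump activation $\sigma=\mathbf{1}_{[-0.5,1.5]}$ so that $\sigma(\<\bw,\bx\>)$ has a nonnegligible Fourier coefficient exactly on the support of $\bw$, second layer initialized at zero, and one GD step training effectively only the second layer (the bump's derivative vanishes at the integer values $\<\bw,\bx\>$ takes, so the first-layer gradient is zero and the loss is smooth near initialization, which is also how the injected noise $\bxi^0$ is controlled --- another point your sketch omits). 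This yields expected squared correlation per neuron at least $r_s^2$ with $r_s=\max_{|S|=s}|\hat f(S)|/(d^2\binom ds)$, which is $1/\poly(d)$ precisely when $s\le C$ or $s\ge d-C$, covering the high-order case (and even the full parity) uniformly with the low-order case. Your parity-conjugation and ``fit the reflected labels'' detours do not produce a valid algorithm, since the learner is given labels $f_d(\bx)$, not $f_d(\bx)\prod_j x_j$, and nothing in the GD model lets you re-label the data. Without an argument of this kind the sufficiency direction is not established; the low-order half of your sketch is plausible but is likewise asserted rather than proved (you would still need a concrete symmetric i.i.d.\ initialization, a one-step gradient lower bound with high probability, and robustness to the noise $\tau$ and clipping $R$).
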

The algorithmic result can be achieved by two-layer FC networks, and relies on random features analysis where each network weight is initialized to $0$ with probability $1-p$, and $+1$ or $-1$ with equal probability $p/2$.\footnote{Surprisingly, this means that the full parity function $f_*(\bx) = \prod_{i=1}^d x_i$ can be efficiently learned with such initializations. See Appendix~\ref{app:weak-learnability}.} Therefore, for weak learning on the hypercube, two-layer networks are as good as networks of any depth.
For the converse impossibility result, we apply Theorem~\ref{thm:gd-lower}, recalling that GD is $\Gsignperm$-equivariant by Proposition~\ref{prop:equi}, and noting that $\Gsignperm$-alignment is:
\begin{lemma}\label{lem:technical-symm-bool}
Let $f : \cH_d \to \R$. Then
$\mathcal{C}((f, \cH_d); \Gsignperm) = \max_{k \in [d]}  \binom{d}{k}^{-1} \sum_{\substack{S \subseteq [d] \\ |S| = k}} \hat{f}(S)^2.$
\end{lemma}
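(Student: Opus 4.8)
Write $\chi_S(\bx) = \prod_{i\in S}x_i$, so that $\{\chi_S\}_{S\subseteq[d]}$ is an orthonormal basis of $\Capitalltwo(\cH_d)$ and $f = \sum_{S}\hat f(S)\chi_S$. The goal is to compute $\mathcal{C} := \mathcal{C}((f,\cH_d);\Gsignperm) = \sup_{\|h\|=1}\E_{g\sim\mu_G}[\langle f\circ g, h\rangle^2]$, where $\langle\cdot,\cdot\rangle$ is the inner product of $\Capitalltwo(\cH_d)$ and $\mu_G$ is Haar measure on $\Gsignperm$, i.e.\ $g=(\bs,\sigma)$ with $\bs$ uniform on $\cH_d$ and $\sigma$ uniform on $S_d$, independently. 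The plan is to expand $h$ in the Fourier basis, use $\Gsignperm$-invariance of the uniform measure to rewrite $\langle f\circ g,h\rangle$ in terms of the Fourier coefficients of $f$ and $h$, and then average over $g$: the sign-flip part of $g$ will kill all off-diagonal terms in $\hat h$, and the permutation part will symmetrize within each Fourier level. (Conceptually this is the statement that the positive self-adjoint operator $h\mapsto \E_g[\langle f\circ g,h\rangle\,(f\circ g)]$ commutes with the $\Gsignperm$-action and hence is a scalar on each level $V_k = \mathrm{span}\{\chi_S : |S|=k\}$; one could invoke Schur's lemma, using that the $V_k$ are irreducible $\Gsignperm$-representations, in place of the explicit cancellation below.)

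The first step is to record the action on characters: for $g=(\bt,\tau)$, $\chi_S(g(\bx)) = \prod_{i\in S}t_i x_{\tau(i)} = \big(\prod_{i\in S}t_i\big)\chi_{\tau(S)}(\bx)$. Since the uniform measure on $\cH_d$ is $\Gsignperm$-invariant, substituting $\by = g(\bx)$ and writing $g^{-1} = (\bs'',\sigma^{-1})$ (where $\bs''$ is a permutation of the coordinates of $\bs$, hence still uniform on $\cH_d$ when $\sigma$ is fixed) gives, for every $T\subseteq[d]$,
\[ \langle f\circ g, \chi_T\rangle = \E_{\bx}[f(g(\bx))\chi_T(\bx)] = \E_{\by}[f(\by)\chi_T(g^{-1}(\by))] = \varepsilon_g(T)\,\hat f(\sigma^{-1}(T)), \qquad \varepsilon_g(T) := \textstyle\prod_{i\in T}s''_i \in\{\pm1\}. \]
Expanding $h=\sum_T\hat h(T)\chi_T$ with $\sum_T\hat h(T)^2 = \|h\|^2 = 1$, this yields $\langle f\circ g,h\rangle = \sum_T \hat h(T)\,\varepsilon_g(T)\,\hat f(\sigma^{-1}(T))$, hence
\[ \E_{g}[\langle f\circ g,h\rangle^2] = \sum_{T,T'}\hat h(T)\hat h(T')\;\E_{\sigma}\!\Big[\,\E_{\bs}[\varepsilon_g(T)\varepsilon_g(T')]\;\hat f(\sigma^{-1}(T))\hat f(\sigma^{-1}(T'))\Big]. \]
The crucial identity is $\E_{\bs}[\varepsilon_g(T)\varepsilon_g(T')] = \E_{\bs''}\big[\prod_{i\in T\triangle T'}s''_i\big] = \mathbf{1}[T=T']$, so every cross term vanishes and $\E_{g}[\langle f\circ g,h\rangle^2] = \sum_T \hat h(T)^2\,\E_{\sigma}[\hat f(\sigma^{-1}(T))^2]$.

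Finally, for uniform $\sigma\in S_d$ and $|T|=k$ the set $\sigma^{-1}(T)$ is uniform over all $k$-subsets of $[d]$, so $\E_{\sigma}[\hat f(\sigma^{-1}(T))^2] = \binom{d}{k}^{-1}\sum_{|S|=k}\hat f(S)^2 =: \beta_k$, giving $\E_{g}[\langle f\circ g,h\rangle^2] = \sum_{k}\beta_k\,p_k$ where $p_k := \sum_{|T|=k}\hat h(T)^2\ge 0$ and $\sum_k p_k = 1$. As $h$ ranges over unit vectors, $(p_k)_k$ ranges over the full probability simplex, so the supremum is $\max_k \beta_k = \max_k \binom dk^{-1}\sum_{|S|=k}\hat f(S)^2$, which is the claimed formula. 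The only mildly delicate point is the bookkeeping of the signed-permutation action on $\chi_S$ and verifying the sign-flips make the off-diagonal terms vanish; once that is in place, everything else is routine.
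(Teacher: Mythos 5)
Your proof is correct and follows essentially the same route as the paper's: expand in the Fourier basis, use sign-flip averaging to kill the cross terms, use permutation averaging to symmetrize within each degree level, and observe that the supremum over unit-norm $h$ reduces to a linear optimization over the weights $\bigl(\sum_{|T|=k}\hat h(T)^2\bigr)_k$, attained at a single character. The only cosmetic difference is that you expand $h$ and compute $\<f\circ g,\chi_T\>$ directly, whereas the paper expands $f$ and tensorizes over two independent copies of $\bx$; your parenthetical Schur's-lemma viewpoint also matches a remark the paper makes in its appendix.
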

\begin{proof} In the following, let $\bs \sim \cH_d$ and $\sigma \sim G_{perm}$, so that $g = (\bs,\sigma) \sim \Gsignperm$. Also let $\bx,\bx' \sim \cH_d$ be independent. For any $h : \cH_d \to \R$, by (a) tensorizing, (b) expanding $f$ in the Fourier basis, (c) the orthogonality relation $\E_{\bs}[\chi_S(\bs)\chi_{S'}(\bs)] = \delta_{S,S'}$, and (d) tensorizing,
\begin{align*}
\E_{g}[\E_{\bx}[f(g(\bx))h(\bx)]^2] &= 
\E_{\sigma,\bs}[\E_{\bx}[f(\bs \odot \sigma(\bx))h(\bx)]^2] \\
&\stackrel{(a)}{=} \E_{\sigma,\bs,\bx,\bx'}[f(\bs \odot \sigma(\bx))f(\bs \odot \sigma(\bx'))h(\bx)h(\bx')] \\
&\stackrel{(b)}{=} \E_{\bx,\bx',\sigma}[\sum_{S,S' \subseteq [d]} \hat{f}(S) \hat{f}(S') h(\bx)h(\bx') \chi_S(\sigma(\bx))\chi_{S'}(\sigma(\bx'))\E_{\bs} [\chi_{S}(\bs)\chi_{S'}(\bs)]] \\
&\stackrel{(c)}{=} \E_{\bx,\bx',\sigma}[\sum_{S \subseteq [d]} \hat{f}(S)^2 h(\bx)h(\bx') \chi_S(\sigma(\bx))\chi_S(\sigma(\bx'))] \\
&\stackrel{(d)}{=} \E_{\sigma}[\sum_{S \subseteq [d]} \hat{f}(S)^2 \E_{\bx}[h(\bx) \chi_S(\sigma(\bx))]^2] \\
&= \sum_{S \subseteq [d]} \hat{f}(S)^2 \E_{\sigma}[\hat{h}(\sigma^{-1}(S))^2] \\
&= \sum_{S \subseteq [d]} \hat{f}(S)^2 \binom{d}{|S|}^{-1} \sum_{S', |S'| = |S|} \hat{h}(S')^2.
\end{align*}
And since $\sum_{S', |S'| = |S|} \hat{h}(S')^2 \leq \|h\|^2_{\Capitalltwo(\cH_d)}$, the supremum over $h$ such that $\|h\|_{\Capitalltwo(\cH_d)} = 1$ is achieved by taking $h(\bx) = \chi_{S}(\bx)$ for some $S$.
\end{proof}
So if the Fourier coefficients of $f$ are negligible for all $S$ s.t.\ $\min(|S|,d-|S|) \leq O(1)$, then the $\Gsignperm$-alignment of $f$ is negligible. By Theorem~\ref{thm:gd-lower}, this means $f$ cannot be learned efficiently. In Appendix~\ref{app:paritymod4} we give a concrete example of a hard function, that was not previously known.

\subsubsection{Functions on sphere, FC networks with i.i.d. Gaussian initialization}

We now study learning a target function on the unit sphere, $f \in \Capitalltwo(\S^{d-1})$, where we take the standard Lebesgue measure on $\S^{d-1}$. A key fact in harmonic analysis is that $\Capitalltwo(\S^{d-1})$ can be written as the direct sum of subspaces spanned by spherical harmonics of each degree  (see, e.g., \cite{hochstadt2012functions}).
$$\Capitalltwo(\S^{d-1}) = \bigoplus_{l=0}^{\infty} \cV_{d,l},$$ 
where  $\cV_{d,l} \subseteq \Capitalltwo(\S^{d-1})$ is the space of degree-$l$ spherical harmonics, which is of dimension $$\dim(\cV_{d,l}) = \frac{2l + d - 2}{l} \binom{l + d - 3}{l-1}.$$
Let $\Pi_{\cV_{d,l}} : \Capitalltwo(\S^{d-1}) \to \cV_{d,l}$ be the projection operator to the space of degree-$l$ spherical harmonics.
In Appendix~\ref{app:sphere-char}, we prove this characterization of weak-learnability for functions on the sphere:
\begin{theorem}\label{thm:sphere-weak-char}
Let $\{f_d\}_{d \in \NN}$ be a family of functions $f_d : \S^{d-1} \to \R$ with $\|f_d\|_{\Capitalltwo(\S^{d-1})} \leq 1$. Then $\{f_d,\S^{d-1}\}$ is efficiently weak-learnable by GD-trained Gaussian-initialized FC networks if and only if there is a constant $C > 0$ such that 
$\sum_{l=0}^{C} \|\Pi_{\cV_{d,l}} f_d\|^2 \geq d^{-C}$.
\end{theorem}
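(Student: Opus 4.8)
The plan is to prove the two directions separately, and the crux of both is a representation-theoretic formula for the $\Grot$-alignment that plays the role Lemma~\ref{lem:technical-symm-bool} plays on the hypercube: I claim that for any $f \in \Capitalltwo(\S^{d-1})$,
\[
\mathcal{C}\big((f,\S^{d-1});\Grot\big) \;=\; \max_{l\ge 0}\; \frac{\|\Pi_{\cV_{d,l}} f\|^2_{\Capitalltwo(\S^{d-1})}}{\dim(\cV_{d,l})}.
\]
To establish this, decompose $f=\sum_l f_l$ and $h=\sum_l h_l$ along $\Capitalltwo(\S^{d-1})=\bigoplus_l \cV_{d,l}$, fix an orthonormal basis of each $\cV_{d,l}$, and let $M_l(g)$ be the orthogonal matrix implementing $g\in SO(d)$ on $\cV_{d,l}$ in that basis. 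Since $g$ preserves each $\cV_{d,l}$, tensorizing exactly as in the proof of Lemma~\ref{lem:technical-symm-bool} gives $\E_{\bx}[f(g(\bx))h(\bx)] = \sum_l \langle \hat f_l, M_l(g)\hat h_l\rangle$, where $\hat f_l,\hat h_l$ are the coordinate vectors of $f_l,h_l$. Squaring, taking $\E_{g\sim\mu_{\Grot}}$, and using that the $\cV_{d,l}$ are pairwise non-isomorphic irreducible $SO(d)$-representations, Schur orthogonality (equivalently, the addition theorem for spherical harmonics) kills all cross terms and evaluates $\E_g[\langle \hat f_l, M_l(g)\hat h_l\rangle\langle \hat f_{l'}, M_{l'}(g)\hat h_{l'}\rangle]=\delta_{ll'}\,\|f_l\|^2\|h_l\|^2/\dim(\cV_{d,l})$. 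Hence $\E_g[\E_\bx[f(g\bx)h(\bx)]^2]=\sum_l \|f_l\|^2\|h_l\|^2/\dim(\cV_{d,l})$, and the supremum over $h$ with $\sum_l\|h_l\|^2=1$ is attained by concentrating all mass on the degree $l$ maximizing $\|f_l\|^2/\dim(\cV_{d,l})$.

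For the hardness (``only if'') direction, assume no constant $C$ satisfies $\sum_{l\le C}\|\Pi_{\cV_{d,l}}f_d\|^2\ge d^{-C}$; that is, for every constant $C$ we have $\sum_{l\le C}\|\Pi_{\cV_{d,l}}f_d\|^2 < d^{-C}$ for all large $d$. By Proposition~\ref{prop:equi}, GD on Gaussian-initialized FC networks is $\Grot$-equivariant, and $\mathrm{Unif}(\S^{d-1})$ is $\Grot$-invariant, so Theorem~\ref{thm:gd-lower} applies with $G=\Grot$ and baseline $\alpha\equiv 0$ (which is $\Grot$-invariant and yields $\|f_d-\alpha\|^2=\|f_d\|^2$, matching the weak-learning baseline). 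Using $\dim(\cV_{d,l})=\Theta_l(d^l)$, which is strictly increasing in $l\ge 1$, split the maximum in the displayed formula at an integer $C''$: degrees $l\le C''$ contribute at most $\sum_{l\le C''}\|\Pi_{\cV_{d,l}}f_d\|^2 < d^{-C''}$ by hypothesis, and degrees $l>C''$ contribute at most $1/\dim(\cV_{d,C''+1})=O_{C''}(d^{-C''-1})$, so $\mathcal{C}:=\mathcal{C}((f_d,\S^{d-1});\Grot)=O(d^{-C''})$. Feeding this into Theorem~\ref{thm:gd-lower} with $\eps=d^{-C'}$, for any polynomially bounded hyperparameters $\eta,k,R,1/\tau=O(d^c)$,
\[
\PP_{\btheta^k}\big[\ell_{\cD}(\btheta^k)\le \|f_d\|^2 - d^{-C'}\big] \;\le\; \frac{\eta R\sqrt{k\mathcal{C}}}{2\tau} + \frac{\mathcal{C}}{d^{-C'}} \;=\; O\big(d^{3c-C''/2}\big)+O\big(d^{C'-C''}\big),
\]
which tends to $0$ once $C''>\max(6c,C')$. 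Since weak-learning (Definition~\ref{def:weak-learn}) requires this probability to be at least $9/10$, no efficient weak-learning by GD-trained Gaussian-initialized FC networks is possible.

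For the algorithmic (``if'') direction, suppose $\sum_{l\le C}\|\Pi_{\cV_{d,l}}f_d\|^2\ge d^{-C}$ for a constant $C$. Take a two-layer FC network $\fNN(\bx;\btheta)=\sum_{j=1}^m a_j\,\phi(\langle\bw_j,\bx\rangle)$ with $\bw_j\iid\cN(0,\bI_d)$, $a_j$ i.i.d.\ Gaussian, width $m=\poly(d)$, and a fixed activation $\phi$ whose relevant Hermite coefficients are nonzero through degree $C$ (e.g.\ a degree-$C$ polynomial with all coefficients nonzero), trained in the lazy regime (appropriate initialization scale) so that noisy GD \eqref{GD} with $\poly(d)$ hyperparameters tracks kernel regression with the network's (finite-width) NTK. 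On $\S^{d-1}$ the NTK is a dot-product kernel, hence its eigenspaces are the $\cV_{d,l}$, and for each fixed $l\le C$ the eigenvalue is $\lambda_{d,l}=\Theta_l(d^{-l})=\poly(1/d)>0$ (by the Gegenbauer expansion of $t\mapsto t^l$ together with $\hat\phi_l\ne 0$). Consequently, in $\poly(d)$ GD steps the trained predictor $\tilde f_d$ recovers a $\poly(1/d)$ fraction of the norm of the projection of $f_d$ onto $\bigoplus_{l\le C}\cV_{d,l}$, giving $\|f_d-\tilde f_d\|^2 \le \|f_d\|^2 - \poly(1/d)\cdot\sum_{l\le C}\|\Pi_{\cV_{d,l}}f_d\|^2 \le \|f_d\|^2 - d^{-C'''}$ for a constant $C'''$, with high probability over the initialization --- i.e.\ weak-learning in the sense of Definition~\ref{def:fully-conn-weak-learn}.

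The main obstacle is entirely on this algorithmic side: one must (i) prove the eigenvalue lower bounds $\lambda_{d,l}\ge\poly(1/d)$ uniformly over $l\le C$ on the sphere, and choose the activation so that none of them vanishes; (ii) control the finite-width deviation of the NTK from its infinite-width limit (choice of $m$); and (iii) verify that noisy, gradient-clipped GD with polynomially bounded $(\eta,k,R,\tau)$ actually converges to within the required accuracy of the kernel-regression solution --- in particular that the injected noise $\bxi^k$ of magnitude $\tau$ and the clipping $\Pi_{B(0,R)}$ do not destroy the $\poly(1/d)$-sized signal and that the weights remain in the lazy regime throughout training. The alignment computation and the hardness direction are, by contrast, routine once Theorem~\ref{thm:gd-lower} is in hand.
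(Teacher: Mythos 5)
Your alignment formula and your hardness (``only if'') direction essentially reproduce the paper's argument: the identity $\mathcal{C}((f,\S^{d-1});\Grot)=\max_l \|\Pi_{\cV_{d,l}}f\|^2/\dim(\cV_{d,l})$ is exactly Lemma~\ref{lem:rotation-sphere-alignment-old} (proved there the same way, via irreducibility/inequivalence of the $\cV_{d,l}$ and Schur orthogonality), and feeding it into Theorem~\ref{thm:gd-lower} with $\alpha\equiv 0$ and $\Grot$-equivariance from Proposition~\ref{prop:equi} is the paper's proof of the impossibility half (up to minor bookkeeping: the correct negation of the spectral condition only gives infinitely many bad $d$, which still contradicts Definition~\ref{def:weak-learn}, and the hyperparameter exponent is $3.5c$ rather than $3c$; both are cosmetic since $C''$ is free).

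The genuine gap is the achievability (``if'') direction, which you only outline and whose hard steps you explicitly leave open. Your plan routes through lazy training: choose an activation with nonvanishing coefficients through degree $C$, argue the finite-width NTK is a dot-product kernel with eigenvalues $\Theta_l(d^{-l})$ on each $\cV_{d,l}$, and argue that many steps of \emph{noisy, gradient-clipped} GD with polynomial $(\eta,k,R,\tau)$ track kernel regression closely enough to capture a $\poly(1/d)$ fraction of the low-degree energy. Items (i)--(iii) that you list are precisely the content of this direction, and (iii) in particular is delicate: the signal you need to preserve is only $\poly(1/d)$, so one must show the injected noise $\bxi^k$, the clipping $\Pi_{B(0,R)}$, and the deviation from the lazy/kernel regime do not wash it out over polynomially many steps --- none of which is argued. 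The paper avoids all of this with a much lighter construction (Lemma~\ref{lem:gd-sphere-achievability}): a two-layer network $\ba^\top\sigma(\bW\bx)$ with second layer initialized at zero, monomial activation $\sigma(t)=t^l$ for a single degree $l\le C$ carrying $\ge d^{-C}/(C+1)$ of the norm, and a \emph{single} GD step on $\ba$. The key computation is a random-features correlation bound, $\E_{\bw}\E_{\bx}[f(\bx)\sigma(\<\bw,\bx\>)]^2 \ge \Omega(d^{-2l}\|\Pi_{\cV_{d,l}}f\|^2)$, again via Schur orthogonality, followed by Hoeffding over the $m$ neurons, a smoothness bound near initialization, and a direct comparison of the one-step progress against the noise and clipping scales. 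If you want to complete your write-up along the paper's lines, replacing the multi-step NTK convergence argument by this one-step gradient lower bound is the missing ingredient; as written, the ``if'' direction is a proposal rather than a proof.
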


The algorithmic result can again be achieved by two-layer FC networks, and is a consequence of the analysis of the random feature kernel in \cite{ghorbani2021linearized}, which shows that the projection of $f_d$ onto the low-degree spherical harmonics can be efficiently learned. For the  impossibility result, we apply Theorem~\ref{thm:gd-lower}, noting that GD is $\Grot$-equivariant by Proposition~\ref{prop:equi}, and the $\Grot$-alignment is:
\begin{lemma}\label{lem:rotation-sphere-alignment-old}
Let $f \in \Capitalltwo(\S^{d-1})$. Then $\mathcal{C}((f, \S^{d-1}); \Grot) = \max_{l \in \ZZ_{\geq 0}} \|\Pi_{\cV_{d,l}} f\|^2 / \dim(\cV_{d,l})$.
\end{lemma}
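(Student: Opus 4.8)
The plan is to mirror the proof of Lemma~\ref{lem:technical-symm-bool}, with spherical harmonics playing the role of the Fourier characters and the Schur orthogonality relations for $\Grot = SO(d)$ playing the role of the identity $\E_{\bs}[\chi_S(\bs)\chi_{S'}(\bs)] = \delta_{S,S'}$. The only external input is the standard harmonic-analysis fact that each $\cV_{d,l}$ carries an orthogonal representation of $\Grot$ under the action $(L_g Y)(\bx) = Y(g(\bx))$, that these representations are pairwise inequivalent, and (for $d\ge 3$) absolutely irreducible; see, e.g., \cite{knapp1996lie,hochstadt2012functions}.

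First I would reduce the quantity to a kernel operator. Fix $h$ with $\|h\|^2 = 1$; tensorizing as in step (a) of Lemma~\ref{lem:technical-symm-bool}, and letting $\bx,\bx' \sim \S^{d-1}$ be independent and $g \sim \mu_{\Grot}$,
$$\E_g[\E_{\bx}[f(g(\bx))h(\bx)]^2] = \int_{\S^{d-1}}\int_{\S^{d-1}} K(\langle \bx,\bx'\rangle)\,h(\bx)h(\bx')\,d\bx\,d\bx' = \langle h, T h\rangle,$$
where $K(\langle\bx,\bx'\rangle) := \E_g[f(g(\bx))f(g(\bx'))]$ is rotation-invariant (hence a function of $\langle \bx,\bx'\rangle$ only) and $T$ is the associated integral operator on $\Capitalltwo(\S^{d-1})$. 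Since $K$ is rotation-invariant, $T$ commutes with the $\Grot$-action, so it preserves each $\cV_{d,l}$ and, by Schur's lemma, acts on it as a scalar $\lambda_l$. To identify $\lambda_l$ I take the trace: fixing an orthonormal basis $\{Y_{l,m}\}_m$ of $\cV_{d,l}$, a short calculation gives $\langle Y_{l,m}, TY_{l,m}\rangle = \E_g[\langle f, L_{g^{-1}}Y_{l,m}\rangle^2]$, and summing over $m$ using that $\{L_{g^{-1}}Y_{l,m}\}_m$ is again an orthonormal basis of $\cV_{d,l}$ yields $\lambda_l\dim(\cV_{d,l}) = \mathrm{tr}(T|_{\cV_{d,l}}) = \E_g[\|\Pi_{\cV_{d,l}}f\|^2] = \|\Pi_{\cV_{d,l}}f\|^2$. (Equivalently one can stay entirely parallel to Lemma~\ref{lem:technical-symm-bool}: expand $f$ and $h$ in the bases $\{Y_{l,m}\}$, write the $\Grot$-action as orthogonal matrices $R_l(g)$, square $\sum_l\sum_{m,m'}\langle f,Y_{l,m}\rangle [R_l(g)]_{m',m}\langle h,Y_{l,m'}\rangle$ and integrate over $g$; the cross terms between distinct $\cV_{d,l}$'s vanish and Schur orthogonality collapses the diagonal to the same value.) Hence $\langle h,Th\rangle = \sum_l \lambda_l\,\|\Pi_{\cV_{d,l}}h\|^2 = \sum_l \frac{\|\Pi_{\cV_{d,l}}f\|^2}{\dim(\cV_{d,l})}\,\|\Pi_{\cV_{d,l}}h\|^2$.

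Finally I optimize over $h$: since $\sum_l \|\Pi_{\cV_{d,l}}h\|^2 = \|h\|^2 = 1$, the last display is a convex combination of the numbers $\|\Pi_{\cV_{d,l}}f\|^2/\dim(\cV_{d,l})$, so the supremum over unit-norm $h$ equals $\max_{l\in\ZZ_{\ge 0}} \|\Pi_{\cV_{d,l}}f\|^2/\dim(\cV_{d,l})$, realized by any $h$ lying in a maximizing $\cV_{d,l}$; the maximum is attained because $\|\Pi_{\cV_{d,l}}f\|^2 \to 0$ as $l\to\infty$ while $\dim(\cV_{d,l})\ge 1$. This is the claimed identity.

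The main obstacle is the representation-theoretic bookkeeping in the diagonalization step: one must apply Schur's lemma / Schur orthogonality correctly, which for $d\ge 3$ is immediate from absolute irreducibility of the $\cV_{d,l}$, but needs a moment's care for $d=2$ (where $SO(2)$ is abelian and $\cV_{2,l}$ splits over $\CC$ for $l\ge 1$); there one verifies the formula directly from the Fourier expansion on the circle, and the would-be correction factor coming from the complex-type pieces exactly cancels against $\dim(\cV_{2,l})=2$, so the stated identity persists. One must also be mildly careful that $g\mapsto L_g$ is an anti-homomorphism and use $g\mapsto L_{g^{-1}}$ wherever a genuine homomorphism is needed, which is harmless since $\mu_{\Grot}$ is invariant under $g\mapsto g^{-1}$.
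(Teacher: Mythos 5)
Your proposal is correct, and it reaches the paper's central identity
$\E_{g}[\langle f\circ g,h\rangle^2]=\sum_{l}\|\Pi_{\cV_{d,l}}f\|^2\,\|\Pi_{\cV_{d,l}}h\|^2/\dim(\cV_{d,l})$
followed by the same convex-combination optimization over unit $h$. The difference is in how the $1/\dim(\cV_{d,l})$ factor is produced: the paper cites the Schur (Weyl) orthogonality relations for matrix coefficients of the inequivalent irreducible representations on the $\cV_{d,l}$ and applies them directly to the squared inner product, whereas you package the quantity as $\langle h, Th\rangle$ for the $\Grot$-invariant kernel $K(\bx,\bx')=\E_g[f(g(\bx))f(g(\bx'))]$, invoke Schur's lemma to see that $T$ is scalar on each $\cV_{d,l}$ (inequivalence killing cross-blocks), and recover the scalar by a trace computation, $\lambda_l\dim(\cV_{d,l})=\|\Pi_{\cV_{d,l}}f\|^2$. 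Both arguments rest on the same inputs (spherical-harmonic decomposition, irreducibility and pairwise inequivalence), but your trace route derives the normalization rather than quoting it, and it makes explicit two points the paper leaves implicit: the $d=2$ case, where $\cV_{2,l}$ is irreducible over $\R$ but not absolutely irreducible (your direct circle computation, or the observation that a self-adjoint intertwiner is still scalar, closes this), and the homomorphism-vs-antihomomorphism bookkeeping for the action $f\mapsto f\circ g$. Your attainment argument for the maximum ($\|\Pi_{\cV_{d,l}}f\|^2\to 0$, $\dim\geq 1$) is a harmless variant of the paper's ($\dim(\cV_{d,l})\to\infty$). No gaps.
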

\begin{proof}

The $\Grot$-alignment is computed using the representation theory of $\Grot$, specifically the Schur orthogonality theorem (see, e.g., \cite{serre1977linear,knapp1996lie}). For any $l$, the subspace $\cV_{d,l}$ is invariant to action by $\Grot$, meaning that we may define the representation $\Phi_l$ of $\Grot$, which for any $g \in \Grot, f \in \cV_{d,l}$ is given by $\Phi_l(g) : \cV_{d,l} \to \cV_{d,l}$ and $\Phi_l(g) f = f \circ g^{-1}$. Furthermore, $\Phi_l$ is a unitary, irreducible representation, and $\Phi_l$ is not equivalent to $\Phi_{l'}$, for any $l \neq l'$ (see e.g., \cite[Theorem 1]{stanton1990introduction}). Therefore, by the Schur orthogonality relations \cite[Corollary~4.10]{knapp1996lie}, for any $v_1, w_1 \in \cV_{d,l_1}$ and $v_2, w_2 \in \cV_{d,l_2}$, we have
\begin{align}\label{eq:schur-orthogonality-rotations}
\E_{g \sim \Grot}[\<\phi_{l_1}(g) v_1, w_1\>_{\Capitalltwo(\S^{d-1})}& \<\phi_{l_2}(g) v_2, w_2\>_{\Capitalltwo(\S^{d-1})}] \nonumber \\
&= \delta_{l_1 l_2} \<v_1, v_2\>_{\Capitalltwo(\S^{d-1})} \<w_1, w_2\>_{\Capitalltwo(\S^{d-1})} / \dim(\cV_{d,l_1}).
\end{align}

Let $g \sim \Grot$, drawn from the Haar probability measure. For any $h \in \Capitalltwo(\S^{d-1})$ such that $\|h\|^2_{\Capitalltwo(\S^{d-1})} = 1$, by (a) the decomposition of $\Capitalltwo(\S^{d-1})$ into subspaces of spherical harmonics, (b) the $\Grot$-invariance of each subspace $\cV_{d,l}$, and (c) the Schur orthogonality relations in \eqref{eq:schur-orthogonality-rotations}, \begin{align*}
\E_{g}[\<f \circ g, h\>^2_{\Capitalltwo(\S^{d-1})}] &\stackrel{(a)}{=} \sum_{l_1,l_2=0}^{\infty} \E_{g}[\<\Pi_{\cV_{d,l_1}} (f \circ g), \Pi_{\cV_{d,l_1}} h\>_{\Capitalltwo(\S^{d-1})} \<\Pi_{\cV_{d,l_2}} (f \circ g), \Pi_{\cV_{d,l_2}} h\>_{\Capitalltwo(\S^{d-1})}] \\
&\stackrel{(b)}{=} \sum_{l_1,l_2=0}^{\infty} \E_{g}[\<(\Pi_{\cV_{d,l_1}} f) \circ g, \Pi_{\cV_{d,l_1}} h\>_{\Capitalltwo(\S^{d-1})} \<(\Pi_{\cV_{d,l_2}} f) \circ g, \Pi_{\cV_{d,l_2}} h\>_{\Capitalltwo(\S^{d-1})}] \\
&\stackrel{(c)}{=} \sum_{l=0}^{\infty} \frac{1}{\dim(\cV_{d,l})} \|\Pi_{\cV_{d,l}} f\|^2_{\Capitalltwo(\S^{d-1})} \|\Pi_{\cV_{d,l}} h\|_{\Capitalltwo(\S^{d-1})}^2 \\
&\leq \left(\sum_{l=0}^{\infty} \|\Pi_{\cV_{d,l}} h\|_{\Capitalltwo(\S^{d-1})}^2\right) \max_{l \in \ZZ_{\geq 0}} \frac{1}{\dim(\cV_{d,l})} \|\Pi_{\cV_{d,l}} f\|^2_{\Capitalltwo(\S^{d-1})} \\
&= \max_{l \in \ZZ_{\geq 0}} \frac{1}{\dim(\cV_{d,l})} \|\Pi_{\cV_{d,l}} f\|^2_{\Capitalltwo(\S^{d-1})}.
\end{align*}
Let $l^*$ be the optimal value of $l$ in the last line, which is known to exist by the fact that $\|\Pi_{\cV_{d,l}} f\|^2 \leq \|f\|^2$ and $\dim(\cV_{d,l}) \to \infty$ as $l \to \infty$. The inequality is achieved by $h = \Pi_{\cV_{d,l^*}} f / \|\Pi_{\cV_{d,l^*}} f\|$.
\end{proof}

This implies that the $\Grot$-alignment of $f$ is negligible if and only if its projection to the low-order spherical harmonics is negligible. By Theorem~\ref{thm:gd-lower}, this implies the necessity result of Theorem~\ref{thm:sphere-weak-char}.

\subsection{Application: Extending the merged-staircase property necessity result}\label{sec:msp}

In our second application, we study the setting of learning a sparse function on the binary hypercube (a.k.a. a junta) that depends on only $P \leq d$ coordinates of the input $\bx$, i.e., 
    $$f_*(\bx) = h_*(x_1,\ldots,x_P),$$ 
where $h_* : \cH_P \to \R$. The regime of interest to us is when $h_*$ is fixed and $d \to \infty$, representing a hidden signal in a high-dimensional dataset. This setting was studied by \cite{abbe2022merged}, who identified the ``merged-staircase property'' (MSP) as an extension of \cite{abbe2021staircase}. We generalize the MSP below.
\begin{definition}[$l$-MSP] For $l \in \ZZ_{+}$ and $h_* : \cH_P \to \R$, we say that $h_*$ satisfies the merged staircase property with leap $l$ (i.e., $l$-MSP) if its set of nonzero Fourier coefficients $\cS = \{S : \hat{h}_*(S) \neq \emptyset\}$ can be ordered as $\cS = \{S_1,\ldots,S_m\}$ such that for all $i \in [m]$, $|S_i \sm \cup_{j < i} S_j| \leq l$. 
\end{definition}

For example, $h_*(\bx) = x_1 + x_1x_2 + x_1x_2x_3$ satisfies 1-MSP; $h_*(\bx) = x_1x_2 + x_1x_2x_3$ satisfies 2-MSP, but not 1-MSP because of the leap required to learn $x_1x_2$;  similarly $h_*(\bx) = x_1x_2x_3 + x_4$ satisfies 3-MSP but not 2-MSP. If $h_*$ satisfies $l$-MSP for some small $l$, then the function $f_*$ can be learned greedily in an efficient manner, by iteratively discovering the coordinates on which it depends. In \cite{abbe2022merged} it was proved that the 1-MSP property nearly characterized which sparse functions could be $\eps$-learned in $O_{\eps,h_*}(d)$ samples by one-pass SGD training in the mean-field regime.

We prove the MSP necessity result for GD training. On the one hand, our necessity result is for a different training algorithm, GD, which injects noise during training. On the other, our result is much more general since it applies whenever GD is permutation-equivariant, which includes training of FC networks and ResNets of any depth (whereas the necessity result of \cite{abbe2022merged} applies only to two-layer architectures in the mean-field regime). We also generalize the result to any leap $l$.

\begin{theorem}[$l$-MSP necessity]\label{thm:msp-bool}
Let $\fNN(\cdot; \btheta) : \cH_d \to \R$ be an architecture and $\mu_{\btheta} \in \cP(\R^p)$ be an initialization such that GD is $G_{perm}$-equivariant. Let $\btheta^k$ be the random weights after $k$ steps of GD training with noise parameter $\tau > 0$, step size $\eta$, and clipping radius $R$ on the distribution $\cD = \cD(f_*, \cH_d)$. Suppose that $f_*(\bx) = h_*(\bz)$ where $h_* : \cH_P \to \R$ does not satisfy $l$-MSP for some $l \in \ZZ_+$. Then there are constants $C, \eps_0 > 0$ depending on $h_*$ such that
$$\PP_{\btheta^k}[\ell_{\cD}(\btheta^k) \leq \eps_0] \leq \frac{C \eta R}{2\tau} \sqrt{\frac{k}{d^{l+1}}} + \frac{C}{d^{l+1}}.$$
\end{theorem}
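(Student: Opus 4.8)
The plan is to deduce Theorem~\ref{thm:msp-bool} from the master theorem (Theorem~\ref{thm:gd-lower}) by choosing $G = G_{perm}$, a suitable $G$-invariant baseline $\alpha$, and then bounding the $G_{perm}$-alignment of the centered target $f_* - \alpha$ by $O(d^{-(l+1)})$. Concretely, I would take $\alpha$ to be the best $G_{perm}$-invariant approximation to $f_*$; since $f_*(\bx) = h_*(x_1,\ldots,x_P)$, a symmetric function that agrees with $f_*$ on the relevant coordinates is easy to describe (e.g.\ average $f_*$ over the symmetric group, which keeps only the Fourier mass on symmetric polynomials — and in fact one can just take $\alpha \equiv \hat f_*(\emptyset)$ after checking the empty set is the only symmetric coefficient surviving, or more carefully keep whatever symmetric part of $h_*$ exists). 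Then $g := f_* - \alpha$ has $\hat g(S) = \hat f_*(S)$ for all $S \neq$ the symmetrized sets, and in particular $\|g\|^2 = \|f_* - \alpha\|^2 =: \eps_0' > 0$ is a positive constant depending only on $h_*$ (it is nonzero precisely because $h_*$ is not constant; and since $h_*$ violates $l$-MSP it is certainly non-constant). Applying Theorem~\ref{thm:gd-lower} with this $\alpha$ gives exactly the stated inequality with $\mathcal{C} = \mathcal{C}((g,\cH_d); G_{perm})$ in place of the $C/d^{l+1}$ and $C\eta R\sqrt{k/d^{l+1}}/(2\tau)$ terms, provided we show $\mathcal{C} \leq C' d^{-(l+1)}$ and set $\eps_0 = \|f_*-\alpha\|^2 - \delta$ for a tiny constant $\delta$, or equivalently absorb constants.

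The heart of the argument is therefore the alignment bound. By the same Fourier computation as in Lemma~\ref{lem:technical-symm-bool} but without the sign-flip average (only the $\sigma \sim G_{perm}$ average), one gets for any $h$ with $\|h\|=1$:
\begin{align*}
\E_{\sigma}[\E_{\bx}[g(\sigma(\bx))h(\bx)]^2] = \sum_{S,S' \subseteq [d]} \hat g(S)\hat g(S') \,\E_{\sigma}[\hat h(\sigma^{-1}(S))\hat h(\sigma^{-1}(S'))] \mathbb{1}[|S| = |S'|],
\end{align*}
wait — more carefully, expanding $g(\sigma(\bx)) = \sum_S \hat g(S)\chi_{\sigma(S)}(\bx)$ and using orthonormality of characters in $\bx$, one obtains $\sum_{S} \hat g(S) \E_\sigma[\hat h(\sigma(S)) \cdot (\text{cross terms})]$; the cleanest route is to note the quantity equals $\E_\sigma[\sum_S \hat g(S) \hat h(\sigma(S))]^2$... actually the correct identity (matching Lemma~\ref{lem:technical-symm-bool}'s derivation through step (d)) is
\begin{align*}
\E_{\sigma}[\E_{\bx}[g(\sigma(\bx))h(\bx)]^2] = \sum_{S} \hat g(S)^2 \,\E_{\sigma}[\hat h(\sigma(S))^2] + (\text{off-diagonal terms with }|S|=|S'|).
\end{align*}
To avoid getting bogged down: the key structural input is that all of $g$'s Fourier mass sits on subsets of $\{1,\ldots,P\}$, so $P = O(1)$, and the support $\cS$ of $\hat g$ violates $l$-MSP. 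The combinatorial lemma I need is: if $\cS \subseteq 2^{[P]}$ cannot be ordered with all leaps $\le l$, then for a uniformly random $S \in \cS$ and a uniformly random $h$ supported on translates of $\cS$, the overlap under a random permutation $\sigma$ forces at least $l+1$ ``new'' coordinates to be matched, contributing a factor $d^{-(l+1)}$ from the probability that a random injection hits a prescribed set of $l+1$ coordinates. This is exactly where the MSP violation is used and is the main obstacle — I expect to formalize it as: $\mathcal{C}((g,\cH_d);G_{perm}) \le \sup_{h} \sum_{k} (\text{mass of }\hat g \text{ at level }k) \cdot \max_{|S|=k} \binom{d}{k}^{-1}\sum_{|S'|=k}\hat h(S')^2$ combined with a "minimum leap $\ge l+1$" argument à la \cite{abbe2022merged}, which converts the $l$-MSP failure into the statement that no single symmetric "test function" $h$ can correlate with more than a $d^{-(l+1)}$ fraction of the permuted copies of $g$.

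The main obstacle, then, is the clean combinatorial/representation-theoretic proof that $l$-MSP failure implies $\mathcal{C} \le C d^{-(l+1)}$. Once that is in hand, everything else is bookkeeping: choose $\alpha$ as the symmetric part, verify $\|f_*-\alpha\|^2$ is a positive constant $\eps_0$ depending only on $h_*$ (using that $h_*$ non-constant), invoke Theorem~\ref{thm:gd-lower} with $\eps = \eps_0$ — wait, we need $\eps > 0$ strictly and the bound scales as $\mathcal{C}/\eps$, so we take $\eps = \eps_0/2$ say, set the theorem's "$\eps_0$" in the statement to be $\|f_*-\alpha\|^2 - \eps_0/2$, and fold the resulting constants $1/\eps$ and $\sqrt{1/\eps}$ into $C$. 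I would also remark that the case $l \ge P$ is vacuous (every $h_*$ on $\cH_P$ trivially satisfies $P$-MSP), so we may assume $l < P = O(1)$, keeping all hidden constants genuinely $h_*$-dependent.
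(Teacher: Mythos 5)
There is a genuine gap, and it sits exactly at the point you flag as ``the main obstacle'': your plan uses the full group $G = \Gperm$ together with a baseline $\alpha$ that is $\Gperm$-invariant (the symmetric part of $f_*$, essentially a constant), and then claims $\mathcal{C}((f_*-\alpha,\cH_d);\Gperm) \leq C d^{-(l+1)}$ from the failure of $l$-MSP. That claim is false. Take $h_*(\bx) = x_1 + x_2x_3x_4x_5$, which is $4$-MSP but not $3$-MSP, so the theorem must be proved with $l=3$, i.e.\ with a $d^{-4}$ alignment bound. The symmetric part of $f_*$ is $0$, so $g = f_*$ still contains the degree-$1$ monomial $x_1$; testing against $h = \chi_{\{1\}}$ gives $\E_{\sigma\sim\Gperm}[\langle f_*\circ\sigma, h\rangle^2] \geq \hat f_*(\{1\})^2\,\PP[\sigma(1)=1] = \Omega(1/d)$, so the $\Gperm$-alignment is $\Theta(1/d)$, not $O(d^{-4})$. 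The problem is structural: violating $l$-MSP does not prevent individual Fourier sets of $f_*$ from being small, and any small ``easily reachable'' set keeps the full-group alignment large. No choice of fully symmetric $\alpha$ can remove the $x_1$ term, so the route through $G=\Gperm$ cannot give the claimed $d^{-(l+1)}$ rate.

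The paper's proof fixes exactly this. It first isolates the easily-reachable coordinates: since $h_*$ is not $l$-MSP, there is a set $T\subseteq[P]$ (the closure of the greedy process with leaps $\le l$) such that the collection $\cS^{\not\subseteq T}$ of nonzero Fourier sets not contained in $T$ is nonempty and every $S\in\cS^{\not\subseteq T}$ has $|S\sm T|\ge l+1$. It then takes $\alpha(\bx) = \sum_{S\subseteq T}\hat f_*(S)\chi_S(\bx)$ — which is \emph{not} symmetric under all of $\Gperm$ — and applies Theorem~\ref{thm:gd-lower} with the \emph{subgroup} $G'\subseteq\Gperm$ of permutations fixing $T$ pointwise (GD is $G'$-equivariant since it is $\Gperm$-equivariant, and $\alpha$ is $G'$-invariant). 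For this pair, $\|f_*-\alpha\|^2 \geq c_{h_*}>0$ because $\cS^{\not\subseteq T}\neq\emptyset$ (not merely because $h_*$ is non-constant), and the alignment bound follows easily: by Cauchy--Schwarz over the at most $2^P$ monomials and, for each $S\in\cS^{\not\subseteq T}$, the orbit of $S$ under $G'$ has size $\ge d^{l+1}/2^P$, so Parseval gives $\E_{\sigma\sim G'}[\langle\chi_S\circ\sigma,\beta\rangle^2]\le 2^P\|\beta\|^2/d^{l+1}$. This orbit-counting argument also sidesteps the cross-term difficulties you ran into in the Fourier computation (which indeed do not diagonalize under permutations alone, unlike in Lemma~\ref{lem:technical-symm-bool} where the sign-flip average kills them). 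So your high-level plan (master theorem plus an alignment bound) is the right one, but the missing idea — choosing $T$, subtracting the $T$-supported part as the baseline, and passing to the subgroup of permutations fixing $T$ — is precisely what makes the $d^{-(l+1)}$ bound true, and without it the argument does not go through.
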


The interpretation is that if $h_*$ does not satisfy $l$-MSP, then to learn $f_*$ to better than $\eps_0$ error with constant probability, we need at least $\Omega_{h_*,\eps}(d^{l+1})$ steps of \eqref{GD} on a network with step size $\eta = O_{h_*,\eps}(1)$, clipping radius $R = O_{h_*,\eps}(1)$, and noise level $\tau = \Omega_{h_*,\eps}(1)$. The proof is deferred to Appendix~\ref{app:msp}. It proceeds by first isolating the ``easily-reachable'' coordinates $T \subseteq [P]$, and subtracting their contribution from $f_*$. We then bound $G$-alignment of the resulting function, where $G$ is the permutation group on $[d] \sm T$.

\section{Hardness for learning with SGD}\label{sec:results-sgd}

In this section, for $\gamma > 0$, we let $\cD(f,\mu_{\cX},\gamma) \in \cP(\cX \times \R)$ denote the distribution of $(\bx, f(\bx) + \xi)$ where $\bx \sim \mu_{\cX}$ and $\xi \sim \cN(0,\gamma^2)$ is independent noise.

We show that the equivariance of SGD on certain architectures implies that the function $\fmodeight : \cH_d \to \{0,\ldots,7\}$ given by
\begin{align}\fmodeight(\bx) \equiv \sum_i x_i \pmod{8} \label{eq:mod8}
\end{align}is hard for SGD-trained, i.i.d. symmetrically-initialized FC networks. Our hardness result relies on a cryptographic assumption to prove superpolynomial lower bounds for SGD learning. 
For any $S \subseteq [d]$, let $\chi_S : \cH_d \to \{+1,-1\}$ be the parity function $\chi_S(\bx) = \prod_{i \in S} x_i$.

\begin{definition}\label{def:lpgn}
The learning parities with Gaussian noise, $(d,n,\gamma)$-LPGN, problem is parametrized by $d,n \in \ZZ_{> 0}$ and $\gamma \in \R_{> 0}$. An instance $(S, \bq, (\bx_i,y_i)_{i \in [n]})$ consists of
(i) an unknown subset $S \subseteq [d]$ of size $|S| = \floor{d/2}$, and (ii) a known query vector $\bq \sim \cH_d$, and i.i.d. samples $(\bx_i,y_i)_{i \in [n]} \sim \cD(\chi_S, \cH_d, \gamma)$. The task is to return $\chi_S(\bq) \in \{+1,-1\}$.\footnote{More formally, one would express this as a probabilistic promise problem \cite{alekhnovich2003more}.}
\end{definition}
Our cryptographic assumption is that $\poly(d)$-size circuits cannot succeed on LPGN.
\begin{definition}
Let $\gamma > 0$. We say $\gamma$-LPGN is $\poly(d)$-time solvable if there is a sequence of sample sizes $\{n_d\}_{d \in \NN}$ and circuits $\{\mathcal{A}_d\}_{d \in \NN}$ such that $n_d, \mathrm{size}(\mathcal{A}_d) \leq \poly(d)$, and $\cA_d$ solves
$(d,n_d,\gamma)$-LPGN with success probability at least $9/10$, when inputs are rounded to $\poly(d)$ bits.

\end{definition}
\begin{assumption}\label{ass:lpgn-hardness}Fix $\gamma$. The $\gamma$-LPGN-hardness assumption is: $\gamma$-LPGN is not $\poly(d)$-time solvable.
\end{assumption}

The LPGN problem is the simply standard Learning Parities with Noise problem (LPN) \cite{blum2003noise}, except with Gaussian noise instead of binary classification noise, and we are also promised that $|S| = \floor{d/2}$. In Appendix~\ref{app:lpgn}, we  derive Assumption~\ref{ass:lpgn-hardness} from the standard hardness of LPN. We now state our SGD hardness result.
\begin{theorem}\label{thm:sgd-sum-mod-8-hard}
Let $\{\fNNd, \mu_{\btheta,d}\}_{d \in \N}$ be a family of networks and initializations satisfying Assumption~\ref{ass:noskip} (fully-connected) with i.i.d. symmetric initialization. Let $\gamma > 0$, and let $\{n_d\}$ be sample sizes such that $(\fNNd,\mu_{\btheta,d})$-SGD training on $n_d$ samples from $\cD(\fmodeight,\cH_d,\gamma)$ rounded to $\poly(d)$ bits yields parameters $\btheta_d$ with 
$$\E_{\btheta_{d}}[\|\fmodeight - \fNN(\cdot;\btheta_d)\|^2] \leq 0.0001.$$
Then, under $(\gamma/2)$-LPGN hardness, $(\fNNd,\mu_{\btheta,d})$-SGD on $n_d$ samples cannot run in $\poly(d)$ time.
\end{theorem}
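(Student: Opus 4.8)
The plan is to reduce $(d,n_d,\gamma/2)$-LPGN to learning $\fmodeight$ by SGD, exploiting the $\Gsignperm$-equivariance of SGD on i.i.d.\ symmetrically-initialized FC networks (Proposition~\ref{prop:equi}). The crucial structural observation is the identity
\begin{align*}
\fmodeight(\bx \odot \bs) = \sum_i s_i x_i \pmod 8,
\end{align*}
so that the sign-flipped target $\fmodeight \circ g$ for $g = (\bs, \mathrm{id})$ encodes, in a single arithmetic expression mod $8$, the parities $\chi_S(\bx)$ for $S = \{i : s_i = -1\}$ together with lower-order information. More precisely, I would relate the value $\sum_i s_i x_i \bmod 8$ to $\chi_S(\bx)$ by a bijective/invertible decoding: writing $k = |S|$, the quantity $\sum_i s_i x_i = (\text{sum of } x_i) - 2\sum_{i \in S} x_i$, and $\sum_{i \in S} x_i \bmod 4$ determines $(-1)^{|\{i \in S : x_i = -1\}|} = \chi_S(\bx)$ up to a global sign that depends only on $k = |S|$ and on $\sum_i x_i \bmod 4$ (which is observable from $\bx$). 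Hence from a good-enough approximation to $\fmodeight \circ g$ on a fresh point $\bq$, one recovers $\chi_S(\bq)$.

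The reduction then runs as follows. Given an LPGN instance $(S, \bq, (\bx_i, y_i)_{i \in [n_d]})$ with $|S| = \lfloor d/2 \rfloor$ and $y_i = \chi_S(\bx_i) + \xi_i$, $\xi_i \sim \cN(0, (\gamma/2)^2)$, I transform each sample into a sample from $\cD(\fmodeight \circ g, \cH_d, \gamma)$ where $g = (\bs_S, \mathrm{id})$ and $s_{S,i} = -1 \iff i \in S$. To do this: first, from $\bx_i$ and $y_i$, reconstruct an approximation to $\fmodeight(\bx_i \odot \bs_S)$ — as explained above, $\fmodeight(\bx_i \odot \bs_S)$ is an explicit function of the observable $\sum_i x_{i} \bmod 8$ and of $\chi_S(\bx_i)$, and $y_i$ gives a $\cN(0,(\gamma/2)^2)$-noisy estimate of $\chi_S(\bx_i) \in \{\pm 1\}$; rounding recovers $\chi_S(\bx_i)$ exactly except on a failure event, and I would instead carry the noise through so the transformed label has exactly $\cN(0,\gamma^2)$ noise by re-randomizing. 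Second, feed the $n_d$ transformed samples to the assumed SGD learner; since SGD is $\Gsignperm$-equivariant, by Definition~\ref{def:equi-sgd} the output $\fNN(\cdot;\btheta_d)$ trained on $(g(\bx_i), y_i)_i$ has the same law as $\fNN(\cdot;\btheta_d') \circ g$ where $\btheta_d'$ is trained on $\cD(\fmodeight, \cH_d, \gamma)$ — so the hypothesis test $\E[\|\fmodeight \circ g - \fNN(\cdot;\btheta_d)\|^2] \le 0.0001$ is inherited from the hypothesis on the un-flipped problem. Third, evaluate the trained network at the query $\bq$, decode $\chi_S(\bq)$ from the estimate of $\fmodeight(\bq \odot \bs_S)$, and output it.

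For correctness: $\|\fmodeight \circ g - \fNN(\cdot;\btheta_d)\|^2 \le 0.0001$ in expectation implies by Markov that with probability $\ge 9/10$ the trained function is within $L^2(\cH_d)$-distance $0.01 \cdot \sqrt{10}$, hence within $\le 1/\sqrt{2}$ say, of $\fmodeight \circ g$; since $\bq \sim \cH_d$ is uniform and independent, a further Markov/averaging step shows that with good probability over $\bq$ the pointwise error $|\fmodeight(\bq \odot \bs_S) - \fNN(\bq;\btheta_d)|$ is $< 1/2$, and since $\fmodeight$ is integer-valued, rounding recovers $\fmodeight(\bq \odot \bs_S)$ exactly, from which $\chi_S(\bq)$ is decoded deterministically. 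Running time and bit-precision are polynomial because the transformation and decoding are elementary arithmetic on $d$-bit inputs rounded to $\poly(d)$ bits, and the SGD learner is assumed $\poly(d)$-time; so a $\poly(d)$-time SGD learner would yield a $\poly(d)$-size circuit solving $(\gamma/2)$-LPGN with success probability $\ge 9/10$ (after boosting the constants, e.g.\ by taking the $0.0001$ bound and a union bound over the two $1/10$ failure events and adjusting thresholds), contradicting Assumption~\ref{ass:lpgn-hardness}.

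\textbf{Main obstacle.} The delicate point is the label transformation: I must turn the $\cN(0,(\gamma/2)^2)$-noisy $\chi_S(\bx)$ label into a label of the form $\fmodeight(\bx \odot \bs_S) + \cN(0,\gamma^2)$ without knowing $S$, and I only know $\bx$ and the noisy label, not $\chi_S(\bx)$ exactly. The clean fix is to note that $\fmodeight(\bx \odot \bs_S)$ differs from an explicitly $\bx$-computable quantity by exactly $c \cdot \chi_S(\bx)$ for a known constant/sign pattern $c \in \{0, \pm 2\}$ determined by $|S| = \lfloor d/2\rfloor$ and $\sum_i x_i$; then adding $c \cdot y_i$ (where $y_i$ is the given noisy parity label) plus an independent $\cN(0, \gamma^2 - c^2(\gamma/2)^2)$ top-up — valid since $c^2 (\gamma/2)^2 \le \gamma^2$ — produces a label distributed exactly as $\fmodeight(\bx_i \odot \bs_S) + \cN(0,\gamma^2)$. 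Verifying this arithmetic decoding of $\sum_i s_i x_i \bmod 8$ into a $\chi_S$-linear expression, and checking the edge cases where $c = 0$ (in which case the transformed problem carries no information and one must argue this happens with controllably small probability over $\bx$, or handle it by a separate easy case), is the part that needs care; everything else is a routine equivariance-plus-Markov argument.
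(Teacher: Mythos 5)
Your proposal is correct and follows essentially the same route as the paper: sign-flip equivariance transfers the assumed learner to the secretly-flipped sum-mod-8 problem (the paper's SFSM8, via Lemma~\ref{lem:sgd-sign-helper}), and an affine label map of the form $\tilde y_i = t_i \pm 2y_i$ converts the $(\gamma/2)$-noisy parity labels of an LPGN instance into exactly $\gamma$-noisy flipped-mod-8 labels (Lemma~\ref{lem:lpgn-to-sfsm8}), after which a single Markov bound on the expected $L^2$ loss plus integer rounding yields the contradiction. The obstacle you flag largely dissolves: the coefficient of $\chi_S(\bx)$ in the identity is always exactly $\pm 2$ (never $0$), so no top-up noise or degenerate case arises, and the only remaining care is choosing the integer representative so the transformed label equals the canonical value of $\fmodeight$ in $\{0,\ldots,7\}$ plus Gaussian noise, which the paper handles with the three-way case split on $t_i$.
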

In order to prove Theorem~\ref{thm:sgd-sum-mod-8-hard}, we use the sign-flip equivariance of gradient descent guaranteed by the symmetry in the initialization. A sign-flip equivariant network that learns $\fmodeight(\bx)$ from $\gamma$-noisy samples, is capable of solving the harder problem of learning $\fmodeight(\bx \odot \bs)$ from $\gamma$-noisy samples, where $\bs \in \cH_d$ is an unknown sign-flip vector. However, through an average-case reduction we show that this problem is $(\gamma/2)$-LPGN-hard. Therefore the theorem follows by contradiction.

\section{Discussion}\label{sec:discussion}
The general GD lower bound in Theorem~\ref{thm:gd-lower} and the approach for basing hardness of SGD training on cryptographic assumptions in Theorem~\ref{thm:sgd-sum-mod-8-hard} could be further developed to other settings.

There are limitations of the results to address in future work. First, the GD lower bound requires adding noise to the gradients, which can hinder training. Second, real-world data distributions are typically not invariant to a group of transformations, so the results obtained by this work may not apply. It is open to develop results for distributions that are approximately invariant.

Finally, it is open whether computational lower bounds for SGD/GD training can be shown beyond those implied by equivariance. For example, consider the function $f : \cH_d \to \{+1,-1\}$ that computes the ``full parity'', i.e., the parity of all of the inputs $f(\bx) = \prod_{i=1}^d x_i$. Past work has empirically shown that SGD on FC networks with Gaussian initialization \cite{shalev2017failures, abbe2020poly,nachum2021symmetry} fails to learn this function. Proving this would represent a significant advance, since there is no obvious equivariance that implies that the full parity is hard to learn --- in fact we have shown weak-learnability with symmetric $\Rad(1/2)$ initialization, in which case training is $\Gsignperm$-equivariant.

\section*{Acknowledgements}
We thank Jason Altschuler, Guy Bresler, Elisabetta Cornacchia, Sonia Hashim, Jan Hazla, Hannah Lawrence, Theodor Misiakiewicz, Dheeraj Nagaraj, and Philippe Rigollet for stimulating discussions. We thank the Simons Foundation and the NSF for supporting us through the Collaboration on the Theoretical Foundations of Deep Learning (deepfoundations.ai). This work was done in part while E.B. was visiting the Simons Institute for the Theory of Computing and the Bernoulli Center at EPFL, and was generously supported by Apple with an AI/ML fellowship.

\bibliographystyle{alphaabbr}
\bibliography{bibliography.bib}

\clearpage

\setcounter{tocdepth}{2}
\tableofcontents

\appendix

\section{Lower bound for GD, Proof of Theorem~\ref{thm:gd-lower} and generalization}\label{sec:gd}

We prove a generalization of Theorem~\ref{thm:gd-lower}, which applies to learning a random function $f_* \sim \mu_{\cF}$, where $\mu_{\cF}$ is a distribution over functions in $\Capitalltwo(\mu_{\cX})$. In order to state the theorem, let us first define the $G$-alignment for a distribution over functions:
\begin{definition}\label{def:g-alignment-gen}
Let $G$ be a compact group that acts on $\cX$, and let $\mu_{\cX} \in \cP(\cX)$ be a $G$-invariant distribution. For any distribution of functions $\mu_{\cF} \in \cP(\Capitalltwo(\mu_{\cX}))$, we define the $G$-alignment as:
\begin{align*}
\mathcal{C}((\mu_{\cF}, \mu_{\cX}); G) = \sup_{h} \E_{g \sim \mu_G}[\E_{f \sim \mu_{\cF}}[\<f \circ g, h\>_{\Capitalltwo(\mu_{\cX})}^2]],
\end{align*}
where $\mu_G$ is the Haar probability measure over $G$, and the supremum is over $h \in \Capitalltwo(\mu_{\cX})$ such that $\|h\|^2 = 1$.
\end{definition}

This is a generalization of the $G$-alignment of Definition~\ref{def:g-alignment}, since we can take $\mu_{\cF}$ to be the probability distribution that has all mass on a deterministic function $f_*$. We use it to prove the following generalization of Theorem~\ref{thm:gd-lower}:

\begin{theorem}[GD lower bound for distribution of functions, using $G$-alignment]\label{thm:gd-lower-general}
Let $G$ be a compact group that acts on $\cX$, and let $\fNN(\cdot; \btheta) : \cX \to \R$ be an architecture and $\mu_{\btheta} \in \cP(\R^p)$ be an initialization distribution, such that GD is $G$-equivariant.

Fix any $G$-invariant distribution $\mu_{\cX} \in \cP(\cX)$. For any $f \in \Capitalltwo(\mu_{\cX})$, let $\btheta_{f}^k$ be the random weights after $k$ time-steps of \eqref{GD} training with noise parameter $\tau > 0$, step size $\eta > 0$, and clipping radius $R > 0$ on the distribution $\cD_{f} = \cD(f, \mu_{\cX})$. Fix any baseline function $\alpha \in \Capitalltwo(\mu_{\cX})$ such that $\alpha \circ g = \alpha$ for all $g \in G$, and let $\mu_{\cF} \in \cP(\Capitalltwo(\mu_{\cX}))$ be any distribution of target functions. Then, for any $\eps > 0$,
\begin{align*}
\PP_{f \sim \mu_{\cF}, \btheta^k_{f}}[\ell_{\cD_{f}}(\btheta_{f}^k) \leq \|f - \alpha\|^2_{\Capitalltwo(\mu_{\cX})} - \eps] \leq \frac{\eta R \sqrt{k \mathcal{C}}}{2 \tau} + \frac{ \mathcal{C}}{\eps},
\end{align*}
where $\mathcal{C} = \mathcal{C}((\bar{\mu}_{\cF}, \mu_{\cX}); G)$, and $\bar{\mu}_{\cF}$ is the distribution of $f - \alpha$ for $f \sim \mu_{\cF}$.
\end{theorem}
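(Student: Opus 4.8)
The plan is to combine the equivariance reduction of \cite{ng2004feature} with a coupling argument powered by the injected Gaussian noise. First I would use $G$-equivariance together with the $G$-invariance of $\mu_\cX$ to pass to a ``planted'' problem with extra randomness. Since $\mu_\cX$ is $G$-invariant one checks $g(\cD_f) = \cD_{f \circ g^{-1}}$, so Definition~\ref{def:equi-gd} gives $\fNN(\cdot;\btheta^k_f) \stackrel{d}{=} \fNN(\cdot;\btheta^k_{f\circ g^{-1}})\circ g$ as random functions; changing variables inside the loss and using $\alpha\circ g = \alpha$ and $\|h\circ g\|_{\Capitalltwo(\mu_\cX)} = \|h\|_{\Capitalltwo(\mu_\cX)}$, and then averaging over $f\sim\mu_\cF$ and $g\sim\mu_G$ (using inversion-invariance of the Haar measure), the probability to be bounded equals
\begin{align*}
\PP_{f\sim\mu_\cF,\, g\sim\mu_G,\, \btheta^k_{f\circ g}}\left[\ell_{\cD_{f\circ g}}(\btheta^k_{f\circ g}) \le \|f-\alpha\|^2_{\Capitalltwo(\mu_\cX)} - \eps\right],
\end{align*}
where now $f$ and $g$ are independent and $\btheta^k_{f\circ g}$ is the $k$-th iterate of \eqref{GD} run on $\cD_{f\circ g}$.

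Next I would introduce a reference trajectory obtained by running \eqref{GD} on the $G$-invariant distribution $\cD_\alpha := \cD(\alpha,\mu_\cX)$, coupled to the trajectory on $\cD_{f\circ g}$ through the same initialization $\btheta^0\sim\mu_\btheta$ and the same noise sequence $\bxi^0,\ldots,\bxi^{k-1}$. The point of this choice is that the law $\rho_\alpha$ of the reference trajectory does not depend on $(f,g)$. Writing $\rho_{f,g}$ for the law of the trajectory on $\cD_{f\circ g}$, the chain rule for KL divergence of Markov chains plus the Gaussian identity $\KL(\cN(\mu_1,\tau^2\bI)\,\|\,\cN(\mu_2,\tau^2\bI)) = \|\mu_1-\mu_2\|^2/(2\tau^2)$ give $\KL(\rho_\alpha\,\|\,\rho_{f,g}) = \tfrac{\eta^2}{2\tau^2}\sum_{j=0}^{k-1}\E_{\btheta^j\sim\rho_\alpha}[\|\bg_{\cD_\alpha}(\btheta^j)-\bg_{\cD_{f\circ g}}(\btheta^j)\|^2]$. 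At a fixed $\btheta$, the $i$-th coordinate of the gradient difference equals $\langle (f-\alpha)\circ g,\, h_i\rangle_{\Capitalltwo(\mu_\cX)}$, where $h_i$ is the $i$-th coordinate of the clipped gradient $\Pi_{B(0,R)}\nabla_\btheta\fNN(\cdot;\btheta)$; clipping forces $\sum_i h_i(\bx)^2 \le R^2$ pointwise, hence $\sum_i \|h_i\|^2 \le R^2$, and Definition~\ref{def:g-alignment-gen} then yields $\E_{f,g}[\|\bg_{\cD_\alpha}(\btheta)-\bg_{\cD_{f\circ g}}(\btheta)\|^2] = \sum_i\E_{f,g}[\langle (f-\alpha)\circ g, h_i\rangle^2] \le \mathcal{C}\sum_i\|h_i\|^2 \le \mathcal{C}R^2$. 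Crucially, since the reference iterates $\btheta^j$ are independent of $(f,g)$, Fubini applies and $\E_{f,g}[\KL(\rho_\alpha\,\|\,\rho_{f,g})] \le \tfrac{\eta^2 k R^2 \mathcal{C}}{2\tau^2}$; Pinsker's inequality, Jensen, and the data-processing inequality then bound $\E_{f,g}[d_{\TV}(\cL(\btheta^k_{f\circ g}),\, \cL(\btheta^k_\alpha))]$ by $\tfrac{\eta R\sqrt{k\mathcal{C}}}{2\tau}$.

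Finally I would control the bad event under the reference process. Expanding $\ell_{\cD_{f\circ g}}(\btheta) - \|f-\alpha\|^2 = \|u\|^2 - 2\langle (f-\alpha)\circ g,\, u\rangle$ with $u := \fNN(\cdot;\btheta)-\alpha$, an AM-GM step shows that $\ell_{\cD_{f\circ g}}(\btheta)\le \|f-\alpha\|^2-\eps$ forces $u\ne 0$ and $\langle (f-\alpha)\circ g,\, u/\|u\|\rangle^2 \ge \eps$; since $\btheta^k_\alpha\sim\rho_\alpha$ is independent of $(f,g)$ and $u/\|u\|$ is a unit vector, Markov's inequality and the $G$-alignment definition give $\PP_{f,g,\btheta^k_\alpha}[\ell_{\cD_{f\circ g}}(\btheta^k_\alpha)\le\|f-\alpha\|^2-\eps]\le \mathcal{C}/\eps$. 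Comparing $\cL(\btheta^k_{f\circ g})$ with $\cL(\btheta^k_\alpha)$ at the cost of the total-variation bound from the previous paragraph, and then undoing the equivariance reduction, yields the claimed inequality. I expect the main obstacle to be precisely the Fubini step: the iterate $\btheta^j_{f\circ g}$ is correlated with $(f,g)$, so one cannot apply the $G$-alignment bound along the trajectory on $\cD_{f\circ g}$; the resolution is to measure the KL divergence with the $(f,g)$-independent reference trajectory $\rho_\alpha$ in the \emph{first} argument, so that every conditional expectation produced by the KL chain rule — and likewise the final Markov-inequality argument — is taken over a process independent of $(f,g)$.
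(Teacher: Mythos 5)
Your proposal is correct and follows essentially the same route as the paper: an equivariance reduction to the ``planted'' problem with a Haar-random $g$, a reference (``junk'') trajectory run on $\cD(\alpha,\mu_{\cX})$ compared via the KL chain rule, the Gaussian KL identity, the clipping bound and the $G$-alignment (with the Fubini step justified exactly by the independence of the reference iterates from $(f,g)$), followed by Pinsker/data-processing and a Markov-plus-alignment argument showing the reference trajectory cannot beat $\|f-\alpha\|^2-\eps$ except with probability $\mathcal{C}/\eps$. The only difference is organizational: the paper factors the argument through a no-symmetry helper theorem applied to the distribution of $f\circ g$, whereas you carry the Haar average through directly, and your expansion with $u=\fNN(\cdot;\btheta)-\alpha$ is a cleaner rendering of the paper's Lemma on the junk trajectory.
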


Theorem~\ref{thm:gd-lower} is the special case in which $\mu_{\cF}$ has all probability mass on one atom $f_*$.

\subsection{Proof of Theorem~\ref{thm:gd-lower-general}}

We derive Theorem~\ref{thm:gd-lower-general} from the following theorem, which is the same bound but without the $G$-equivariance assumption (think of $G$ as being the trivial group). We first define the alignment of a distribution of functions:
\begin{definition}
Let $\mu_{\cX} \in \cP(\cX)$. For any distribution of functions $\mu_{\cF} \in \cP(\Capitalltwo(\mu_{\cX}))$, we define:
\begin{align*}
\mathcal{C}(\mu_{\cF}, \mu_{\cX}) = \sup_{h} \E_{f \sim \mu_{\cF}}[\<f, h\>^2],
\end{align*}
where the supremum is over $h \in \Capitalltwo(\mu_{\cX})$ such that $\|h\|^2 = 1$.
\end{definition}

\begin{theorem}[GD lower bound for distribution of functions]\label{thm:gd-lower-helper}
Let $\fNN(\cdot; \btheta) : \cX \to \R$ be an architecture, and let $\mu_{\btheta} \in \cP(\R^p)$ be an initialization distribution.

Fix $\mu_{\cX} \in \cP(\cX)$. For any $f \in \Capitalltwo(\mu_{\cX})$, let $\btheta_f^k$ be the random weights after $k$ time-steps of \eqref{GD} training with noise parameter $\tau > 0$, step size $\eta > 0$, and clipping radius $R > 0$ on the distribution $\cD_f = \cD(f, \mu_{\cX})$. Then, for any $\alpha \in \Capitalltwo(\mu_{\cX})$ any $\mu_{\cF} \in \cP(\Capitalltwo(\mu_{\cX}))$, and any $\eps > 0$,
\begin{align*}
\PP_{f \sim \mu_{\cF}, \btheta_f^k} [\ell_{\cD_{f}}(\btheta_f^k) \leq \|f - \alpha\|^2_{\Capitalltwo(\mu_{\cX})} - \eps] \leq \frac{\eta R \sqrt{k \mathcal{C}}}{2 \tau} + \frac{\mathcal{C}}{\eps},
\end{align*}
where $\mathcal{C} = \mathcal{C}(\bar{\mu}_{\cF},\mu_{\cX})$, and $\bar{\mu}_{\cF}$ is the distribution of $f - \alpha$ for $f \sim \mu_{\cF}$.
\end{theorem}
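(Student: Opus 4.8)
The plan is to prove Theorem~\ref{thm:gd-lower-helper} first (the version without the $G$-equivariance assumption), since Theorem~\ref{thm:gd-lower-general} follows from it by the standard observation of \cite{ng2004feature}: if $\AGD$ is $G$-equivariant and $\mu_{\cX}$ is $G$-invariant, then training on $\cD(f,\mu_{\cX})$ and training on $\cD(f\circ g, \mu_{\cX})$ produce outputs related by composition with $g$, so the loss achieved on $f$ equals the loss achieved on $f\circ g$; averaging over $g\sim\mu_G$ converts the deterministic target $f$ into the orbit distribution, and one checks that $\mathcal{C}(\bar\mu_{\cF},\mu_{\cX})$ averaged over the orbit is exactly the $G$-alignment $\mathcal{C}((\bar\mu_{\cF},\mu_{\cX});G)$. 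Subtracting the $G$-invariant baseline $\alpha$ commutes with this because $\alpha\circ g = \alpha$. So the real content is Theorem~\ref{thm:gd-lower-helper}.

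For Theorem~\ref{thm:gd-lower-helper}, the key idea is that noisy GD is a Statistical-Query-type algorithm: the gradient $\bg_{\cD_f}(\btheta)$ is a bounded-norm (by $R$) expectation over $\cD_f$, so the whole trajectory $\btheta^0,\dots,\btheta^k$ is a function only of $k$ ``noisy inner products'' of the unknown labeling function against query functions determined by the past iterates. Concretely, write $y = f(\bx)$, so $\bg_{\cD_f}(\btheta) = -\E_{\bx}[f(\bx)\,\Pi_{B(0,R)}\nabla_\btheta\fNN(\bx;\btheta)] + \E_{\bx}[\fNN(\bx;\btheta)\Pi_{B(0,R)}\nabla_\btheta\fNN(\bx;\btheta)]$; only the first term depends on $f$, and it is a vector whose $j$-th coordinate is $-\<f, \phi_j^\btheta\>$ for query functions $\phi_j^\btheta(\bx) = (\Pi_{B(0,R)}\nabla_\btheta\fNN(\bx;\btheta))_j$ with $\sum_j \|\phi_j^\btheta\|^2 \le R^2$. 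The plan is then to couple the run of GD on $\cD_f$ with a ``reference'' run $\tilde\btheta^0,\dots,\tilde\btheta^k$ in which the $f$-dependent part of the gradient is replaced by $-\<\alpha,\phi_j^{\tilde\btheta}\>$, i.e.\ the run that a pure function of the noise $\bxi^0,\dots,\bxi^{k-1}$ and the initialization would produce if the target were $\alpha$ (using the same noise realizations). Since $\tilde\btheta^k$ depends only on $\alpha$ and the noise, and is independent of the draw $f\sim\mu_{\cF}$, the loss $\ell_{\cD_f}(\tilde\btheta^k) = \|f - \fNN(\cdot;\tilde\btheta^k)\|^2 \ge \|f-\alpha\|^2 - (\text{small correlation term})$, because $\fNN(\cdot;\tilde\btheta^k)$ is a fixed function against which $f-\alpha$ cannot correlate much in expectation — this is exactly where $\mathcal{C} = \mathcal{C}(\bar\mu_{\cF},\mu_{\cX})$ enters, giving $\E_{f}[\<f-\alpha, g\>^2]\le \mathcal{C}\|g\|^2$ for any fixed $g$, hence by Markov $\PP_f[\<f-\alpha,\fNN(\cdot;\tilde\btheta^k)\>^2 \ge \eps^2]\le \mathcal{C}\|\fNN(\cdot;\tilde\btheta^k)\|^2/\eps^2$; one has to be slightly careful here since $\tilde\btheta^k$ is itself random but independent of $f$, so condition on it. The second ingredient bounds $\mathrm{TV}(\cL(\btheta^k), \cL(\tilde\btheta^k))$ (conditioned on $f$, in expectation over $f$): at each step the two trajectories differ in the drift by the vector with coordinates $\<f-\alpha, \phi_j\>$ (plus lower-order terms from the iterates having drifted apart, which I would handle by a hybrid/induction argument introducing the mismatch one step at a time so the query functions are always evaluated at the reference trajectory), and a Gaussian shift of mean $\delta$ with covariance $\tau^2\bI$ has KL divergence $\|\delta\|^2/(2\tau^2)$; here $\|\eta(\<f-\alpha,\phi_j^{\tilde\btheta}\>)_j\|^2 = \eta^2\sum_j\<f-\alpha,\phi_j^{\tilde\btheta}\>^2$, whose expectation over $f$ is at most $\eta^2 R^2 \mathcal{C}$ by definition of the alignment and $\sum_j\|\phi_j\|^2\le R^2$. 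Summing over $k$ steps and applying Pinsker (with Cauchy–Schwarz / concavity of $\sqrt{\cdot}$ to pull the expectation over $f$ inside) gives $\E_f[\mathrm{TV}] \le \frac{\eta R\sqrt{k\mathcal{C}}}{2\tau}$. Combining the two: the event $\ell_{\cD_f}(\btheta^k)\le \|f-\alpha\|^2 - \eps$ can be transferred to the reference run at a cost of the TV term, and on the reference run it has probability at most $\mathcal{C}/\eps$ (from the Markov bound above, with a rescaling of $\eps$ vs $\eps^2$ absorbed by using $\|f-\fNN(\cdot;\tilde\btheta^k)\|^2 - \|f-\alpha\|^2 = \|\alpha-\fNN(\cdot;\tilde\btheta^k)\|^2 - 2\<f-\alpha,\fNN(\cdot;\tilde\btheta^k)-\alpha\> \ge -2\<f-\alpha,\fNN(\cdot;\tilde\btheta^k)-\alpha\>$, so we need $\<f-\alpha,\fNN(\cdot;\tilde\btheta^k)-\alpha\>\ge\eps/2$, and Markov on the square gives $\le 4\mathcal{C}\|\fNN(\cdot;\tilde\btheta^k)-\alpha\|^2/\eps^2$ — I would need to also control $\|\fNN(\cdot;\tilde\btheta^k)-\alpha\|$, which can be done WLOG or by a truncation argument, ultimately yielding the clean $\mathcal{C}/\eps$ form stated).

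The main obstacle I anticipate is making the coupling/hybrid argument fully rigorous: because the query functions $\phi_j^\btheta$ depend on the current iterate, once the true and reference trajectories diverge the per-step discrepancy is not simply $\<f-\alpha,\phi_j\>$ but also includes $\<\alpha,\phi_j^\btheta\>-\<\alpha,\phi_j^{\tilde\btheta}\>$ and the analogous $\fNN$-term discrepancy. The cleanest fix is to define the reference process so that it uses the \emph{same query functions as the true process} (i.e., $\phi_j^{\btheta^t}$, not $\phi_j^{\tilde\btheta^t}$) — then the reference process is no longer independent of $f$, so instead one should run the hybrid the other way: introduce the $f$-vs-$\alpha$ substitution step by step, bounding the TV increment at step $t$ by the KL of a Gaussian shift of size $\eta\|(\<f-\alpha,\phi_j^{\btheta^t}\>)_j\|$, and crucially $\E_f[\sum_j \<f-\alpha,\phi_j^{\btheta^t}\>^2]\le \mathcal{C} R^2$ holds for \emph{any} fixed realization of $\phi_j^{\btheta^t}$ (since the alignment bound is a worst-case-over-$h$ statement, $\sum_j \E_f\<f-\alpha,\phi_j\>^2\le \mathcal{C}\sum_j\|\phi_j\|^2\le\mathcal{C}R^2$, and by independence-of-$f$-from-$\btheta^t$-in-the-hybrid this remains valid). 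Getting the order of operations right so that the query functions are always ``frozen'' and independent of the fresh $f$-average at each hybrid step is the delicate point; everything else (Pinsker, Markov, the triangle-inequality decomposition of the loss, and the reduction of Theorem~\ref{thm:gd-lower-general} to Theorem~\ref{thm:gd-lower-helper} via the $G$-orbit trick) is routine.
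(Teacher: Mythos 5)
Your overall strategy is the same as the paper's: compare the true GD run on $\cD_f$ with a reference (``junk'') run trained on the constant target $\alpha$, bound the expected total variation between the two trajectories by $\frac{\eta R}{2\tau}\sqrt{k\mathcal{C}}$ via per-step Gaussian KL plus the alignment, and use the independence of the reference run from $f$ together with a Markov bound to get the $\mathcal{C}/\eps$ term. However, the two points you flag as delicate are exactly where the paper's proof uses specific devices, and your proposed fallbacks would not deliver the stated bound. For the trajectory comparison, the paper does not run a hybrid sum of TV increments: it applies the chain rule for KL divergence to the laws of the \emph{entire} trajectories, $\KL(\cL(\btheta_\alpha^{\le k})\,\|\,\cL(\btheta_f^{\le k}))=\sum_{k'}\E_{\btheta'\sim\cL(\btheta_\alpha^{k'-1})}\bigl[\KL\bigl(\cN(\btheta'-\eta\bg_{\Djunk}(\btheta'),\tau^2\bI)\,\|\,\cN(\btheta'-\eta\bg_{\cD_f}(\btheta'),\tau^2\bI)\bigr)\bigr]$, so the two transition kernels are always compared at the \emph{same} conditioning point, drawn from the junk law and hence independent of $f$; the drift difference is exactly $\E_{\bx}[(f(\bx)-\alpha(\bx))\Pi_{B(0,R)}\nabla_{\btheta}\fNN(\bx;\btheta')]$, whose expected squared norm over $f\sim\mu_{\cF}$ is at most $\mathcal{C}R^2$. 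This removes the ``frozen queries'' bookkeeping you worry about, and it is also what secures the $\sqrt{k}$: if you literally sum per-step TV increments over $k$ hybrids, the triangle inequality yields a bound of order $k\sqrt{\mathcal{C}}$ rather than $\sqrt{k\mathcal{C}}$; one must sum KLs across steps and invoke Pinsker (and Jensen over $f$) a single time at the end, as you do write in your final formula but not in your hybrid description.

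For the ``reference run does not learn'' step, your route drops the quadratic term and then requires control of $\|\fNN(\cdot;\tilde\btheta^k)-\alpha\|$, which you defer to ``WLOG or truncation''; no a priori bound on the network output is available, so a WLOG is not legitimate, and a truncation is unnecessary. The paper's Lemma on the junk trajectory instead uses a self-normalized event: with $\rho=\<f-\alpha,\fNN(\cdot;\btheta_\alpha^k)-\alpha\>$ one has $\E_f[\rho^2]\le\mathcal{C}\,\|\fNN(\cdot;\btheta_\alpha^k)-\alpha\|^2$ conditionally on the ($f$-independent) reference weights, so $\PP_f[\rho^2>\eps\|\fNN(\cdot;\btheta_\alpha^k)-\alpha\|^2]\le\mathcal{C}/\eps$ with the norm cancelling; on the complementary event, keeping the quadratic term and completing the square gives $-2\rho+\|\fNN(\cdot;\btheta_\alpha^k)-\alpha\|^2\ge-2\sqrt{\eps}\,\|\fNN(\cdot;\btheta_\alpha^k)-\alpha\|+\|\fNN(\cdot;\btheta_\alpha^k)-\alpha\|^2\ge-\eps$, hence $\ell_{\cD_f}(\btheta_\alpha^k)\ge\|f-\alpha\|^2-\eps$ with no bound on the network's norm needed. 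With these two repairs your argument coincides with the paper's proof; as written, both steps are genuine (if small) gaps, since your stated fixes either lose a factor $\sqrt{k}$ or rely on an unavailable boundedness assumption.
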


We defer the proof of this theorem, and first use it to prove Theorem~\ref{thm:gd-lower-general}.

\begin{proof}[Proof of Theorem~\ref{thm:gd-lower-general}]
Let $\mu_G$ be the Haar probability measure on $G$. Define the distribution of functions $\nu_{\cF} \in \cP(\Capitalltwo(\mu_{\cX}))$ as the distribution of $f \circ g \in \Capitalltwo(\mu_{\cX})$, where $f \sim \mu_{\cF}$ and $g \sim \mu_{G}$, independently. Notice that $f \circ g$ is in $\Capitalltwo(\mu_{\cX})$ since $\mu_{\cX}$ is $G$-invariant and $f \in \Capitalltwo(\mu_{\cX})$, so this is well defined.

Now define $\bar{\mu}_{\cF}$ and $\bar{\nu}_{\cF}$ to be the distribution of $f - \alpha$ for $f \sim \mu_{\cF}$ and $f \sim \nu_{\cF}$, respectively. Since $\alpha \circ g = \alpha$ for all $g \in G$, we have
\begin{align*}
\mathcal{C}((\bar{\mu}_{\cF}, \mu_{\cX}); G) &= \sup_h \E_{g \sim \mu_G}[\E_{f \sim \mu_{\cF}}[\<(f - \alpha) \circ g, h\>^2]] = \sup_h \E_{g \sim \mu_G}[\E_{f \sim \mu_{\cF}}[\<(f  \circ g) - \alpha, h\>^2]] \\
&= \sup_h \E_{f \sim \bar{\nu}_{\cF}}[\<f, h\>^2] = \mathcal{C}(\bar{\nu}_{\cF}, \mu_{\cX})\,.
\end{align*}
We conclude by Theorem~\ref{thm:gd-lower-helper} that
\begin{align*}
\PP_{f \sim \mu_{\cF}, g \sim \mu_G, \btheta_{f \circ g}^k}[\ell_{\cD_{f \circ g}}(\btheta_{f \circ g}^k)  \leq \|f \circ g - \alpha\|^2 - \eps] \leq \frac{\eta R \sqrt{k\mathcal{C}}}{2\tau} + \frac{\mathcal{C}}{\eps},
\end{align*}
for $\mathcal{C} = \mathcal{C}(\bar{\mu}_{\cF}, \mu_{\cX}, G)$. To derive Theorem~\ref{thm:gd-lower-general}, note for any $f \in \Capitalltwo(\mu_{\cX})$ and $g \in G$, by the $G$-equivariance of GD training we have $\ell_{\cD_{f \circ g}}(\btheta_{f \circ g}^k) \stackrel{d}{=} \ell_{\cD_f}(\btheta_f^k)$. Finally, conclude by noting $\|f \circ g - \alpha\|_{\Capitalltwo(\mu_{\cX})}^2 = \|(f - \alpha) \circ g\|_{\Capitalltwo(\mu_{\cX})}^2 = \|f - \alpha\|_{\Capitalltwo(\mu_{\cX})}^2$, where we first use the $G$-invariance of $\alpha$ and then that of $\mu_{\cX}$. 
\end{proof}

\subsection{Proof of Theorem~\ref{thm:gd-lower-helper}}

The proof of Theorem~\ref{thm:gd-lower-helper} is a variation on the junk flow argument of \cite{abbe2020poly}. There are two significant differences. First, the junk data distribution is not chosen so that labels are independent of data, but rather chosen so that the labels are given by the function $\alpha$. This allows us to prove lower bounds beyond weak learning (cf. allowing the merged-staircase property necessity result of Section~\ref{sec:msp}). Second, instead of using cross-predictability, we use a tighter bound based on the quantity $\mathcal{C}(\bar\mu_{\cF}, \mu_{\cX})$. This allows a $\Omega(d^{l+1})$-lower bound instead of a $\Omega(d^{(l+1)/2})$-lower bound for functions that are not $l$-MSP in Section~\ref{sec:msp}. However, for this tighter bound during training we need to clip the gradients of the neural network instead of the gradients of the loss as in \cite{abbe2020poly}. In Appendix~\ref{app:cross-pred}, we show how to recover the bound of \cite{abbe2020poly} based on cross-predictability if we instead clip the gradients of the loss.

For the analysis, we define the following gradient descent trajectories:
\begin{itemize}
    \item \textit{GD trajectory on $\cD_f$}: for any $f : \cX \to \R$, we let $\btheta^0_{f},\ldots,\btheta^k_{f}$ be the trajectory of \eqref{GD} on the data distribution $\cD_f = \cD(f,\mu_{\cX}) \in \cP(\cX \times \R)$. I.e., we initialize $\btheta^0 \sim \mu_{\btheta}$, and update
    \begin{align*}\btheta_{f}^{k+1} = \btheta_{f}^k - \eta \bg_{\cD_{f}}(\btheta^k_f) + \bxi^k\,,\qquad\mbox{ where } \bxi^k \sim \cN(0,\tau^2 \bI)\,.\end{align*}
    
    \item \textit{Junk GD trajectory}: define $\Djunk = \cD(\alpha,\mu_{\cX})$.
    We let $\btheta_{\alpha}^0,\ldots,\btheta_{\alpha}^k$ be the trajectory of \eqref{GD} on $\Djunk$, which we call the \textit{junk trajectory}. I.e., we initialize $\btheta_{\alpha}^0 \sim \mu_{\btheta}$, and update
    \begin{align*}\btheta_{\alpha}^{k+1} = \btheta_{\alpha}^k - \eta \bg_{\Djunk}(\btheta_{\alpha}^k) + \tilde\bxi^k\,,\qquad\mbox{ where } \tilde\bxi^k \sim \cN(0,\tau^2 \bI)\,.\end{align*}
\end{itemize}

We now prove that the junk trajectory $\btheta_{\alpha}^k$ stays close to the trajectory $\btheta^k_f$, for most functions $f \sim \mu_{\cF}$. The bound depends on the quantity $\mathcal{C}(\bar\mu_{\cF}, \mu_{\cX})$.
\begin{lemma}[Junk trajectory close to most GD trajectories]\label{lem:couple-gd-equi-with-junk}
Under the assumptions of Theorem~\ref{thm:gd-lower-helper},
\begin{align*}
\E_{f \sim \mu_{\cF}}[\TV(\cL(\btheta_{\alpha}^k),\cL(\btheta^k_{f}))] \leq \frac{\eta R}{2\tau} \sqrt{k \mathcal{C}(\bar{\mu}_{\cF}, \mu_{\cX})}.
\end{align*}
\end{lemma}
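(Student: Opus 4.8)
The plan is a Gaussian-noise coupling (``junk flow'') argument comparing the two Markov chains on $\R^p$: the real trajectory $(\btheta_f^0,\dots,\btheta_f^k)$ run on $\cD_f$ and the junk trajectory $(\btheta_\alpha^0,\dots,\btheta_\alpha^k)$ run on $\Djunk=\cD(\alpha,\mu_\cX)$. First I would observe that the two chains share the initialization $\mu_\btheta$ and the per-step noise law $\cN(0,\tau^2\bI)$, and differ only through the drift: writing $\bv_\btheta(\bx):=\Pi_{B(0,R)}\nabla_\btheta\fNN(\bx;\btheta)$ and $\bar f:=f-\alpha$, the definition of $\bg_\cD$ gives $\bg_{\cD_f}(\btheta)-\bg_{\Djunk}(\btheta)=-\E_{\bx\sim\mu_\cX}[\bar f(\bx)\,\bv_\btheta(\bx)]$, and the clipping operator forces $\|\bv_\btheta(\bx)\|_2\le R$ pointwise.

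Next I would bound the total-variation distance between the two \emph{path} laws via the chain rule for KL divergence, taken in the direction that keeps the per-step expectation over the junk iterate (which does not depend on $f$). Denoting the junk path law by $\mathbf{Q}$ and the real path law by $\mathbf{P}_f$, and using $\KL(\cN(m_1,\tau^2\bI)\,\|\,\cN(m_2,\tau^2\bI))=\|m_1-m_2\|_2^2/(2\tau^2)$,
\[
\KL(\mathbf{Q}\,\|\,\mathbf{P}_f)=\frac{\eta^2}{2\tau^2}\sum_{j=0}^{k-1}\E_{\btheta\sim\cL(\btheta_\alpha^j)}\!\big[\,\|\E_\bx[\bar f(\bx)\bv_\btheta(\bx)]\|_2^2\,\big].
\]
For a fixed $\btheta$, expanding the Euclidean norm coordinate-wise and invoking the definition of $\mathcal{C}(\bar\mu_\cF,\mu_\cX)$ on each normalized coordinate of $\bv_\btheta$, together with $\|\bv_\btheta(\bx)\|_2\le R$,
\[
\E_{f\sim\mu_\cF}\big[\,\|\E_\bx[\bar f(\bx)\bv_\btheta(\bx)]\|_2^2\,\big]=\sum_{i=1}^p\E_{\bar f\sim\bar\mu_\cF}[\langle \bar f,(v_\btheta)_i\rangle^2]\le \mathcal{C}(\bar\mu_\cF,\mu_\cX)\,\E_\bx[\|\bv_\btheta(\bx)\|_2^2]\le R^2\,\mathcal{C}(\bar\mu_\cF,\mu_\cX).
\]
Because $\cL(\btheta_\alpha^j)$ is independent of $f$, averaging the chain-rule identity over $f$ gives $\E_{f\sim\mu_\cF}[\KL(\mathbf{Q}\|\mathbf{P}_f)]\le k\eta^2 R^2\mathcal{C}(\bar\mu_\cF,\mu_\cX)/(2\tau^2)$.

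Finally I would combine Pinsker's inequality, the data-processing inequality (the law of $\btheta^k$ is a pushforward of the path law), and Jensen's inequality for the concave function $\sqrt{\cdot}$:
\[
\E_{f\sim\mu_\cF}[\TV(\cL(\btheta_\alpha^k),\cL(\btheta_f^k))]\le\E_{f}\!\Big[\sqrt{\tfrac12\KL(\mathbf{Q}\|\mathbf{P}_f)}\Big]\le\sqrt{\tfrac12\,\E_f[\KL(\mathbf{Q}\|\mathbf{P}_f)]}\le\frac{\eta R}{2\tau}\sqrt{k\,\mathcal{C}(\bar\mu_\cF,\mu_\cX)},
\]
which is the claim. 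The main subtlety — the one place where the argument could go wrong — is the orientation of the KL divergence: using $\KL(\mathbf{P}_f\|\mathbf{Q})$ instead would force the per-step expectation to be over the $f$-dependent iterate $\cL(\btheta_f^j)$, and then the bound $\E_f[\|\E_\bx[\bar f(\bx)\bv_\btheta(\bx)]\|^2]\le R^2\mathcal{C}(\bar\mu_\cF,\mu_\cX)$, which holds only for a \emph{fixed} $\btheta$, could not be decoupled from the random trajectory. All integrability is immediate from $f,\alpha\in\Capitalltwo(\mu_\cX)$ and the uniform bound $\|\bv_\btheta(\cdot)\|_2\le R$ enforced by $\Pi_{B(0,R)}$; clipping the \emph{network} gradient rather than the loss gradient is exactly what makes this uniform bound (and hence the $\mathcal{C}$-dependence rather than a cross-predictability-type bound) available.
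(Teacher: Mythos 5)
Your proposal is correct and follows essentially the same argument as the paper: chain rule for KL on the path laws with the junk trajectory as the first argument (so the per-step expectation is over the $f$-independent junk iterate), the exact Gaussian KL formula for the drift difference $\E_{\bx}[(f-\alpha)(\bx)\,\Pi_{B(0,R)}\nabla_{\btheta}\fNN(\bx;\btheta)]$, a coordinate-wise application of $\mathcal{C}(\bar\mu_{\cF},\mu_{\cX})$ combined with the clipping bound $R$, and finally data processing, Pinsker, and Jensen. Your remark about the orientation of the KL divergence correctly identifies the step the paper's proof also relies on.
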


Finally, we show that the junk GD trajectory is not correlated with most random $f \sim \mu_{\cF}$. This bound again depends on the quantity $\mathcal{C}(\bar\mu_{\cF}, \mu_{\cX})$.
\begin{lemma}[Junk trajectory does not learn]\label{lem:junk-stuck}
Under the assumptions of Theorem~\ref{thm:gd-lower-helper}, for any $\eps > 0$,
$$\PP_{f \sim \mu_{\cF}, \btheta_{\alpha}^k}[\ell_{\cD_f}(\btheta_{\alpha}^k) \leq \|f-\alpha\|^2- \eps] \leq \mathcal{C}(\bar\mu_{\cF}, \mu_{\cX}) / \eps.$$
\end{lemma}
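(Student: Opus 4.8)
The plan is to exploit that the junk trajectory $\btheta_{\alpha}^k$ is produced by running \eqref{GD} on $\Djunk = \cD(\alpha,\mu_{\cX})$, a data distribution that does not depend on the target $f$; hence the learned function $\hat f := \fNN(\cdot;\btheta_{\alpha}^k) \in \Capitalltwo(\mu_{\cX})$ is a random function of the initialization and the injected noise only, and in particular is independent of $f \sim \mu_{\cF}$. Since $\ell_{\cD_f}(\btheta_{\alpha}^k) = \|f - \hat f\|^2_{\Capitalltwo(\mu_{\cX})}$, the event $\{\ell_{\cD_f}(\btheta_{\alpha}^k) \le \|f-\alpha\|^2 - \eps\}$ asserts that $\hat f$ beats the baseline $\alpha$ by margin $\eps$, which should force $\hat f$ to be non-trivially correlated with $f - \alpha$. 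The point is that by definition of $\mathcal{C}(\bar\mu_{\cF},\mu_{\cX})$, no fixed unit function can correlate that much with $f-\alpha$ for more than a $\mathcal{C}/\eps$ fraction of $f$'s, and $\hat f$ is fixed once we condition on $\btheta_{\alpha}^k$.

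To make ``correlation is forced'' quantitative, I would write $u = f-\alpha$ and $v = \hat f - \alpha$; if $v = 0$ the bad event is empty, so assume $v \neq 0$ and put $\hat v = v/\|v\|$. Completing the square,
\begin{align*}
\|f-\hat f\|^2 = \|u-v\|^2 \ge \|u\|^2 - 2\|v\|\,|\<u,\hat v\>| + \|v\|^2 = \|u\|^2 + \big(\|v\| - |\<u,\hat v\>|\big)^2 - \<u,\hat v\>^2 \ge \|f-\alpha\|^2 - \<f-\alpha,\hat v\>^2,
\end{align*}
so on the bad event we necessarily have $\<f-\alpha,\hat v\>^2 \ge \eps$, where $\hat v$ is a unit function measurable with respect to $\btheta_{\alpha}^k$ alone.

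Finally I would condition on $\btheta_{\alpha}^k$ and apply Markov's inequality in $f$, using independence of $f$ and $\btheta_{\alpha}^k$:
\begin{align*}
\PP_{f \sim \mu_{\cF}}\big[\<f-\alpha,\hat v\>^2 \ge \eps \,\big|\, \btheta_{\alpha}^k\big] \le \frac{1}{\eps}\,\E_{f\sim\mu_{\cF}}\big[\<f-\alpha,\hat v\>^2\big] \le \frac{\mathcal{C}(\bar\mu_{\cF},\mu_{\cX})}{\eps},
\end{align*}
where the last inequality is exactly the definition of $\mathcal{C}(\bar\mu_{\cF},\mu_{\cX}) = \sup_{\|h\|=1}\E_{g\sim\bar\mu_{\cF}}[\<g,h\>^2]$ with $h = \hat v$ and $\bar\mu_{\cF}$ the law of $f-\alpha$. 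Taking expectation over $\btheta_{\alpha}^k$ yields the claimed bound.

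I do not anticipate a genuine obstacle. The one step that must be done with care is the completing-the-square inequality above: it is what produces the clean factor $\mathcal{C}/\eps$ (a naive Cauchy--Schwarz would leave an uncontrolled dependence on $\|\hat f - \alpha\|$), and it also cleanly disposes of the degenerate case $\hat f = \alpha$. The rest is routine, given that the junk trajectory and the target $f$ live on independent sources of randomness, so the conditioning is legitimate.
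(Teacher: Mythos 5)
Your proposal is correct and is essentially the paper's own argument: you use the same key facts (the junk trajectory is independent of $f$, the alignment $\mathcal{C}(\bar\mu_{\cF},\mu_{\cX})$ bounds $\E_f[\<f-\alpha,h\>^2]$ for the fixed unit function built from $\fNN(\cdot;\btheta_{\alpha}^k)$, and a Markov bound), and your completing-the-square inequality is exactly the paper's step of expanding the loss around $\alpha$ and optimizing over $\E_{\bx}[(\fNN(\bx;\btheta_{\alpha}^k)-\alpha(\bx))^2]$. The only (cosmetic, if anything slightly cleaner) difference is that you normalize $\hat f-\alpha$ rather than $\fNN$ itself when invoking the definition of $\mathcal{C}$, which also handles the degenerate case $\hat f=\alpha$ explicitly.
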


Combining the above two lemmas, we prove the theorem.
\begin{proof}[Proof of Theorem~\ref{thm:gd-lower-helper}]
By Lemma~\ref{lem:couple-gd-equi-with-junk}, we can couple $\btheta_{\alpha}^k$ with $\btheta^k_f$ so that $$\PP_{f \sim \mu_{\cF}, \btheta_{\alpha}^k, \btheta^k_f}[\btheta_{\alpha}^k \neq \btheta^k_f] \leq \frac{\eta R}{2 \tau} \sqrt{k \mathcal{C}(\bar\mu_{\cF}, \mu_{\cX})}.$$ So, for any $t > 0$,
\begin{align*}
\PP_{f \sim \mu_{\cF}, \btheta^k_f}[\ell_{\cD_f}(\btheta^k_f) \leq t] \leq \PP_{f \sim \mu_{\cF}, \btheta_{\alpha}^k}[\ell_{\cD_f}(\btheta_{\alpha}^k) \leq t] + \frac{\eta R}{2 \tau} \sqrt{k \mathcal{C}(\bar\mu_{\cF}, \mu_{\cX})},
\end{align*}
and the theorem follows by Lemma~\ref{lem:junk-stuck}.
\end{proof}

\subsubsection{Proofs of auxiliary lemmas}

\begin{proof}[Proof of Lemma~\ref{lem:couple-gd-equi-with-junk}]
For brevity, write $\btheta_{\alpha}^{\leq k} = (\btheta_{\alpha}^0,\ldots,\btheta_{\alpha}^k)$, and similarly for $\btheta^{\leq k}_{f}$. We bound the KL-divergence between the junk trajectory $\btheta_{\alpha}^{\leq k}$ and the trajectory $\btheta^{\leq k}_{f}$ by (a) using the chain rule for KL-divergence, (b) the fact that $\cL(\btheta_{\alpha}^0) = \cL(\btheta^0_{f}) = \mu_{\btheta}$, (c) the Markov property of GD training, (d) the definition of the update step \eqref{GD}, and (e) the KL divergence between two Gaussians:
\begin{align}
\KL(&\cL(\btheta_{\alpha}^{\leq k}) || \cL(\btheta^{\leq k}_{f})) \\
&\stackrel{(a)}{=} \KL(\cL(\btheta_{\alpha}^0) || \cL(\btheta^0_{f})) + \sum_{k'=1}^k \KL(\cL(\btheta_{\alpha}^{k'} | \btheta_{\alpha}^{\leq k'-1}) || \cL(\btheta^{k'}_{f} | \btheta^{\leq k'-1}_{f})) \nonumber \\
&\stackrel{(b)}{=} \sum_{k'=1}^k \KL(\cL(\btheta_{\alpha}^{k'} | \btheta_{\alpha}^{\leq k'-1}) || \cL(\btheta^{k'}_{f} | \btheta^{\leq k'-1}_{f})) \nonumber  \\
&\stackrel{(c)}{=} \sum_{k'=1}^k \KL(\cL(\btheta_{\alpha}^{k'} | \btheta_{\alpha}^{k'-1}) || \cL(\btheta^{k'}_{f} | \btheta^{k'-1}_{f})) \nonumber  \\
&\stackrel{(d)}{=} \sum_{k'=1}^k \E_{\btheta' \sim \cL(\btheta_{\alpha}^{k'-1})} [\KL(\cN(\btheta' - \eta \bg_{\Djunk}(\btheta'),\tau^2 \bI) || \cN(\btheta' - \eta \bg_{\cD_{f}}(\btheta'),\tau^2 \bI))] \nonumber  \\
&\stackrel{(e)}{=} \sum_{k'=1}^k \frac{\eta^2}{2\tau^2} \E_{\btheta' \sim \cL(\btheta_{\alpha}^{k'-1})} [\|\bg_{\Djunk}(\btheta') - \bg_{\cD_{f}}(\btheta')\|^2]. \label{eq:KL-intermediate-step}
\end{align}

In order to analyze this, let us simplify the following quantity. Throughout $\bx,\bx' \sim \mu_{\cX}$ are i.i.d. We use that (a) $\Djunk$ and $\cD_{f}$ have the same marginal distribution of $\bx \sim \mu_{\cX}$, (b) the definitions of $\Djunk$ and $\cD_{f}$,
\begin{align*}
\bg_{\Djunk}(\btheta') &- \bg_{\cD_{f}}(\btheta') \\
&= -\E_{(\bx,y) \sim \Djunk}[(y - \fNN(\bx;\btheta')) \Pi_{B(0,R)} \nabla_{\btheta} \fNN(\bx;\btheta')] \\
&\quad\quad + \E_{(\bx,y) \sim \cD_{f}}[(y - \fNN(\bx;\btheta'))\Pi_{B(0,R)}\nabla_{\btheta} \fNN(\bx;\btheta')] \\
&\stackrel{(a)}{=} -\E_{(\bx,y) \sim \Djunk}[y\Pi_{B(0,R)}\nabla_{\btheta} \fNN(\bx;\btheta')] + \E_{(\bx,y) \sim \cD_{f}}[y \Pi_{B(0,R)}\nabla_{\btheta} \fNN(\bx;\btheta')] \\
&\stackrel{(b)}{=} -\E_{\bx }[\alpha(\bx)\Pi_{B(0,R)}\nabla_{\btheta} \fNN(\bx;\btheta')] + \E_{\bx }[f(\bx) \Pi_{B(0,R)}\nabla_{\btheta} \fNN(\bx;\btheta')] \\
&= \E_{\bx }[(f(\bx) - \alpha(\bx))\Pi_{B(0,R)}\nabla_{\btheta} \fNN(\bx;\btheta')].
\end{align*}
Now let us draw $f \sim \mu_{\cF}$, and bound the expected KL divergence of $\btheta_{\alpha}^{\leq k}$ with $\btheta^{\leq k}_{f}$. By (a) plugging the above equation into \eqref{eq:KL-intermediate-step}, (b) using independence of $\btheta_{\alpha}^{k'-1}$ from $f$, (c) using the definition of $\mathcal{C}(\bar\mu_{\cF},\mu_{\cX})$   to bound each coordinate, and (d) using the fact that $\Pi_{B(0,R)}$ is projection to the ball of radius $R$,
\begin{align}
\E_{f}[\KL(\cL(\btheta_{\alpha}^{\leq k}) || \cL(\btheta^{\leq k}_{f}))] &\stackrel{(a)}{=} \E_{f}[\sum_{k'=1}^k \frac{\eta^2}{2\tau^2} \E_{\btheta' \sim \cL(\btheta_{\alpha}^{k'-1})} [\|\E_{\bx }[(f(\bx) - \alpha(\bx))\Pi_{B(0,R)}\nabla_{\btheta} \fNN(\bx;\btheta')]\|^2]] \nonumber \\
&\stackrel{(b)}{=} \sum_{k'=1}^k \frac{\eta^2}{2\tau^2} \E_{\btheta' \sim \cL(\btheta_{\alpha}^{k'-1})} [\E_{f}[\|\E_{\bx }[(f(\bx) - \alpha(\bx))\Pi_{B(0,R)}\nabla_{\btheta} \fNN(\bx;\btheta')]\|^2]] \nonumber \\
&\stackrel{(c)}{\leq} \sum_{k'=1}^k \frac{\eta^2}{2\tau^2} \E_{\btheta' \sim \cL(\btheta_{\alpha}^{k'-1})} [\mathcal{C}(\bar\mu_{\cF},\mu_{\cX}) \E_{\bx }[\|\Pi_{B(0,R)}\nabla_{\btheta} \fNN(\bx;\btheta')\|^2]] \nonumber \\
&\stackrel{(d)}{\leq} \frac{k \eta^2  R^2}{2\tau^2} \mathcal{C}(\bar\mu_{\cF},\mu_{\cX}). \label{eq:KL-bound}
\end{align}
Finally we apply (a) the data processing inequality for total variation distance, (b) Pinsker's inequality, (c) Jensen's inequality, and (d) the bound in \eqref{eq:KL-bound}:
\begin{align*}
\E_{f}[\TV(\cL(\btheta_{\alpha}^k),\cL(\btheta^k_{f}))] &\stackrel{(a)}{\leq} \E_{f}[\TV(\cL(\btheta_{\alpha}^{\leq k}),\cL(\btheta^{\leq k}_{f}))] \\
&\stackrel{(b)}{\leq} \E_{f}[\sqrt{\frac{1}{2}\KL(\cL(\btheta_{\alpha}^{\leq k}) || \cL(\btheta^{\leq k}_{f}))}] \\
&\stackrel{(c)}{\leq} \sqrt{\E_{f}[\frac{1}{2}\KL(\cL(\btheta_{\alpha}^{\leq k}) || \cL(\btheta^{\leq k}_{f}))]} \\
&\stackrel{(d)}{\leq} \frac{\eta L}{2\tau} \sqrt{k \mathcal{C}(\bar\mu_{\cF};\mu_{\cX})}.
\end{align*}

\end{proof}

Finally, we prove Lemma~\ref{lem:junk-stuck}, which is the last remaining lemma.

\begin{proof}
Define $\rho = \E_{\bx}[(f(\bx) - \alpha(\bx)) \fNN(\bx; \btheta_{\alpha}^k)]$. Recall the junk trajectory $\btheta_{\alpha}^k$ is drawn independently of $f \sim \mu_{\cF}$, so, by definition of $\mathcal{C}(\bar\mu_{\cF},\mu_{\cX})$,
\begin{align*}
\E_{f}[\rho^2] \leq \mathcal{C}(\bar\mu_{\cF},\mu_{\cX}) \E_{\bx}[\fNN(\bx; \btheta_{\alpha}^k)^2].
\end{align*}
Let $E$ be the event that $\rho^2 \leq \eps\E_{\bx}[\fNN(\bx; \btheta_{\alpha}^k)^2]$. By a Markov bound, $\PP[E] \geq 1 - \mathcal{C}(\bar\mu_{\cF},\mu_{\cX}) / \eps$. Finally, under event $E$ we have
\begin{align*}
\ell_{\cD_f}(\btheta_{\alpha}^k) &= \E_{\bx}[(f(\bx) - \fNN(\bx; \btheta_{\alpha}^k))^2] \\
&= \E_{\bx}[(f(\bx) - \alpha(\bx))^2] - 2 \E_{\bx}[(f(\bx) - \alpha(\bx)) (\fNN(\bx; \btheta_{\alpha}^k) - \alpha(\bx))] + \E_{\bx}[(\fNN(\bx; \btheta_{\alpha}^k) - \alpha(\bx))^2] \\
&= \E_{\bx}[(f(\bx) - \alpha(\bx))^2] - 2 \rho + \E_{\bx}[(\fNN(\bx; \btheta_{\alpha}^k) - \alpha(\bx))^2] \\
&\geq \E_{\bx}[(f(\bx) - \alpha(\bx))^2] - 2 \sqrt{\eps} \sqrt{\E_{\bx}[(\fNN(\bx; \btheta_{\alpha}^k) - \alpha(\bx))^2]} + \E_{\bx}[(\fNN(\bx; \btheta_{\alpha}^k) - \alpha(\bx))^2] \\
&\geq \E_{\bx}[(f(\bx) - \alpha(\bx))^2] - \eps \\
&= \|f - \alpha\|_{\Capitalltwo(\mu_{\cX})}^2 - \eps
\end{align*}
where in the last line we optimize over the quantity $\E_{\bx}[(\fNN(\bx; \btheta_{\alpha}^k) - \alpha(\bx))^2]$.
\end{proof}

\subsection{Remark: relation to bound based on cross-predictability, and efficiently verifying $G$-alignment is small}\label{app:cross-pred}

In \cite{abbe2020poly}, a similar bound to Theorem~\ref{thm:gd-lower-helper} was proved. The first main difference is that bound of \cite{abbe2020poly} bound applied only to learning functions with binary output alphabet, $\{+1,-1\}$. The second difference is that \cite{abbe2020poly} clips the gradient of the loss instead of clipping the gradient of the network. The third difference is that the bound of \cite{abbe2020poly} was in terms of the cross-predictability, instead of the $G$-alignment of Definition~\ref{def:g-alignment-gen}:

\begin{definition}[Cross-predictability]
For any distribution over the inputs $\mu_{\cX} \in \cP(\cX)$ and any distribution over functions $\mu_{\cF} \in \cP(\Capitalltwo(\mu_{\cX}))$, the cross-predictability is
\begin{align*}
    \mathcal{CP}(\mu_{\cF},\mu_{\cX}) = \E_{f,f' \sim \mu_{\cF}}[\<f,f'\>^2_{\Capitalltwo(\mu_{\cX})}].
\end{align*}
\end{definition}
Nevertheless, we show that the cross-predictability is an upper bound on the alignment.
\begin{lemma}\label{lem:alignment-cross-pred} For any $\mu_{\cX}$ and $\mu_{\cF}$, we have
$\mathcal{C}(\mu_{\cF}, \mu_{\cX}) \leq \sqrt{\mathcal{CP}(\mu_{\cF},\mu_{\cX})}$.
\end{lemma}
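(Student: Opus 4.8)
The plan is to recognize both $\mathcal{C}$ and $\mathcal{CP}$ as norms of a single operator and then invoke the elementary fact that for a positive semidefinite operator the operator norm is at most the Hilbert--Schmidt norm. Concretely, I would introduce the second-moment operator $M : \Capitalltwo(\mu_{\cX}) \to \Capitalltwo(\mu_{\cX})$ defined by $M h = \E_{f \sim \mu_{\cF}}[\langle f, h\rangle f]$, i.e.\ $M = \E_f[f \otimes f]$, where $f \otimes f$ is the rank-one operator $h \mapsto \langle f, h\rangle f$. This $M$ is self-adjoint and positive semidefinite, and a one-line computation gives $\langle h, M h\rangle = \E_f[\langle f, h\rangle^2]$, so $\mathcal{C}(\mu_{\cF}, \mu_{\cX}) = \sup_{\|h\|=1}\langle h, Mh\rangle = \|M\|_{\mathrm{op}} = \lambda_{\max}(M)$. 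On the other side, using $(f \otimes f)(f' \otimes f')h = \langle f', h\rangle\langle f, f'\rangle f$ one gets $\trace(M^2) = \E_{f,f'}[\langle f,f'\rangle^2] = \mathcal{CP}(\mu_{\cF},\mu_{\cX})$, i.e.\ $\mathcal{CP} = \|M\|_{\mathrm{HS}}^2 = \sum_i \lambda_i(M)^2$. Since $M \succeq 0$, every eigenvalue is nonnegative, so $\lambda_{\max}(M)^2 \le \sum_i \lambda_i(M)^2$, which is exactly $\mathcal{C}^2 \le \mathcal{CP}$, as desired.

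If one prefers to avoid operator language, the same proof is two applications of Cauchy--Schwarz. Fix $h$ with $\|h\| = 1$. First, $\E_f[\langle f, h\rangle^2] = \langle h, \E_f[\langle f,h\rangle f]\rangle \le \|\E_f[\langle f, h\rangle f]\|$. Second, expanding the norm with $f, f'$ drawn i.i.d.\ from $\mu_{\cF}$ and applying Cauchy--Schwarz over the product measure, $\|\E_f[\langle f,h\rangle f]\|^2 = \E_{f,f'}[\langle f,h\rangle\langle f',h\rangle\langle f,f'\rangle] \le \sqrt{\E_{f,f'}[\langle f,h\rangle^2\langle f',h\rangle^2]}\,\sqrt{\E_{f,f'}[\langle f,f'\rangle^2]} = \E_f[\langle f,h\rangle^2]\sqrt{\mathcal{CP}}$, using independence of $f$ and $f'$. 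Chaining the two bounds gives $(\E_f[\langle f,h\rangle^2])^2 \le \E_f[\langle f,h\rangle^2]\sqrt{\mathcal{CP}}$, hence $\E_f[\langle f,h\rangle^2] \le \sqrt{\mathcal{CP}}$; taking the supremum over unit $h$ yields the lemma.

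The only point requiring care is well-definedness of the Bochner integral $\E_f[\langle f, h\rangle f]$ (equivalently, of $M$) as a genuine element of $\Capitalltwo(\mu_{\cX})$ in the possibly infinite-dimensional setting. This is automatic once $\mathcal{CP}(\mu_{\cF},\mu_{\cX}) < \infty$, since then $M$ is Hilbert--Schmidt and in particular bounded; and if $\mathcal{CP} = \infty$ the inequality is vacuous, so one may assume $\mathcal{CP} < \infty$ without loss of generality. I do not expect a genuine obstacle here — the mathematical content is just $\lambda_{\max} \le \|\cdot\|_{\mathrm{HS}}$ for PSD operators, so the work is entirely in setting up the two identities $\mathcal{C} = \|M\|_{\mathrm{op}}$ and $\mathcal{CP} = \|M\|_{\mathrm{HS}}^2$.
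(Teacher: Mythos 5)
Your proof is correct, and both of your variants prove the right inequality; the difference from the paper is one of framing rather than substance, but it is a real difference in route. The paper's argument is a single Cauchy--Schwarz over the \emph{input} variables: it tensorizes to write $\E_{f}[\langle f,h\rangle^2]$ as the pairing of $h\otimes h$ with the kernel $K(\bx,\bx')=\E_{f}[f(\bx)f(\bx')]$ in $\Capitalltwo(\mu_{\cX}\otimes\mu_{\cX})$, and bounds it by $\|K\|\,\|h\|^2$, where $\|K\|^2=\mathcal{CP}$. You instead work over the \emph{function} variables: either spectrally, via the second-moment operator $M=\E_f[f\otimes f]$ and the identities $\mathcal{C}=\|M\|_{\mathrm{op}}$, $\mathcal{CP}=\trace(M^2)=\|M\|_{\mathrm{HS}}^2$, or by two Cauchy--Schwarz steps, the second taken over independent $f,f'\sim\mu_{\cF}$. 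At heart both arguments are the inequality $\|M\|_{\mathrm{op}}\le\|M\|_{\mathrm{HS}}$ (the paper simply realizes $\|M\|_{\mathrm{HS}}$ as the $\Capitalltwo$-norm of its integral kernel), so neither is more general, but your operator formulation is more conceptual: it makes transparent when the bound is tight (spectrum concentrated on one eigenvalue) versus loose (flat spectrum, which is exactly what happens in the paper's sparse-parity example where $\mathcal{C}=\binom{d}{k}^{-1}\ll\binom{d}{k}^{-1/2}$), and your elementary two-step version yields the per-$h$ bound $\E_f[\langle f,h\rangle^2]\le\sqrt{\mathcal{CP}}\,\|h\|^2$ just as the paper does. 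One small caveat: your remark that $\mathcal{CP}<\infty$ makes $M$ Hilbert--Schmidt is slightly circular as stated, since one must first define $M$ (e.g.\ weakly, or under a moment condition such as $\E_f\|f\|^2<\infty$) before speaking of its norms; but this is exactly the level of unaddressed integrability (Fubini interchanges) present in the paper's own proof, and your second, scalar version needs nothing beyond that, so it is a cosmetic rather than genuine gap.
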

\begin{proof}
For any $h  \in \Capitalltwo(\mu_{\cX})$, we show
$\E_{f \sim \mu_{\cF}}[\<f, h\>^2] \leq \sqrt{\mathcal{CP}(\mu_{\cF}, \mu_{\cX})} \|h\|_{\Capitalltwo(\mu_{\cX})}^2$. In the following, let $f,f' \sim \mu_{\cF}$ and $\bx,\bx' \sim \mu_{\cX}$ be independent. By (a) a tensorization trick for $\bx$, (b) Cauchy-Schwarz, (c) the tensorization trick for $f$, and (d) reversing the tensorization trick for $\bx$,
\begin{align*}
\E_{f \sim \mu_{\cF}}[\<f, h\>^2] &= \E_{f}[\E_{\bx}[f(\bx)h(\bx)]^2] \\
&\stackrel{(a)}{=} \E_{f,\bx,\bx'}[f(\bx)f(\bx')h(\bx)h(\bx')] \\
&= \E_{\bx,\bx'}[\E_f[f(\bx)f(\bx')]h(\bx)h(\bx')] \\
&\stackrel{(b)}{\leq} \sqrt{\E_{\bx,\bx'}[\E_f[f(\bx)f(\bx')]^2]}\sqrt{\E_{\bx,\bx'}[h(\bx)^2h(\bx')^2]} \\
&= \sqrt{\E_{\bx,\bx'}[\E_f[f(\bx)f(\bx')]^2]}\E_{\bx}[h(\bx)^2] \\
&\stackrel{(c)}{=} \sqrt{\E_{\bx,\bx',f,f'}[f(\bx)f(\bx')f'(\bx)f'(\bx')]}\E_{\bx}[h(\bx)^2] \\
&\stackrel{(d)}{=} \sqrt{\E_{f,f'}[\E_{\bx}[f(\bx)f'(\bx)]^2]}\E_{\bx}[h(\bx)^2] \\
&= \sqrt{\mathcal{CP}(\mu_{\cF},\mu_{\cX})} \|h\|^2_{\Capitalltwo(\mu_{\cX})}.
\end{align*}
\end{proof}

Interestingly, the above lemma provides an algorithmically-efficient way to verify that the $G$-alignment of a function is small. Namely, the ``$G$-cross-predictability'', is an upper bound on the $G$-alignment.

\begin{corollary}[Efficiently verifying $G$-alignment is small]\label{cor:galignmentvsgcp}
For any compact group $G$, any $G$-invariant distribution $\mu_{\cX} \in \cP(\cX)$, and any $\mu_{\cF} \in \cP(\Capitalltwo(\mu_{\cX}))$, we have
\begin{align*}
\mathcal{C}((\mu_{\cF},\mu_{\cX}); G) \leq \sqrt{\mathcal{CP}((\mu_{\cF}, \mu_X); G)} := \sqrt{\E_{f,f' \sim \mu_{\cF}, g,g' \sim \mu_G}[\E_{\bx \sim \mu_{\cX}}[f(g(\bx))f'(g'(\bx))]^2]}
\end{align*}
\end{corollary}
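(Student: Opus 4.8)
The plan is to deduce the corollary from Lemma~\ref{lem:alignment-cross-pred} (the non-equivariant bound) by recognizing that the $G$-alignment of $\mu_{\cF}$ is just the ordinary alignment of an appropriately ``group-averaged'' distribution of functions. Concretely, let $\mu_G$ be the Haar probability measure on $G$ and define $\nu_{\cF} \in \cP(\Capitalltwo(\mu_{\cX}))$ to be the law of $f \circ g$ where $f \sim \mu_{\cF}$ and $g \sim \mu_G$ are drawn independently — exactly the construction already used inside the proof of Theorem~\ref{thm:gd-lower-general}. This is well-defined: for any fixed $g \in G$ we have $\|f \circ g\|_{\Capitalltwo(\mu_{\cX})}^2 = \E_{\bx \sim \mu_{\cX}}[f(g(\bx))^2] = \E_{\bx \sim \mu_{\cX}}[f(\bx)^2] = \|f\|_{\Capitalltwo(\mu_{\cX})}^2 < \infty$, using the $G$-invariance of $\mu_{\cX}$, so $\nu_{\cF}$ is indeed supported on $\Capitalltwo(\mu_{\cX})$.

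Next I would simply match definitions. For any $h \in \Capitalltwo(\mu_{\cX})$ with $\|h\|^2 = 1$, by the tower property (Fubini over the independent draws of $f$ and $g$),
\begin{align*}
\E_{g \sim \mu_G}[\E_{f \sim \mu_{\cF}}[\<f \circ g, h\>^2]] = \E_{F \sim \nu_{\cF}}[\<F, h\>^2],
\end{align*}
so taking the supremum over such $h$ gives $\mathcal{C}((\mu_{\cF}, \mu_{\cX}); G) = \mathcal{C}(\nu_{\cF}, \mu_{\cX})$. Applying Lemma~\ref{lem:alignment-cross-pred} to $\nu_{\cF}$ then yields $\mathcal{C}(\nu_{\cF}, \mu_{\cX}) \leq \sqrt{\mathcal{CP}(\nu_{\cF}, \mu_{\cX})}$. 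Finally I would unfold $\mathcal{CP}(\nu_{\cF}, \mu_{\cX})$: sampling two independent copies $F = f \circ g$ and $F' = f' \circ g'$ with $f, f' \sim \mu_{\cF}$ and $g, g' \sim \mu_G$ all independent, we get $\<F, F'\>_{\Capitalltwo(\mu_{\cX})} = \E_{\bx \sim \mu_{\cX}}[f(g(\bx)) f'(g'(\bx))]$, hence
\begin{align*}
\mathcal{CP}(\nu_{\cF}, \mu_{\cX}) = \E_{f, f' \sim \mu_{\cF},\, g, g' \sim \mu_G}[\E_{\bx \sim \mu_{\cX}}[f(g(\bx)) f'(g'(\bx))]^2] = \mathcal{CP}((\mu_{\cF}, \mu_{X}); G),
\end{align*}
which is precisely the quantity in the statement. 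Chaining the two displays proves the corollary.

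I do not expect any genuine obstacle here; the only points requiring a modicum of care are (i) checking that $\nu_{\cF}$ is a bona fide element of $\cP(\Capitalltwo(\mu_{\cX}))$, which the $G$-invariance of $\mu_{\cX}$ handles, and (ii) the measurability/integrability bookkeeping needed to invoke Fubini when identifying $\E_{g}\E_f[\cdots]$ with $\E_{F \sim \nu_{\cF}}[\cdots]$ and when expanding $\mathcal{CP}(\nu_{\cF},\mu_{\cX})$ — all of which is routine given compactness of $G$ and $f \in \Capitalltwo(\mu_{\cX})$. The substantive content has already been done in Lemma~\ref{lem:alignment-cross-pred}; the corollary is a packaging of it.
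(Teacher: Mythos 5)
Your proposal is correct and matches the paper's intended argument: the corollary is stated without explicit proof as an immediate consequence of Lemma~\ref{lem:alignment-cross-pred}, applied to precisely the group-averaged distribution $\nu_{\cF}$ of $f\circ g$ that the paper already constructs in the proof of Theorem~\ref{thm:gd-lower-general}. Your identification $\mathcal{C}((\mu_{\cF},\mu_{\cX});G)=\mathcal{C}(\nu_{\cF},\mu_{\cX})$ and the unfolding of $\mathcal{CP}(\nu_{\cF},\mu_{\cX})$ are exactly the intended bookkeeping.
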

The right-hand-side in this inequality can be approximated efficiently by taking a large enough empirical sample of $f,f'$ and $g,g'$ and $\bx$. By McDiarmid's inequality, the plug-in empirical estimate will concentrate well for large enough number of samples assuming that the functions $f \sim \mu_{\cF}$ are bounded. Thus, given a distribution of functions $\mu_{\cF}$ and covariates $\mu_{\cX}$, one can empirically show that no $G$-equivariant network can learn them.

However, Corollary~\ref{cor:galignmentvsgcp} is loose, which is why we prove our bounds in terms of the $G$-alignment instead of the $G$-cross-predictability, and means that we can obtain tighter bounds than \cite{abbe2020poly} for learning sparse parities (although with a different scheme for clipping the gradients).
\begin{lemma}
Let $\mu_{\cX} = \mathrm{Unif}[\cH_d]$. Let $k \in \{0,\ldots,d\}$, and let $\mu_{\cF}$ be the uniform distribution on $\{\chi_S : S \subseteq [d], |S| = k\}$, where $\chi_S : \cH_d \to \{+1,-1\}$ is given by $\chi_S(\bx) = \prod_{i \in S} x_i$.
Then
\begin{align*}
\binom{d}{k}^{-1} = \mathcal{C}(\mu_{\cF}, \mu_{\cX}) \ll \sqrt{\mathcal{CP}(\mu_{\cF}, \mu_{\cX})} = \binom{d}{k}^{-1/2}
\end{align*}
\end{lemma}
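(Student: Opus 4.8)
The plan is to read off the displayed chain of relations as three elementary Fourier-analytic computations, using that $\mu_{\cF}$ is the uniform distribution over the orthonormal parity characters $\{\chi_S : S \subseteq [d],\ |S| = k\}$ in $\Capitalltwo(\cH_d)$ (recall $\<\chi_S,\chi_{S'}\>_{\Capitalltwo(\cH_d)} = \delta_{S,S'}$).

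First I would compute the cross-predictability. Drawing $S,S'$ independently and uniformly among the $\binom{d}{k}$ size-$k$ subsets of $[d]$, orthonormality gives $\<\chi_S,\chi_{S'}\>^2 = \delta_{S,S'}$, so
\[
\mathcal{CP}(\mu_{\cF},\mu_{\cX}) = \E_{S,S'}[\delta_{S,S'}] = \PP_{S,S'}[S = S'] = \binom{d}{k}^{-1},
\]
hence $\sqrt{\mathcal{CP}(\mu_{\cF},\mu_{\cX})} = \binom{d}{k}^{-1/2}$. Next the alignment (here $G$ is trivial, so this is the quantity $\mathcal{C}(\mu_{\cF},\mu_{\cX})$ of the definition preceding Theorem~\ref{thm:gd-lower-helper}): expanding a unit-norm test function as $h = \sum_{T \subseteq [d]} \hat h(T) \chi_T$ with $\sum_T \hat h(T)^2 = 1$, we get $\<\chi_S,h\> = \hat h(S)$ and therefore
\[
\E_{f \sim \mu_{\cF}}[\<f,h\>^2] = \binom{d}{k}^{-1} \sum_{S : |S| = k} \hat h(S)^2 \leq \binom{d}{k}^{-1},
\]
with equality whenever $h$ is supported on the level-$k$ coefficients, e.g.\ $h = \chi_{S_0}$ for any fixed $|S_0| = k$. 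Taking the supremum yields $\mathcal{C}(\mu_{\cF},\mu_{\cX}) = \binom{d}{k}^{-1}$; this mirrors the level-by-level computation in the proof of Lemma~\ref{lem:technical-symm-bool}.

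Finally, the middle relation $\binom{d}{k}^{-1} \ll \binom{d}{k}^{-1/2}$ is just the observation that $\binom{d}{k}^{-1}/\binom{d}{k}^{-1/2} = \binom{d}{k}^{-1/2} \to 0$ whenever $\binom{d}{k} \to \infty$, which holds for any fixed $1 \leq k \leq d-1$ as $d \to \infty$ since then $\binom{d}{k} \geq d$ (for $k \in \{0,d\}$ the two quantities coincide and there is nothing to separate). I do not expect any real obstacle here: the lemma is a sanity check showing that the inequality $\mathcal{C} \leq \sqrt{\mathcal{CP}}$ of Lemma~\ref{lem:alignment-cross-pred} can be quadratically loose, and the only point deserving care is flagging the asymptotic regime in which ``$\ll$'' is meaningful.
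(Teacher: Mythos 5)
Your proof is correct and follows essentially the same route as the paper's: the cross-predictability is computed from orthonormality of the parities ($\E_{S,S'}[\delta_{S,S'}] = \binom{d}{k}^{-1}$), and the alignment via Fourier expansion of $h$ and Parseval, giving $\binom{d}{k}^{-1}\sum_{|S|=k}\hat h(S)^2 \leq \binom{d}{k}^{-1}$ with equality at a level-$k$ character. Your explicit remarks on when the inequality is achieved and on the asymptotic meaning of ``$\ll$'' are minor elaborations the paper leaves implicit, not a different argument.
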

\begin{proof}
Let $S,S' \subseteq [d]$ be independent subsets of size $|S| = |S'| = k$. The cross-predictability is
\begin{align*}
\mathcal{CP}(\mu_{\cF}, \mu_{\cX}) = \E_{S,S'}[\<\chi_S, \chi_{S'}\>^2_{\Capitalltwo(\cH_d)}] = \E_{S,S'}[\delta_{S,S'}] = \binom{d}{k}^{-1}.
\end{align*}
On the other hand, the alignment of $\mu_{\cF}$ is:
\begin{align*}
\mathcal{C}(\mu_{\cF}, \mu_{\cX}) = \sup_{h} \E_{S}[\<\chi_S, h\>^2] = \sup_h \binom{d}{k}^{-1} \sum_{S'' \subseteq [d], |S''| = k} \hat{h}(S'')^2 \leq \binom{d}{k}^{-1},
\end{align*}
where the last line is by Parseval's theorem.
\end{proof}

\subsection{Remark: alternative proof using the statistical query lens}\label{app:sq}
It is possible to prove the impossibility results for learning with GD by using arguments from the Statistical Query literature (see e.g., \cite{kearns1998efficient,reyzin2020statistical}). In particular, \eqref{GD} fits under the Correlational Statistical Query (CSQ) model of computation because it accesses the data distribution $\cD$ only through noisy evaluations of $\bg_{\cD}(\btheta)$, where $$\bg_{\cD}(\btheta) = \E_{(\bx,y) \sim \cD}[y (\Pi_{B(0,R)} \nabla_{\btheta}\fNN(\bx;\btheta))] - \E_{(\bx,y) \sim \cD}[\fNN(\bx;\btheta) (\Pi_{B(0,R)}  \nabla_{\btheta}\fNN(\bx;\btheta))],$$ for different $\btheta^k$. The left-hand term is an expectation over the distribution $\cD$ of a bounded function times $y$, since $\|\Pi_{B(0,R)} \nabla_{\btheta}\fNN(\bx;\btheta)\|^2 \leq R^2$. And the right-hand term is known since we may assume the distribution $\mu_{\cX}$ is known. Therefore each noisy evaluation of $\bg_{\cD}(\btheta)$ can be made with one CSQ \cite{ben1990learning,bshouty2002using}.\footnote{Although correlational statistical queries are typically defined with adversarially-chosen noise, analogous correlational statistical query lower bounds can be proved with random noise (see, e.g., Section~3.2 of \cite{boix2020}).} 

We provide a sketch of the proof of the impossibility result using the CSQ formalism. By the $G$-equivariance of the GD algorithm, if GD can learn $f \in \Capitalltwo(\mu_{\cX})$ via statistical queries on the distribution $\cD_f$, then for any $g \in G$ it can learn any $f \circ g$ via statistical queries on the distribution $\cD_{f \circ g}$. But, if $\{\cD_{f \circ g}\}_{g \in G}$ has high CSQ dimension then this is impossible, and therefore this proves that GD cannot efficiently learn the function $f$. The high CSQ dimension can ensured if the $G$-alignment of Definition~\ref{def:g-alignment} is small. For further comparison between lower bounds based on CSQ and lower bounds based on a junk-flow argument, see \cite{abbe2020poly}.

\section{Characterization of weak-learnability by GD}\label{app:weak-learnability}

In this section, we give the deferred proofs from Section~\ref{sec:characterize}, characterizing weak-learnability for functions on the Boolean hypercube and functions on the unit sphere by FC networks with i.i.d. symmetric and i.i.d. initialization, respectively.

\subsection{Functions on Boolean hypercube: proof of Theorem~\ref{thm:bool-weak-char}}\label{app:bool-char}
We first prove Theorem~\ref{thm:bool-weak-char}, which concerns functions on the hypercube, learned by FC networks with i.i.d. symmetric initialization. We show the impossibility and achievability results separately.

Recall that for functions $f : \cH_d \to \R$ we may write $$f(\bx) = \sum_{S \subseteq [d]} \hat{f}(S) \chi_S(\bx),$$ where for each
 $S \subseteq [d]$, we define the monomial $\chi_S : \cH_d \to \{+1,-1\}$ by $$\chi_S(\bx) = \prod_{i \in S} x_i$$ and  the Fourier coefficient $$\hat{f}(S) = \E_{\bx \sim \cH_d}[f(\bx)\chi_S(\bx)].$$ These monomial functions form an orthogonal basis over the Boolean hypercube, which we will use throughout this section, i.e., for any $S,S' \subseteq [d]$, $$\E_{\bx \sim \cH_d}[\chi_S(\bx)\chi_{S'}(\bx)] = \delta_{S,S'}.$$

\subsubsection{Proof of impossibility result of Theorem~\ref{thm:bool-weak-char}}
The impossibility result is proved using Theorem~\ref{thm:gd-lower}, and the fact from Proposition~\ref{prop:equi} that GD-training of FC networks with i.i.d. symmetric initialization is $\Gsignperm$-equivariant. First, recall the computation of the $\Gsignperm$-alignment.
\begin{lemma}[Restatement of Lemma~\ref{lem:technical-symm-bool}]\label{lem:technical-symm-bool-restated}
For any $f : \cH_d \to \R$,
\begin{align*}
\mathcal{C}((f, \cH_d); \Gsignperm) = \max_{k \in [d]}  \binom{d}{k}^{-1} \sum_{\substack{S \subseteq [d] \\ |S| = k}} \hat{f}(S)^2.
\end{align*}
\end{lemma}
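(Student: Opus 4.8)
The plan is to evaluate the supremum defining the $\Gsignperm$-alignment directly, by expanding everything in the parity basis and exploiting the orthogonality of characters under sign flips. Write a generic group element as $g=(\bs,\sigma)$ with $\bs\sim\cH_d$ and $\sigma\sim G_{perm}$ drawn independently from the uniform (Haar) measure, so that $g(\bx)=\bs\odot\sigma(\bx)$ and hence $\chi_S(g(\bx))=\chi_S(\bs)\,\chi_S(\sigma(\bx))$ for every $S\subseteq[d]$. Fix any test function $h:\cH_d\to\R$ with $\|h\|_{\Capitalltwo(\cH_d)}=1$. First I would tensorize: letting $\bx'$ be an independent copy of $\bx$,
\begin{align*}
\E_g\big[\E_\bx[f(g(\bx))h(\bx)]^2\big]=\E_{\bs,\sigma,\bx,\bx'}\big[f(g(\bx))\,f(g(\bx'))\,h(\bx)\,h(\bx')\big].
\end{align*}
Expanding $f=\sum_S\hat f(S)\chi_S$ on both factors and taking the expectation over $\bs$ first, the identity $\E_\bs[\chi_S(\bs)\chi_{S'}(\bs)]=\delta_{S,S'}$ kills all off-diagonal terms, collapsing the double Fourier sum to a single sum over $S\subseteq[d]$.

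Second, I would undo the tensorization on the remaining $(\sigma,\bx,\bx')$-expectation, using $\E_\bx[h(\bx)\chi_S(\sigma(\bx))]=\hat h(\sigma(S))$ (with $\sigma$ acting on subsets by $\sigma(S)=\{\sigma(i):i\in S\}$), to obtain
\begin{align*}
\E_g\big[\E_\bx[f(g(\bx))h(\bx)]^2\big]=\sum_{S\subseteq[d]}\hat f(S)^2\,\E_\sigma\big[\hat h(\sigma(S))^2\big].
\end{align*}
Since $\sigma(S)$ is uniform over the $\binom{d}{|S|}$ subsets of size $|S|$, we have $\E_\sigma[\hat h(\sigma(S))^2]=\binom{d}{|S|}^{-1}\sum_{|T|=|S|}\hat h(T)^2$, and grouping the sum over $S$ by its cardinality $k$ gives
\begin{align*}
\E_g\big[\E_\bx[f(g(\bx))h(\bx)]^2\big]=\sum_{k}\Big(\binom{d}{k}^{-1}\!\!\sum_{|S|=k}\hat f(S)^2\Big)\Big(\sum_{|T|=k}\hat h(T)^2\Big).
\end{align*}

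Finally comes the optimization over $h$: set $a_k=\sum_{|T|=k}\hat h(T)^2\ge0$ and $b_k=\binom{d}{k}^{-1}\sum_{|S|=k}\hat f(S)^2$. Parseval gives $\sum_k a_k=\|h\|^2=1$, so the quantity above is a convex combination $\sum_k a_k b_k\le\max_k b_k$, with equality attained by putting all the Fourier mass of $h$ on a level $k^\star$ achieving the maximum, e.g.\ $h=\chi_{S^\star}$ for any single $S^\star$ with $|S^\star|=k^\star$. Taking the supremum over $h$ therefore yields $\mathcal{C}((f,\cH_d);\Gsignperm)=\max_k b_k$, which is the claimed formula. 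The whole computation is mechanical; the only step requiring any care is this last one — observing that the objective depends on $h$ only through its per-level energies $(a_k)$, which reduces the functional supremum to maximizing a linear functional over the probability simplex — and I do not foresee a genuine obstacle.
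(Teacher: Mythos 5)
Your proposal is correct and follows essentially the same route as the paper's proof: tensorize with an independent copy, expand $f$ in the parity basis, use $\E_{\bs}[\chi_S(\bs)\chi_{S'}(\bs)]=\delta_{S,S'}$ to collapse the double sum, average over $\sigma$ to get the per-level energies of $h$, and conclude by maximizing a convex combination, attained at $h=\chi_{S^\star}$. The only difference is that you spell out the final optimization over the simplex slightly more explicitly than the paper does; the substance is identical.
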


\begin{remark}[Alternative proof of Lemma~\ref{lem:technical-symm-bool-restated} using the Schur orthogonality theorem]
The same representation theory approach used to prove Lemma~\ref{lem:rotation-sphere-alignment-old} for the $\Grot$-alignment could have been used to prove Lemma~\ref{lem:technical-symm-bool-restated} for $\Gsignperm$, instead of the ad hoc calculation. One would decompose the functions over the Boolean hypercube as $\Capitalltwo(\cH_d) = \oplus_{l=0}^{d} \cW_{d,l}$, where $\cW_{d,l}$ is the subspace of homogeneous degree-$l$ polynomials. The representations of $\Gsignperm$ given by $\Phi_l(g) : \cW_{d,l} \to \cW_{d,l}$ mapping $\Phi_l(g)f = f \circ g^{-1}$ are all irreducible and inequivalent for distinct $l \neq l'$, and $\dim(\cW_{d,l}) = \binom{d}{l}$. 
\end{remark}

Using the previous lemma, we may relate a high $\Gsignperm$-alignment to the existence of a non-negligible Fourier coefficient of $O(1)$ or $d - O(1)$ degree.
\begin{lemma}\label{lem:bool-weak-char-connection}
Suppose $\{f_d\}_{d \in \NN}$ is such that $\mathcal{C}((f_d, \cH_d); \Gsignperm) \geq \Omega(d^{-C})$ for some constant $C > 0$. Then there is a constant $C' > 0$ such that for each $d$ there is $S_d \subseteq [d]$ with $|S_d| \leq C'$ or $|S_d| \geq d - C'$ and $|\hat{f}_d(S_d)| \geq \Omega(d^{-C'})$.
\end{lemma}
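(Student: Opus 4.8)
The plan is to read the claim off directly from the exact formula for the $\Gsignperm$-alignment in Lemma~\ref{lem:technical-symm-bool-restated}, using two elementary facts: a Fourier ``shell'' of bounded depth $\min(|S|,d-|S|)$ contains only polynomially many monomials, whereas a shell of unbounded depth contains super-polynomially many, so the latter cannot carry a polynomially-large average squared coefficient once we normalize $\|f_d\|_{\Capitalltwo(\cH_d)} \le 1$. This normalization, inherited from the hypothesis of Theorem~\ref{thm:bool-weak-char}, is essential rather than cosmetic: $f_d = \sum_{|S| = \lfloor d/2\rfloor}\chi_S$ has $\Gsignperm$-alignment equal to $1$ yet has no low-degree or high-degree coefficients at all.

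First I would unpack the hypothesis. By Lemma~\ref{lem:technical-symm-bool-restated}, $\mathcal{C}((f_d, \cH_d); \Gsignperm) \ge \Omega(d^{-C})$ means there are constants $c, C, d_0 > 0$ such that for every $d \ge d_0$ some degree $k_d \in \{0,\ldots,d\}$ satisfies $\binom{d}{k_d}^{-1}\sum_{|S| = k_d}\hat f_d(S)^2 \ge c\, d^{-C}$. Write $j_d := \min(k_d, d - k_d)$, so $\binom{d}{k_d} = \binom{d}{j_d}$. By Parseval and $\|f_d\| \le 1$ we have $\sum_{|S| = k_d}\hat f_d(S)^2 \le 1$, hence $\binom{d}{j_d} \le c^{-1}d^{C}$ is polynomially bounded in $d$. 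The next step is to deduce $j_d = O(1)$: using $\binom{d}{j} \ge (d/j)^j$ together with the monotonicity of $\binom{d}{j}$ in $j$ on $\{0,\ldots,\lfloor d/2\rfloor\}$, if $j_d$ were unbounded along a subsequence then along it $\log\binom{d}{j_d} = \omega(\log d)$ (treating the cases $j_d \le \sqrt d$ and $j_d > \sqrt d$ separately, bounding the binomial below by $\binom{d}{\lceil\sqrt d\rceil}$ in the latter), contradicting the polynomial bound. So there is a constant $C_1$ with $j_d \le C_1$ for all large $d$, and after enlarging $C_1$ we may take this to hold for all $d$.

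Finally I would extract a single large coefficient by averaging: $\max_{|S| = k_d}\hat f_d(S)^2 \ge \binom{d}{k_d}^{-1}\sum_{|S| = k_d}\hat f_d(S)^2 \ge c\, d^{-C}$, so picking $S_d$ attaining this maximum gives $|\hat f_d(S_d)| \ge \sqrt c\, d^{-C/2}$ while $\min(|S_d|, d - |S_d|) = j_d \le C_1$, i.e.\ $|S_d| \le C_1$ or $|S_d| \ge d - C_1$. Taking $C' = \max(C_1, C/2)$ then yields exactly the asserted conclusion. I do not expect a genuine obstacle here; the only points needing care are the super-polynomial lower bound on binomial coefficients (and the attendant care with ``for all large $d$'' versus passing to subsequences) and the essential use of the normalization $\|f_d\| \le 1$.
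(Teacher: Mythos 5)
Your proof is correct, and it takes a mildly different route from the paper's. The paper upper-bounds the $\Gsignperm$-alignment by the full weighted sum $\sum_S \hat f_d(S)^2\binom{d}{|S|}^{-1}$, splits it at depth $\min(|S|,d-|S|)\le C$ (reusing the constant $C$ from the hypothesis as the cutoff), kills the deep part via $\binom{d}{C+1}^{-1}\le O(d^{-C-1})$ together with Parseval, and then averages over the $O(d^C)$ shallow sets, obtaining a coefficient of size $\Omega(d^{-C})$ and taking $C'=2C$. You instead keep the alignment formula from Lemma~\ref{lem:technical-symm-bool-restated} exact, isolate the maximizing shell $k_d$, note via Parseval that $\binom{d}{j_d}\le c^{-1}d^{C}$ forces $j_d=O(1)$ by the superpolynomial growth of binomial coefficients, and then average within that single shell; this is slightly sharper (you get $|\hat f_d(S_d)|\ge \sqrt{c}\,d^{-C/2}$) and avoids the spectrum-splitting bookkeeping. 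You are also right, and more explicit than the paper, that the normalization $\|f_d\|_{\Capitalltwo(\cH_d)}\le 1$ --- inherited from Theorem~\ref{thm:bool-weak-char} and used silently in the paper's $O(d^{-C-1})$ step --- is genuinely needed; your example $f_d=\sum_{|S|=\lfloor d/2\rfloor}\chi_S$ shows the lemma as literally stated fails without it. The ``all $d$'' versus ``sufficiently large $d$'' care you flag is the same asymptotic convention the paper adopts, so there is no gap there.
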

\begin{proof}
By Lemma~\ref{lem:technical-symm-bool-restated}, we have
\begin{align*}
\mathcal{C}_d((f_d, \cH_d); \Gsignperm) &\leq \sum_{\substack{S \subseteq [d] \\ \min(|S|,d-|S|) \leq C}} \hat{f}_d(S)^2 \binom{d}{|S|}^{-1}
+  \sum_{\substack{S \subseteq [d] \\ \min(|S|,d-|S|) > C}} \hat{f}_d(S)^2 \binom{d}{|S|}^{-1} \\
&\leq 
\sum_{\substack{S \subseteq [d] \\ \min(|S|,d-|S|) \leq C}} \hat{f}_d(S)^2
+  \sum_{\substack{S \subseteq [d] \\ \min(|S|,d-|S|) > C}} \hat{f}_d(S)^2 \binom{d}{C+1}^{-1} \\
&\leq \sum_{\substack{S \subseteq [d] \\ \min(|S|,d-|S|) \leq C}} \hat{f}_d(S)^2
+ O(d^{-C-1}) \\
&\leq \max_{\substack{S \subseteq [d] \\\min(|S|, d-|S|) \leq C}}  d^C \hat{f}_d(S)^2 + O(d^{-C-1}).
\end{align*}
So if $\mathcal{C}((f_d,\cH_d); \Gsignperm) \geq \Omega(d^{-C})$, then $$\max_{\substack{S \subseteq [d] \\\min(|S|, d-|S|) \leq C}} \hat{f}_d(S)^2 \geq \Omega(d^{-2C}).$$
Letting $C' = 2C$ proves the lemma.
\end{proof}

Now we are ready to prove the impossibility result of Theorem~\ref{thm:bool-weak-char}.
\begin{proof}[Proof of impossibility result of Theorem~\ref{thm:bool-weak-char}]
Suppose that $\{f_d, \cH_d\}$ is efficiently weak-learnable by GD-trained FC networks with symmetric initialization; let $\{\fNNd, \mu_{\btheta,d},\eta_d,k_d,R_d\tau_d\}$ be the sequence of networks, initializations, and GD hyperparameters satisfying Definition~\ref{def:fully-conn-weak-learn}, and in particular \eqref{eq:wl-condition} for some constant $C > 0$. By Proposition~\ref{prop:equi}, $(\fNNd, \mu_{\btheta,d})$-GD training is $G_{perm,sign}$-equivariant, so by Theorem~\ref{thm:gd-lower},
\begin{align*}
\PP[\|f_d - \fNNd(\cdot;\btheta_d)\|^2_{\Capitalltwo(\mu_d)} \leq \|f_d\|^2_{\Capitalltwo(\mu_d)} - d^{-C}] \leq \frac{\eta_d R_d \sqrt{k_d \mathcal{C}_d}}{2\tau_d} + \frac{\mathcal{C}_d}{d^{-C}} \leq O(d^{4c} \sqrt{\mathcal{C}_d} +  d^C \mathcal{C}_d),
\end{align*}
where $\mathcal{C}_d = \mathcal{C}((f_d, \cH_d); \Gsignperm)$, and we have used the bounds on the parameters guaranteed by Definition~\ref{def:fully-conn-weak-learn}.

On the other hand, since the network weak-learns \eqref{eq:wl-condition}, the right-hand side must be greater than or equal to $9/10$, so $\mathcal{C}_d \geq \Omega(d^{-\max(8c,C)})$.
By Lemma~\ref{lem:bool-weak-char-connection}, this implies there is a constant $C'$ and sequence $\{S_d\}$ such that $\min(|S_d|,d-|S_d|) \leq C'$ and $|\hat{f}_d(S_d)| \geq \Omega(d^{-C'})$.
\end{proof}

\subsubsection{Example: a concrete function that is not efficiently weak-learnable}\label{app:paritymod4}

As a concrete example of a bounded function on the Boolean hypercube that is not efficiently weak-learnable by \eqref{GD}, which was previously not known, consider the function $f \in L^2(\cH_d)$:
\begin{align*}
f(\bx) = \begin{cases} 0, & |\{j : x_j = 1\}| \equiv 0 \pmod{2} \\
1, & |\{j : x_j = 1\}| \equiv 1 \pmod{4} \\
-1, & |\{j : x_j = 1\}| \equiv 3 \pmod{4} \end{cases}.
\end{align*}
This function is invariant to permutations of the input, so its $\Gperm$-alignment is large. Therefore previous work that considers permutation symmetry of GD training of networks \cite{shalev2017failures,malach2020computational,abbe2020poly,abbe2022initial} cannot imply that this is a hard function to learn. Nevertheless, we will show that its $\Gsign$-alignment is small. First, let us rewrite $f$ as:
\begin{align*}
f(\bx) = \Im(\zeta(\bx)), \mbox{ where } \zeta(\bx) = \exp((\pi i / 2) \sum_{j=1}^{d} (x_j + 1)/2)
\end{align*}
Let $\bs,\bx \sim \cH_d$ and take the supremum over $h \in L^2(\cH_d)$ such that $\|h\|^2 = 1$ in the calculations below. By (a) permutation invariance of $f$, (b) using that $\Im(z)^2 \leq |z|^2$ for any complex $z \in \CC$, (c) a tensorization trick,
\allowdisplaybreaks
\begin{align*}
\mathcal{C}((f,\cH_d)&;\Gsignperm) \\
&\stackrel{(a)}{=} \mathcal{C}((f,\cH_d);\Gsign) \\
&= \sup_{h} \E_{\bs}[\E_{\bx}[f(\bx \odot \bs)h(\bx)]^2] \\
&\stackrel{(b)}{\leq} \sup_{h} \E_{\bs}[|\E_{\bx}[\zeta(\bx \odot \bs)h(\bx)]|^2] \\
&\stackrel{(c)}{\leq} \sup_{h} \E_{\bs}[\E_{\bx,\bx'}[\zeta(\bx \odot \bs)\bar{\zeta}(\bx' \odot \bs)h(\bx)h(\bx')]] \\
&\leq \sup_{h} \E_{\bx,\bx'}[h(\bx)h(\bx')\E_{\bs}[\zeta(\bx \odot \bs)\bar{\zeta}(\bx' \odot \bs)]] \\
&= \sup_{h} \E_{\bx,\bx'}[h(\bx)h(\bx')\E_{\bs}[\exp((\pi i / 2)(\sum_{j=1}^d (x_js_j + 1)/2 - \sum_{j=1}^d (x'_js_j + 1)/2)]] \\
&= \sup_{h} \E_{\bx,\bx'}[h(\bx)h(\bx') \prod_{j=1}^{d}\E_{s_j}[\exp((\pi i / 2)(s_j(x_j - x'_j)/2)]] \\
&= \sup_{h} \E_{\bx,\bx'}[h(\bx)h(\bx')\prod_{j=1}^d(\frac{1}{2}\exp((\pi i / 2)((x_j - x'_j)/2)) + \frac{1}{2}\exp((\pi i / 2)((-x_j + x'_j)/2)))] \\
&= \sup_{h} \E_{\bx,\bx'}[h(\bx)h(\bx')\prod_{j=1}^d\delta_{x_jx'_j}] \\
&= \sup_{h} \E_{\bx,\bx'}[h(\bx)h(\bx') \delta_{\bx,\bx'}] \\
&= \sup_{h} \E_{\bx}[h(\bx)^2] / 2^d \\
&= \frac{1}{2^d}.
\end{align*}

So, $\mathcal{C}((f,\cH_d);\Gsignperm) \leq 1/2^d$, which is negligible. Plugging this into Theorem~\ref{thm:gd-lower} implies that $f$ cannot be efficiently learned by FC networks with sign-flip symmetric initialization trained by \eqref{GD}.

\subsubsection{Proof of achievability result of Theorem~\ref{thm:bool-weak-char}}

We prove the converse achievability result with the following training setup. Consider a two-layer FC network architecture with parameters $\btheta = (\bW, \ba)$ where $\bW \in \R^{d \times m}$ and $\ba \in \R^m$ and $\fNN(\bx;\btheta) = \ba^{\top} \sigma(\bW \bx)$, where the activation function $\sigma$ is a bump function applied elementwise: \begin{align*}
\sigma(z) = \begin{cases} 1, & -0.5 \leq z \leq 1.5 \\
0, & z \not\in (-0.5, 1.5)
\end{cases}.
\end{align*}
Set the initialization distribution $\mu_{\btheta} = \mu_{\bW} \times \mu_{\ba}$, where $\mu_{\ba} = \delta_{\bzero}$, and $\bW \sim \mu_w^{\otimes (m \times d)}$, where $$\mu_w = (1-p) \delta_0 + \frac{p}{2} \delta_1 + \frac{p}{2} \delta_{-1},$$ for some parameter $p \in [0,1]$. Note that the above is a FC architecture with an i.i.d. sign-flip symmetric initialization.\footnote{To fully ensure i.i.d. initialization for both layers, we could take $\mu_{\ba} = \mu_w^{\otimes m}$. However it is notationally simpler to take $\mu_{\ba} = \delta_{\bzero}$, and furthermore any bounded initialization works for $\mu_{\ba}$, because the same random features analysis goes through.} We obtain the following guarantee for training the network with GD.

\begin{lemma}\label{lem:gd-boolean-achievability}
Let $f : \cH_d \to \R$, and let $\cD \in \cP(\cH_d \times \R)$ be the distribution of $(\bx, f(\bx))$ where $\bx \sim \cH_d$. Let $s \in \{0,\ldots,d\}$. Define $r_s = \max_{S, |S| = s} |\hat{f}(S)| / (d^2  \binom{d}{s})$.

Consider training the above architecture and initialization with parameter $p = s/d$, width $m \geq 100/(r_s)^4$. If we run \eqref{GD} for one step, with learning rate $0 < \eta < 1/(4m)$ and noise level $\tau \leq \eta (r_s)^4 / (100 m^2 d)$, then with probability at least 9/10 we have
$$\ell_{\cD}(\btheta^1) \leq \|f\|^2_{\Capitalltwo(\cH_d)} -  \eta (r_s)^4 / 4.$$
\end{lemma}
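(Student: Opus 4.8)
The plan is to compute a single step of \eqref{GD} explicitly, exploiting that the network starts at the zero function, and then reduce everything to one clean correlation estimate.

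\textbf{Step 1 (one GD step; discarding the noise).} Since $\mu_{\ba} = \delta_{\bzero}$ we have $\fNN(\cdot;\btheta^0)\equiv 0$, and since $\bW^0$ has integer entries while $\bx\in\cH_d$, every pre-activation $\langle\bw_i^0,\bx\rangle$ is an integer, so it never hits the jumps $\{-1/2,3/2\}$ of $\sigma$; hence $\nabla_{\bW}\fNN(\bx;\btheta^0)=\bzero$ and (as $\|\nabla_{\btheta}\fNN\|\le\sqrt m$) clipping is inactive for any $R\ge\sqrt m$. Thus one step gives $\bW^1=\bW^0+\bxi_{\bW}^0$ and $a_i^1=\eta\,\E_{\bx\sim\cH_d}[f(\bx)g_{\bw_i^0}(\bx)]+\xi_{\ba,i}^0$, writing $g_{\bw}(\bx):=\sigma(\langle\bw,\bx\rangle)$ and $\bxi^0\sim\cN(0,\tau^2\bI)$. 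Because $\tau$ is polynomially tiny compared with $1/(m^2d)$, a union bound over the $m$ rows and the $2^d$ inputs shows that with probability $1-2^{-d}$ no perturbed pre-activation crosses a jump of $\sigma$, so $g_{\bw_i^1}=g_{\bw_i^0}$ identically on $\cH_d$, while $\|\bxi_{\ba}^0\|$ is negligible next to $\eta r_s^2$. Hence, up to an $o(\eta r_s^4)$ additive error I may pretend $\tau=0$, i.e.\ $a_i^1=\eta\langle f,g_{\bw_i}\rangle$, all inner products in $\Capitalltwo(\cH_d)$.

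\textbf{Step 2 (loss algebra).} With $\fNN(\cdot;\btheta^1)=\sum_i a_i^1 g_{\bw_i}$, expanding the square and using $\|g_{\bw_i}\|\le1$ and Cauchy--Schwarz,
\[
\ell_\cD(\btheta^1)=\|f\|^2-\tfrac2\eta\sum_i (a_i^1)^2+\Bigl\|\sum_i a_i^1 g_{\bw_i}\Bigr\|^2\le\|f\|^2-\Bigl(\tfrac2\eta-m\Bigr)\sum_i (a_i^1)^2,
\]
and since $\eta<1/(4m)$ this is $\le\|f\|^2-\tfrac1\eta\sum_i(a_i^1)^2=\|f\|^2-\eta\sum_i\langle f,g_{\bw_i}\rangle^2$. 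So the lemma reduces to showing $\PP_{\bW^0}\bigl[\sum_{i=1}^m\langle f,g_{\bw_i}\rangle^2\ge r_s^4/2\bigr]\ge9/10$ (the gap between $r_s^4/2$ and $r_s^4/4$ absorbs the $o(\eta r_s^4)$ from Step 1).

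\textbf{Step 3 (the feature kernel and its $S$-eigenvalue).} Introduce the kernel $K(\bx,\bx'):=\E_{\bw\sim\mu_w^{\otimes d}}[g_\bw(\bx)g_\bw(\bx')]$, which is PSD. Because $\mu_w$ is sign-flip symmetric, $K(\bx,\bx')$ depends only on $\bx\odot\bx'$, hence $K(\bx,\bx')=\sum_{U\subseteq[d]}\hat\kappa(U)\chi_U(\bx)\chi_U(\bx')$ where $\kappa(\bz):=\E_\bw[g_\bw(\bz)g_\bw(\bfone)]$, and PSD-ness forces $\hat\kappa(U)\ge0$. Therefore
\[
\E_{\bw}\bigl[\langle f,g_\bw\rangle^2\bigr]=\sum_{U}\hat f(U)^2\,\hat\kappa(U)\ge\hat f(S)^2\,\hat\kappa(S),
\]
where $S$, $|S|=s$, attains $|\hat f(S)|=r_s d^2\binom ds$; the nonnegativity of the eigenvalues is precisely what prevents the (uncontrolled) lower-order Fourier mass of $f$ from cancelling the $S$-term. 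To lower bound $\hat\kappa(S)$, I would condition on $T:=\mathrm{supp}(\bw)$: for $S\subseteq T$, writing $g_\bw$ (as a function of $\bx|_T$) as a sign-flip by $\bw|_T$ of the ``half-weight bump'' $h_t(\bu):=\sigma(\sum_j u_j)$ on $\cH_t$ gives $\widehat{g_\bw}(S)=\chi_S(\bw|_T)\widehat{h_t}(S)$ and $g_\bw(\bfone)=h_t(\bw|_T)$, so averaging over the uniform signs $\bw|_T$ leaves the nonnegative contribution $\PP[\mathrm{supp}(\bw)=T]\,\widehat{h_t}(S)^2$. Keeping only $T=S$ yields $\hat\kappa(S)\ge\PP[\mathrm{supp}(\bw)=S]\,c_s^2$ with $c_s:=\widehat{h_s}([s])$, so $c_s^2=4^{-s}\binom{s}{\lfloor s/2\rfloor}^2\ge1/(2s)$; and since $p=s/d$ is the mode of $\Bin(d,p)$, $\PP[\mathrm{supp}(\bw)=S]=p^s(1-p)^{d-s}\ge\binom ds^{-1}/(d+1)$. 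Altogether $\mu:=\E_\bw[\langle f,g_\bw\rangle^2]\ge\hat f(S)^2/(2s(d+1)\binom ds)$, which exceeds $r_s^4$ by a wide margin once $\|f\|\le1$ (the regime of Theorem~\ref{thm:bool-weak-char}).

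\textbf{Step 4 (concentration over neurons, and the main obstacle).} The $X_i:=\langle f,g_{\bw_i}\rangle^2\in[0,\|f\|^2]$ are i.i.d.\ with mean $\mu$ and $\E[X_i^2]\le\|f\|^2\mu$, so Paley--Zygmund gives $q:=\PP_\bw[\langle f,g_\bw\rangle^2\ge\mu/2]\ge\mu/(4\|f\|^2)$; as the $X_i$ are nonnegative, $\{\sum_i X_i<\mu/2\}$ forces every $X_i<\mu/2$, whence $\PP[\sum_i X_i<\mu/2]\le(1-q)^m\le e^{-qm}$. Plugging in $m\ge100/r_s^4$, $\hat f(S)^2=r_s^2 d^4\binom ds^2$, and $\|f\|\le1$, one checks $qm\gg\ln10$, so $\sum_i X_i\ge\mu/2\ge r_s^4/2$ with probability $\ge9/10$, finishing Step 2. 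I expect Step 3 to be the crux: one has to realize that passing through the PSD feature kernel $K$ is exactly what blocks the cancellation, and then carry out the combinatorics identifying the eigenvalue $\hat\kappa(S)$ — the central binomial coefficient $c_s=2^{-s}\binom{s}{\lfloor s/2\rfloor}$ as the top Fourier coefficient of the bump-of-a-sum function, together with the binomial-mode estimate that pins the choice $p=s/d$. Step 4 is more routine, but does lean on $f$ being bounded.
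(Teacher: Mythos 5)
Your proposal is correct, but it organizes the argument differently from the paper, and the comparison is worth recording. The paper does not compute the GD step exactly: it lower-bounds $\|\nabla_{\ba}\ell_{\cD}(\btheta^0)\|^2$ per neuron, handles the non-cancellation issue by inserting the unit-modulus factor $(-1)^{\floor{|S|/2}}\sgn(\hat f(S))\chi_S(\ba)$ inside the expectation, applying Jensen, and then explicitly computing the cross-terms $c_{S,S'}=\E_{\ba,\bx}[\chi_S(\ba)\chi_{S'}(\bx)\sigma(\<\ba\odot\bone_S,\bx\>)]$ to show they vanish unless $S=S'$; it then uses Hoeffding over the $m$ neurons and converts the gradient-norm bound into a loss decrease via a smoothness (Hessian $\lesssim m\bI$) estimate in an $\ell_\infty$-ball of radius $1/(4d)$ around initialization. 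You instead exploit that the first step is exactly a random-features regression update (first-layer gradient zero, loss quadratic in $\ba$ for frozen bump features), reduce to $\sum_i\<f,g_{\bw_i^0}\>^2\gtrsim r_s^4$, and block cancellation structurally: the random-features kernel is PSD and diagonal in the parity basis, so $\E_{\bw}[\<f,g_{\bw}\>^2]=\sum_U\hat f(U)^2\hat\kappa(U)\ge\hat f(S)^2\hat\kappa(S)$, with $\hat\kappa(S)$ bounded below by conditioning on $\mathrm{supp}(\bw)=S$; you then use Paley--Zygmund plus nonnegativity instead of Hoeffding. The combinatorial core (the central binomial coefficient $2^{-s}\binom{s}{\floor{s/2}}$, i.e.\ the paper's $\PP_{\bx}[\<\bone_S,\bx\>\in\{0,1\}]$, and the binomial-mode bound $p^s(1-p)^{d-s}\ge\binom{d}{s}^{-1}/(d+1)$ for $p=s/d$) is the same, so the two proofs buy comparable things; yours avoids the Hessian/smoothness step and the sign-insertion trick at the cost of the ``noise does not flip any bump'' event, which the paper needs anyway (its event $E_2$), and your noise analysis is cleanest if run as the paper does, via $\|\bxi^0_{\bW}\|_\infty\le 1/(4d)$ rather than a union bound over all $2^d$ inputs. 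Two minor points, neither a real gap: (i) your constant $c_s^2\ge 1/(2s)$ is false for odd $s$ (e.g.\ $s=1$ gives $c_1^2=1/4$, $s=3$ gives $9/64<1/6$); replace it by $c_s^2\ge 1/(4s)$, which holds for all $s\ge1$ and leaves your huge slack intact; (ii) like the paper's own proof (which invokes $\|f\|_\infty\le 1$ in its Hoeffding claim), your Steps 3--4 use a norm bound on $f$ not stated in the lemma --- yours only needs $\|f\|_{\Capitalltwo(\cH_d)}\le 1$, which matches the regime in which the lemma is applied and is, if anything, the weaker hypothesis.
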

\begin{proof}
Let us prove that the loss is smooth in a small ball around initialization: For any $\bx \in \cH_d$ and $\tilde{\bW}$ such that $\|\tilde{\bW} - \bW^0\|_{\infty} \leq 1/(4d)$, we have $\tilde{\bW} \bx \in \ZZ + [-1/4, 1/4]$ almost surely. Therefore $\sigma'(\tilde{\bW} \bx) = \sigma''(\tilde{\bW} \bx) = \bzero$. So for any $\tilde{\ba}$, one can show that the loss is smooth at $\tilde{\btheta} = (\tilde{\bW}, \tilde{\ba})$:
\begin{align}\label{eq:gd-boolean-achievability-smoothness}
\nabla^2_{\btheta} \ell_{\cD}(\tilde{\btheta}) \lesssim m \|\sigma\|_{\infty}^2 \bI \lesssim m \bI.
\end{align}
Let us lower-bound the magnitude of the gradient. Since $\sigma'(\bW^0 \bx) = 0$, we have $$\|\nabla_{\bW} \ell_{\cD}(\btheta^0)\|_F^2 = \|\bzero\|_F^2 = 0.$$
However, the gradient with respect to $\ba$ is nonzero.  Writing $\bW^0 = [\bw_1^0,\ldots,\bw_m^0]$, since $\ba^0 = \bzero$,
 \begin{align}\label{eq:boolean-achievability-grad-expected}\|\nabla_{\ba} \ell_{\cD}(\btheta^0)\|^2 &= \|\E_{\bx}[(f(\bx) - (\ba^0)^{\top} \sigma(\bW^0 \bx)) \sigma(\bW^0 \bx)]\|^2 = \sum_{i \in [m]} \E_{\bx}[f(\bx) \sigma(\<\bw_i^0,\bx\>)]^2.
 \end{align}
 Our main claim is that each term in this sum is lower-bounded in expectation.
\begin{claim}\label{claim:gd-boolean-achievability-technical}
For any $i \in [m]$, $\E_{\bw_i^0}[\E_{\bx}[f(\bx) \sigma(\<\bw_i^0,\bx\>)]^2] \geq \frac{1}{d^4} \binom{d}{s}^{-2} \sum_{S, |S| = s} |\hat{f}(S)|^2 \geq (r_s)^2$.
\end{claim}
\begin{proof}[Proof of claim]
For ease of notation, let $\bw \sim \cL(\bw_i^0)$. Write $\bw = \ba \odot \bone_{S}$ for $\ba \sim \cH_d$ and $S \subset [d]$ where each element of $[d]$ is in $S$ independently with probability $p$. Then
\begin{align}
\E_{\bw}&[\E_{\bx}[f(\bx) \sigma(\<\bw,\bx\>)]^2] \\
&= \E_{S,\ba}[\E_{\bx}[(-1)^{\floor{|S|/2}}\sgn(\hat{f}(S)) \chi_{S} (\ba) f(\bx) \sigma(\<\ba \odot \bone_S,\bx\>)]^2] \nonumber \\
&\geq \E_{S,\ba,\bx}[(-1)^{\floor{|S|/2}}\sgn(\hat{f}(S)) \chi_{S} (\ba) f(\bx) \sigma(\<\ba \odot \bone_S,\bx\>)]^2 \nonumber \\
 &= (\sum_{S' \subseteq [d]} \E_{S}[(-1)^{\floor{|S|/2}}\sgn(\hat{f}(S))\hat{f}(S') \E_{\ba,\bx}[\chi_S(\ba) \chi_{S'}(\bx) \sigma(\<\ba \odot \bone_S, \bx\>)]])^2 \nonumber \\
 &= (\sum_{S' \subseteq [d]} \E_S[(-1)^{\floor{|S|/2}}\sgn(\hat{f}(S)) \hat{f}(S') c_{S,S'}])^2, \label{eq:cSSuse}
\end{align}
where in the last line we define $c_{S,S'} = \E_{\ba,\bx}[\chi_S(\ba) \chi_{S'}(\bx) \sigma(\<\ba \odot \bone_S, \bx\>)]$. If $S \not\subseteq S'$, then we have $c_{S,S'} = 0$, since $\E_{\ba}[\chi_S(\ba) \sigma(\<\ba \odot \bone_S,\bx\>)]$ depends only on $\bx_S$ and not on $\bx_{[d] \sm S}$. Furthermore, for any $S' \subseteq S \subseteq [d]$,
\begin{align*}
c_{S,S'} &= \E_{\ba,\bx}[\chi_{S}(\ba)\chi_{S'}(\bx) \delta(\<\ba \odot \bone_S, \bx\> \in \{0,1\})] \\
&= \E_{\ba,\bx}[\chi_{S}(\ba)\chi_{S'}(\bx) \sum_{k=-d}^d \delta(\<\ba \odot \bone_S, \bx\> \in \{0,1\})\delta(\<\ba \odot \bone_{S'}, \bx\> = k)] \\
&= \E_{\ba,\bx}[\chi_{S}(\ba)\chi_{S'}(\bx) \sum_{k=-d}^d \delta(\<\ba \odot \bone_{S \sm S'}, \bx\> \in \{-k,-k+1\})\delta(\<\ba \odot \bone_{S'}, \bx\> = k)].
\end{align*}
Notice that $\<\ba \odot \bone_{S'}, \bx\> = \sum_{i \in S'} a_i x_i = |S'| - 2 |\{i \in S' : a_ix_i = -1\}|$. Also, $\chi_{S'}(\ba) \chi_{S'}(\bx) = (-1)^{|\{i \in S' : a_ix_i = -1\}|}$. Therefore, $\chi_{S'}(\ba) \chi_{S'}(\bx) = (-1)^{(|S'| - \<\ba \odot \bone_{S'}, \bx\>)/2}$, so plugging this into the above,
\begin{align*}
c_{S,S'} &= \E_{\ba,\bx}[\chi_{S \sm S'}(\ba) \sum_{k=-d}^d (-1)^{(|S'| - k)/2} \delta(\<\ba \odot \bone_{S \sm S'}, \bx\> \in \{-k,-k+1\})\delta(\<\ba \odot \bone_{S'}, \bx\> = k)] \\
&= \E_{\ba}[\chi_{S \sm S'}(\ba) \sum_{k=-d}^d (-1)^{(|S'| - k)/2} \PP_{\bx_{S \sm S'}}[\<\ba \odot \bone_{S \sm S'}, \bx\> \in \{-k,-k+1\}]\PP_{\bx_{S'}}[\<\ba \odot \bone_{S'}, \bx\> = k]] \\
&= \E_{\ba}[\chi_{S \sm S'}(\ba)] \sum_{k=-d}^d (-1)^{(|S'| - k)/2} \PP_{\bx_{S \sm S'}}[\<\bone_{S \sm S'}, \bx\> \in \{-k,-k+1\}]\PP_{\bx_{S'}}[\<\bone_{S'}, \bx\> = k],
\end{align*}
where the last line uses the sign-flip symmetry of the distributions $\ba,\bx \sim \cH_d$.
Therefore if $S \neq S'$, we have $c_{S,S'} = 0$. Otherwise, if $S = S'$, we have
\begin{align*}
c_{S,S} = \sum_{k \in \{0,1\}} (-1)^{(|S| - k)/2} \PP_{\bx}[\<\bone_{S}, \bx\> = k] = (-1)^{\floor{|S|/2}} \PP_{\bx}[\<\bone_{S}, \bx\> \in \{0,1\}],
\end{align*}
where we have used that $\<\bone_{S},\bx\>$ has the same parity as $|S|$ almost surely over $\bx$, so either $\PP_{\bx}[\<\bone_{S}, \bx\> = 0] = 0$ or $\PP_{\bx}[\<\bone_S, \bx\> = 1] = 0$. Plugging this into \eqref{eq:cSSuse},
\begin{align*}
\E_{\bw}[\E_{\bx}[f(\bx)\sigma(\<\bw,\bx\>)]^2] &= \E_{S}[|\hat{f}(S)| \PP_{\bx}[\<\bone_{S}, \bx\> \in \{0,1\}]]^2 \\
&= \E_S[|\hat{f}(S)| \binom{|S|}{\floor{|S|/2}} / 2^{|S|}]^2 \\
&\geq \left(\frac{1}{d} \sum_{S, |S| = s} \binom{d}{s}^{-1} |\hat{f}(S)| \binom{|S|}{\floor{|S|/2}} / 2^{|S|}\right)^2 \\
&\geq \frac{1}{d^4} \binom{d}{s}^{-2} \sum_{S, |S| = s} |\hat{f}(S)|^2 \\
&\geq (r_s)^2, 
\end{align*}
where recall that $s = pd$ and we use that with probability at least $1/d$ the random set $S$ where each element is included with probability $p$ has size $|S| = s$.
\end{proof}
Combining the above claim with a Hoeffding bound, we can bound the gradient with high probability.

\begin{claim}
Let $E_1$ denote the event that 
$(r_s)^2 / 2 \leq \|\nabla_{\ba} \ell_{\cD}(\btheta^0)\|^2 /m \leq 1$. Then $\PP_{\btheta^0}[E_1] \geq 19/20$.
\end{claim}
\begin{proof}[Proof of claim]
In \eqref{eq:boolean-achievability-grad-expected}, $\|\nabla_{\ba} \ell_{\cD}(\btheta^0)\|^2$ is written as a sum of $m$ i.i.d. random variables of the form $\E_{\bx}[f(\bx) \sigma(\<\bw_i^0,\bx\>)]^2$. These are almost surely in the range $[0,1]$ since $\|f(\bx)\|_{\infty} \leq 1$ and $\|\sigma\|_{\infty} \leq 1$. This proves the upper bound almost surely.

For the lower bound, by Hoeffding's inequality,
$$\PP[\|\nabla_{\ba} \ell_{\cD}(\btheta^0)\|^2 /m \leq \E[\|\nabla_{\ba} \ell_{\cD}(\btheta^0)\|^2 /m] - 5 / \sqrt{m}] \leq \exp(-2 \cdot 25) < 1/20.$$
Finally, recall that $$\E[\|\nabla_{\ba} \ell_{\cD}(\btheta^0)\|^2 /m] = \E_{\bw_i}[\E_{\bx}[f(\bx) \sigma(\<\bw_i^0,\bx\>)]^2] \geq (r_s)^2$$ by Claim~\ref{claim:gd-boolean-achievability-technical}. Since we have taken $m \geq 100/(r_s)^4$, we have
$(r_s)^2 - 5 / \sqrt{m} \geq (r_s)^2 / 2$, which concludes the claim.
\end{proof}
We also bound the magnitude of the added noise.
\begin{claim}
Let $E_2$ denote the event that $\|\bxi^0\|_{\infty} \leq \min(1/(4d), \frac{\eta (r_s)^4}{4m^2d}) $. Then $\PP[E_2] \geq 19/20$.
\end{claim}
\begin{proof}[Proof of claim]
By a union bound over the Gaussian tail bounds for all $md + m$ entries of $\bxi^0$.
\end{proof}
Condition on events $E_1$ and $E_2$, which hold with probability at least $9/10$ by the above two claims. Since, $\bW^1 = \bW^0 + \bxi^0$, we have $\|\bW^1 - \bW^0\|_{\infty} \leq 1/(4d)$. By (a) the smoothness in \eqref{eq:gd-boolean-achievability-smoothness}, and (b) the learning rate choice $\eta \leq 1/(4m)$, (c) by the above bounds,
\begin{align*}
\ell_{\cD}(\btheta^1) &\stackrel{(a)}{\leq} \ell_{\cD}(\btheta^0) - \eta \|\nabla_{\btheta} \ell_{\cD}(\btheta^0)\|^2 + |\<\nabla_{\btheta}\ell_{\cD}(\btheta^0), \bxi^0\>| + m \|\eta^2 \nabla_{\btheta}\ell_{\cD}(\btheta^0) + \bxi^0\|^2 \\
&\leq \|f\|^2_{\Capitalltwo(\cH_d)} - \eta \|\nabla_{\btheta} \ell_{\cD}(\btheta^0)\|^2(1 - 2m\eta) + \sqrt{m}\sqrt{(md+m)}\|\bxi^0\|_{\infty} + 2 m (md+m)\|\bxi^0\|_{\infty}^2 \\
&\stackrel{(b)}{\leq} \|f\|^2_{\Capitalltwo(\cH_d)} - \frac{\eta}{2} \|\nabla_{\btheta} \ell_{\cD}(\btheta^0)\|^2 + \sqrt{m}\sqrt{(md+m)}\|\bxi^0\|_{\infty} + 2 m (md+m)\|\bxi^0\|_{\infty}^2 \\
&\leq \|f\|^2_{\Capitalltwo(\cH_d)} - \frac{\eta}{4} (r_s)^4,
\end{align*}
which proves the lemma.
\end{proof}

Lemma~\ref{lem:gd-boolean-achievability} implies the achievability result of Theorem~\ref{thm:bool-weak-char}:
\begin{proof}[Proof of achievability result of Theorem~\ref{thm:bool-weak-char}]
Suppose that $\{f_d\}$ is a sequence of functions $f_d \in \Capitalltwo(\cH_d)$ such that $\|f_d\|_{\Capitalltwo(\cH_d)} \leq 1$ and there is a constant $C > 0$ such that $$\max_{\substack{S \subseteq [d] \\ \min(|S|,d-|S|) \leq C}} \hat{f}_d(S)^2 \geq \Omega(d^{-C}).$$ Then for each $d \in \NN$ there is $s_d \in \{0,\ldots,d\}$ such that $\max_{S \subseteq [d], |S| = s_d} |\hat{f}(S)| / (d^2 \binom{d}{s_d}) \geq \Omega(d^{-2C-2})$. So by Lemma~\ref{lem:gd-boolean-achievability} there is a FC architecture with width $m \leq O(d^{8C+8})$ such that initializing i.i.d. from a symmetric distribution and GD-training with learning rate $\eta \leq 1$ and noise level $\tau \geq \Omega(d^{-40C-41})$ reaches a loss $\leq \|f_d\|_{\Capitalltwo(\cH_d)}^2 - \Omega(d^{-12C-12})$ with probability at least $9/10$. One can take $R = \poly(d)$, since under events $E_1$ and $E_2$ in the analysis of Lemma~\ref{lem:gd-boolean-achievability} we only optimize in parameter regions with $\poly(d)$-size gradients.
\end{proof}

\subsection{Functions on unit sphere: proof of Theorem~\ref{thm:sphere-weak-char}}\label{app:sphere-char}

We now prove the analogous result for functions on the unit sphere, again proving the impossibility and achievability results separately. We use that we can decompose $\Capitalltwo(\S^{d-1})$ as
\begin{align*}
    \Capitalltwo(\S^{d-1}) = \oplus_{l=0}^{\infty} \cV_{d,l},
\end{align*}
where $\cV_{d,l}$ is the space of degree-$l$ spherical  harmonics, which has dimension $\dim(\cV_{d,l}) = \frac{2l+d-2}{l} \binom{l+d-3}{l-1}$ (see, e.g., \cite{hochstadt2012functions}). Let $\Pi_{\cV_{d,l}} : \Capitalltwo(\S^{d-1}) \to \cV_{d,l}$ denote the projection onto the degree-$l$ spherical harmonics.

\subsubsection{Proof of impossibility result of Theorem~\ref{thm:sphere-weak-char}} Again, we apply Theorem~\ref{thm:gd-lower} to prove the impossibility result. Since the FC networks are i.i.d. Gaussian-initialized, GD training is $\Grot$-equivariant by Proposition~\ref{prop:equi}. So recall the bound on the $\Grot$-alignment proved in the main text.

\begin{lemma}[Restatement of Lemma~\ref{lem:rotation-sphere-alignment-old}]\label{lem:rotation-sphere-alignment}
Let $f \in \Capitalltwo(\S^{d-1})$. Then $$\mathcal{C}((f, \S^{d-1}); \Grot) = \max_{l \in \ZZ_{\geq 0}} \|\Pi_{\cV_{d,l}} f\|^2 / \dim(\cV_{d,l}).$$
\end{lemma}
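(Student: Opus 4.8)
The plan is to reuse the representation-theoretic computation already carried out for Lemma~\ref{lem:rotation-sphere-alignment-old} in the main text: since this appendix statement is verbatim the same, the argument transfers directly, and I would reproduce it here. First I would recall the harmonic decomposition $\Capitalltwo(\S^{d-1}) = \bigoplus_{l \geq 0} \cV_{d,l}$ and observe that each $\cV_{d,l}$ is invariant under the $\Grot$-action $g \mapsto (f \mapsto f \circ g^{-1})$, so this action restricts to a representation $\Phi_l$ of $SO(d)$ on $\cV_{d,l}$. I would then cite the classical facts (e.g.\ \cite{stanton1990introduction}) that each $\Phi_l$ is unitary and irreducible and that $\Phi_l$ is inequivalent to $\Phi_{l'}$ for $l \neq l'$. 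The key tool is the Schur orthogonality relation \eqref{eq:schur-orthogonality-rotations}: for $v_1,w_1 \in \cV_{d,l_1}$ and $v_2,w_2 \in \cV_{d,l_2}$, $\E_{g}[\langle \Phi_{l_1}(g)v_1, w_1\rangle \langle \Phi_{l_2}(g) v_2, w_2 \rangle] = \delta_{l_1 l_2}\langle v_1, v_2\rangle\langle w_1,w_2\rangle/\dim(\cV_{d,l_1})$.

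Next, for any unit-norm $h \in \Capitalltwo(\S^{d-1})$, I would expand $\langle f \circ g, h\rangle = \sum_{l} \langle \Pi_{\cV_{d,l}}(f \circ g), \Pi_{\cV_{d,l}} h\rangle = \sum_l \langle (\Pi_{\cV_{d,l}} f)\circ g, \Pi_{\cV_{d,l}} h\rangle$, where the second equality uses that $\Grot$-invariance of $\cV_{d,l}$ makes the projection commute with precomposition by $g$. Squaring, taking the expectation over $g \sim \Grot$, interchanging the sum and expectation (justified by Tonelli after passing to absolute values, using $\sum_l \|\Pi_{\cV_{d,l}} f\|^2 = \|f\|^2 < \infty$ and likewise for $h$), and applying Schur orthogonality to annihilate the cross terms with $l_1 \neq l_2$ yields $\E_g[\langle f\circ g, h\rangle^2] = \sum_l \|\Pi_{\cV_{d,l}} f\|^2 \|\Pi_{\cV_{d,l}} h\|^2 / \dim(\cV_{d,l})$. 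Since $\sum_l \|\Pi_{\cV_{d,l}} h\|^2 = \|h\|^2 = 1$, the right-hand side is a convex combination of the numbers $\|\Pi_{\cV_{d,l}} f\|^2/\dim(\cV_{d,l})$, hence bounded by their supremum; this supremum is attained at some $l^*$ because $\|\Pi_{\cV_{d,l}} f\|^2 \leq \|f\|^2$ while $\dim(\cV_{d,l}) \to \infty$, so the ratios tend to $0$. Taking $h = \Pi_{\cV_{d,l^*}} f / \|\Pi_{\cV_{d,l^*}} f\|$ achieves equality, which gives $\mathcal{C}((f,\S^{d-1});\Grot) = \max_{l \in \ZZ_{\geq 0}} \|\Pi_{\cV_{d,l}} f\|^2/\dim(\cV_{d,l})$.

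The main obstacle — though it is entirely standard — is justifying the application of Schur orthogonality, namely that each $\Phi_l$ is irreducible and that the $\Phi_l$ are pairwise inequivalent; this is exactly what produces the clean $\dim(\cV_{d,l})^{-1}$ normalization with no surviving off-diagonal contributions, and for $SO(d)$ it is classical harmonic analysis on the sphere, so I would simply cite it rather than reprove it. A secondary technical point is the legitimacy of exchanging the infinite $l$-sum with the Haar expectation, which follows from Fubini–Tonelli since all summands are nonnegative.
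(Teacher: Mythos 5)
Your proposal is correct and follows essentially the same argument as the paper's proof of Lemma~\ref{lem:rotation-sphere-alignment-old}: the same harmonic decomposition into the $\cV_{d,l}$, the same appeal to irreducibility/inequivalence of the representations $\Phi_l$ and the Schur orthogonality relations to obtain $\E_g[\langle f\circ g, h\rangle^2] = \sum_l \|\Pi_{\cV_{d,l}} f\|^2\|\Pi_{\cV_{d,l}} h\|^2/\dim(\cV_{d,l})$, and the same choice $h = \Pi_{\cV_{d,l^*}} f/\|\Pi_{\cV_{d,l^*}} f\|$ to attain the maximum. Your explicit Tonelli justification for exchanging the $l$-sum with the Haar expectation is a small but welcome addition that the paper leaves implicit.
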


We may now conclude the proof of impossibility of learning.

\begin{proof}[Proof of impossibility result of Theorem~\ref{thm:sphere-weak-char}]
Suppose $\{f_d, \S^{d-1}\}$ is efficiently weak-learnable by GD-trained FC networks with Gaussian initialization; let $\{\fNNd, \mu_{\btheta,d},\eta_d,k_d,R_d,\tau_d\}$ be the sequence of networks, initializations, and GD hyperparameters achieving \eqref{eq:wl-condition} for some constant $C$. On the other hand, by Proposition~\ref{prop:equi}, GD training is $\Grot$-equivariant, so by Theorem~\ref{thm:gd-lower}
\begin{align*}
\PP[\|f_d - \tilde{f}_d\|^2_{\Capitalltwo(\S^{d-1})} \leq \|f_d\|^2_{\Capitalltwo(\S^{d-1})} - d^{-C}] \leq \frac{\eta_d R_d \sqrt{k_d \mathcal{C}_d}}{2\tau_d} + \frac{\mathcal{C}_d}{d^{-C}} \leq O(d^{4c} \sqrt{\mathcal{C}_d} +  d^C \mathcal{C}_d),
\end{align*}
where $\mathcal{C}_d = \mathcal{C}(f_d; \S^{d-1}, \Grot)$. By \eqref{eq:wl-condition}, the right-hand side must be greater than $9/10$, so $\mathcal{C}_d \geq \Omega(d^{-\max(8c,C)})$.
By Lemma~\ref{lem:rotation-sphere-alignment}, this implies that for each $d \in \NN$, there is $l_d \in \ZZ_{\geq 0}$ such that $\|\Pi_{\cV_{d,l_d}} f_d\|^2 / \dim(\cV_{d,l_d}) \geq \Omega(d^{-\max(8c,C)})$. Since $\dim(\cV_{d,l_d}) \geq 1$, this means $\|\Pi_{\cV_{d,l_d}} f_d\|^2 \geq \Omega(d^{-\max(8c,C)})$. Now let us prove that $l_d$ is upper-bounded by a constant. Since $\|\Pi_{\cV_{d,l_d}} f_d\|^2 \leq \|f_d\|^2 \leq 1$, we must have $\dim(\cV_{d,l_d}) \leq O(d^{\max(8c,C)})$. Since $\dim(\cV_{d,l}) \geq (d/l) \cdot ((l + d - 3) / (l-1))^{l-1}$, it follows that $l_d \leq \min(8c, C)$ for all sufficiently large $d$.
\end{proof}

\subsubsection{Proof of achievability result of Theorem~\ref{thm:sphere-weak-char}}
The positive weak learning result can again be achieved by two-layer FC networks. Indeed, in Theorem~1(b) of \cite{ghorbani2021linearized} it is proved that for any $l \in \ZZ_{\geq 1}$, a two-layer neural network random features model with $d^{\omega(l)}$ neurons will learn the projection of the target function to the degree-$l$ spherical harmonics. The 2-layer model is given by $\fNN(\bx;\btheta) = \ba^{\top} \sigma(\bW \bx)$, where only $\ba \in \R^m$ is trained, and $\bW = [\bw_1,\ldots,\bw_m] \in \R^{m \times d}$ is fixed to its initialization. Unfortunately, the first-layer initialization used in \cite{ghorbani2021linearized} is not Gaussian initialization but rather uniform over the unit sphere $\bw_i^0 \sim \S^{d-1}$. Therefore, we must modify their analysis, and for simplicity we analyze only one step of GD training on the second-layer weights, when we have $\bw_i^0 \stackrel{i.i.d.}{\sim} \cN(0,I_d/\sqrt{d})$, and $\ba^0 = \bzero$.
\begin{lemma}\label{lem:gd-sphere-achievability}
Let $f \in \Capitalltwo(\S^{d-1})$, and let $\cD \in \cP(\S^{d-1} \times \R)$ be the distribution of $(\bx, f(\bx))$, where $\bx \sim \S^{d-1}$.

Let $l \in \ZZ_{\geq 0}$ and $c > 0$. Suppose that $\|\Pi_{\cV_{d,l}} f\|^2 \geq d^{-c}$. Consider training the above architecture and initialization with activation function $\sigma(t) = t^l$, width $m \geq d^{4l+4c+1}$. If we run \eqref{GD} for one step, with learning rate $0 < \eta < d^{-8l+8c-2}/m$ and noise level $\tau \leq \eta d^{-8l+8c-3} / (100m^2 d)$, then with probability at least $9/10$ for large enough $d$ we have
\begin{align*}
\ell_{\cD}(\btheta^1) \leq \|f\|^2_{\Capitalltwo(\S^{d-1})} - \eta d^{-8l+8c+2}
\end{align*}
\end{lemma}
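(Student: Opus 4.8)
The plan is to mirror the proof of Lemma~\ref{lem:gd-boolean-achievability}, replacing the Fourier computation on the hypercube with a spherical‑harmonic (random‑feature kernel) computation. Since $\ba^0 = \bzero$ and $\sigma(t)=t^l$, the network output vanishes at initialization, so $\nabla_{\bW}\ell_{\cD}(\btheta^0) = \bzero$ and the only nontrivial gradient is $\nabla_{\ba}\ell_{\cD}(\btheta^0)$, with
\[
\|\nabla_{\ba}\ell_{\cD}(\btheta^0)\|^2 = \sum_{i\in[m]}\E_{\bx}[f(\bx)\<\bw_i^0,\bx\>^l]^2 .
\]
First I would lower‑bound the expectation of one summand. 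Writing $\bw_i^0 = d^{-1/4}\bg_i$ with $\bg_i\sim\cN(0,I_d)$, tensorizing over $\bx$, and integrating out $\bg_i$,
\[
\E_{\bw_i^0}\big[\E_{\bx}[f(\bx)\<\bw_i^0,\bx\>^l]^2\big] = d^{-l/2}\,\E_{\bx,\bx'}\big[f(\bx)f(\bx')K_l(\bx,\bx')\big],
\]
where $K_l(\bx,\bx') = \E_{\bg}[\<\bg,\bx\>^l\<\bg,\bx'\>^l]$ is the degree‑$l$ random‑feature dot‑product kernel on $\S^{d-1}$. Using Mehler's formula (equivalently the Hermite expansion of $t^l$) and the addition theorem for Gegenbauer polynomials, $K_l$ is nonnegative definite, its eigenspaces are exactly the spherical‑harmonic subspaces $\cV_{d,k}$ for $k\in\{l,l-2,\dots\}$, and the degree‑$l$ eigenvalue is $\lambda_{d,l}= \hat a_l^2\,\gamma_{l,l}^{(d)}/\dim(\cV_{d,l})$, where $\hat a_l=\sqrt{l!}$ is the top Hermite coefficient of $t^l$ and $\gamma_{l,l}^{(d)}$ the top Gegenbauer coefficient; tracking these constants gives $\lambda_{d,l}\geq d^{-O_l(1)}$. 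Keeping only the $k=l$ term in $\E_{\bx,\bx'}[f(\bx)f(\bx')K_l(\bx,\bx')] = \sum_k \lambda_{d,k}\|\Pi_{\cV_{d,k}}f\|^2$ and using the hypothesis $\|\Pi_{\cV_{d,l}}f\|^2\ge d^{-c}$ yields
\[
\E_{\bw_i^0}\big[\E_{\bx}[f(\bx)\<\bw_i^0,\bx\>^l]^2\big] \;\geq\; d^{-O_l(1)-c} \;=:\; \rho_d .
\]

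Next I would turn this into a high‑probability bound on $\|\nabla_{\ba}\ell_{\cD}(\btheta^0)\|^2/m$, which is an average of $m$ i.i.d.\ copies of $\E_{\bx}[f(\bx)\<\bw_i^0,\bx\>^l]^2 \le \|\bw_i^0\|^{2l}\,\E_{\bx}[\<\widehat{\bw_i^0},\bx\>^{2l}] \lesssim \|\bw_i^0\|^{2l}d^{-l}$. Unlike the $[0,1]$‑valued summands in the hypercube case, these are only polynomially bounded after conditioning on $\|\bw_i^0\|^2 \le 2\sqrt d$, an event that fails for a given $i$ with probability $e^{-\Omega(\sqrt d)}$; so I would truncate the summands at that scale, apply Hoeffding (or Bernstein) to the truncated average, and union‑bound over $i\in[m]$ to conclude that with probability at least $19/20$, $\rho_d/2 \le \|\nabla_{\ba}\ell_{\cD}(\btheta^0)\|^2/m \le \poly(d)$, using $m \ge d^{4l+4c+1}\gg 1/\rho_d^2$. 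Simultaneously, Gaussian tail bounds and a union bound over the $\le md+m$ coordinates make $\|\bxi^0\|_\infty$ smaller than any prescribed inverse polynomial with probability $\ge 19/20$; and conditioning on $\|\bW^0\|_\infty \le O(\sqrt{\log(md)}\,d^{-1/4})$ (which holds w.h.p.) I would establish a local smoothness estimate $\nabla^2_{\btheta}\ell_{\cD}(\tilde\btheta)\preceq \poly(d)\,\bI$ valid on the ball of radius $\poly(d)\cdot(\eta\rho_d + \|\bxi^0\|_\infty)$ around $\btheta^0$ — here I use only that $\sigma=t^l$ and its first two derivatives are $\poly(d)$‑bounded on the bounded range of inputs occurring in that ball, together with the smallness of $\ba$ there.

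Finally I would condition on these events (total probability $\ge 9/10$), observe that $\bW^1 = \bW^0 + \bxi^0_{\bW}$ and $\ba^1 = -\eta\nabla_{\ba}\ell_{\cD}(\btheta^0) + \bxi^0_{\ba}$ stay inside the smoothness ball for the stated ranges of $\eta$ and $\tau$, and invoke the descent lemma exactly as in Lemma~\ref{lem:gd-boolean-achievability}:
\[
\ell_{\cD}(\btheta^1) \le \ell_{\cD}(\btheta^0) - \tfrac{\eta}{2}\|\nabla_{\btheta}\ell_{\cD}(\btheta^0)\|^2 + \sqrt{m(md+m)}\,\|\bxi^0\|_\infty + 2m(md+m)\|\bxi^0\|_\infty^2 .
\]
The prescribed inverse‑polynomial choices of $\eta$ and $\tau$ make the gradient term $\tfrac{\eta}{2}\|\nabla_{\btheta}\ell_{\cD}(\btheta^0)\|^2 \ge \tfrac{\eta}{4}m\rho_d$ dominate the noise terms, giving $\ell_{\cD}(\btheta^1) \le \|f\|^2_{\Capitalltwo(\S^{d-1})} - \eta\cdot d^{-O_l(1)-c}$, i.e.\ the loss decrease claimed in the statement (and taking $R=\poly(d)$ changes nothing, since on these events all network gradients are $\poly(d)$‑bounded). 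I expect the main obstacle to be the eigenvalue lower bound $\lambda_{d,l}\ge d^{-O_l(1)}$ with the correct explicit exponent in $d$ — this requires tracking the Gegenbauer/Hermite normalization constants carefully — and, secondarily, handling the heavy polynomial tails of $\<\bw_i^0,\bx\>^l$ in the concentration step, which forces the truncation‑plus‑union‑bound argument in place of the clean Hoeffding bound available on the hypercube.
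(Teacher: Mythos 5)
Your proposal is correct, and it follows the paper's overall scaffold (one GD step on the second layer with $\ba^0=\bzero$, an expectation lower bound on $\E_{\bx}[f(\bx)\sigma(\<\bw_i^0,\bx\>)]^2$, concentration over the $m$ neurons, smoothness plus Gaussian-noise control, descent lemma, $R=\poly(d)$), but the key step — the gradient lower bound at initialization — is argued by a genuinely different route. The paper handles the Gaussian first-layer weights by conditioning each $\bw_i^0$ onto a thin shell where $\|\bw_i^0\|\approx 1$ (paying an $\Omega(d^{-C})$ conditioning probability and a Lipschitz comparison between $\sigma(\<\bw_i^0,\cdot\>)$ and $\sigma(\<\bw_i^0/\|\bw_i^0\|,\cdot\>)$), so that the weight becomes uniform on $\S^{d-1}$ and the same Schur-orthogonality computation used for Lemma~\ref{lem:rotation-sphere-alignment-old} gives $\E_{\bv}[\<f,\sigma(\<\bv,\cdot\>)\>^2]\geq \Omega(d^{-2l}\|\Pi_{\cV_{d,l}}f\|^2)$; Hoeffding is then applied to the (bounded) conditional quantities. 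You instead integrate the Gaussian out exactly, writing the expected squared correlation as $d^{-l/2}\<f,K_l f\>$ for the degree-$l$ dot-product kernel, and read off the lower bound from the Hermite--Gegenbauer eigendecomposition, keeping only the $\cV_{d,l}$ eigenvalue. Your route avoids the paper's somewhat delicate shell-conditioning (which, given the $\cN(0,I_d/\sqrt{d})$ scaling, is arguably the shakiest part of the paper's argument), at the price of needing the explicit eigenvalue bound $\lambda_{d,l}\gtrsim_l d^{-l}$ (standard, e.g.\ from the Gegenbauer computations underlying \cite{ghorbani2021linearized}) and a truncation/Bernstein argument in the concentration step because your summands are only polynomially bounded after conditioning on $\|\bw_i^0\|^2\lesssim \sqrt{d}$. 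Both arguments yield a $d^{-O_l(1)-c}$ gradient lower bound, which is what the (loosely stated) exponents in the lemma require; if you want to match the displayed constants exactly you would need to track the Hermite and Gegenbauer normalizations as you note, and reconcile them with the hyperparameter ranges in the statement.
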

\begin{proof}
Similarly to the proof of Lemma~\ref{lem:gd-boolean-achievability}, the proof relies on upper-bounding the smoothness of $\fNN$ in a ball around initialization, as well as lower-bounding the gradient at initialization.

First, lower-bound the gradient:
\begin{claim}
$\PP[\|\nabla_{\ba} \ell_{\cD}(\btheta^0)\|^2 \geq md^{-2l-2c-1/2}] \geq 49/50$.
\end{claim}
\begin{proof}
\begin{align*}
\nabla_{\ba} \ell_{\cD}(\btheta^0) &= \E_{(\bx,y) \sim \cD}[-y \nabla_{\ba} \fNN(\bx; \btheta^0)] = \E_{\bx \sim \S^{d-1}}[-f(\bx) \sigma(\bW^0 \bx)]
\end{align*}
Let $C > 0$ be a constant. For any $i \in [m]$, let $E_i$ be the event that $\|\bw_i^0\| \in [1-d^{-C}, 1+d^{-C}]$. Note $\PP[E_i] \geq \Omega(d^{-C})$.
And conditioned on the event $E_i$, we have
\begin{align*}|\EE_{\bx \sim \S^{d-1}}[f(\bx) \sigma(\<\bw_i^0, \bx\>) \mid E_i] - \EE_{\bx \sim \S^{d-1}}[f(\bx) \sigma(\<\frac{\bw_i^0}{\|\bw_i^0\|}, \bx\>) \mid E_i]| \leq \|f\| l2^{l-1} d^{-C} = O(d^{-C}). \end{align*}

So letting $\bv_i^0 = \bw_i^0 / \|\bw_i^0\|$ we have 
\begin{align*}
\EE_{\bx \sim \S^{d-1}}[f(\bx) \sigma(\<\bw_i^0, \bx\>) \mid E_i]^2 \geq \EE_{\bx \sim \S^{d-1}}[f(\bx) \sigma(\<\bv_i^0, \bx\>)]^2 - O(d^{-C})
\end{align*}
Finally, by (a) a tensorization trick, (b) the rotational invariance of the distribution $\bv_i^0$, (c) the Schur orthogonality relations for $\Grot$, 
\begin{align*}
\E_{\bv_i^0}[&\E_{\bx \sim \S^{d-1}}[f(\bx)\sigma(\<\bv_i^0, \bx\>)]^2] \\
&\stackrel{(a)}{=} \E_{\bv_i^0}[\E_{\bx,\bx' \sim \S^{d-1}}[f(\bx)f(\bx')\sigma(\<\bv_i^0, \bx\>)\sigma(\<\bv_i^0, \bx'\>)]] \\
&\stackrel{(b)}{=} \E_{g \sim \Grot, \bv_i^0}[\E_{\bx,\bx' \sim \S^{d-1}}[f(\bx)f(\bx')\sigma(\<g(\bv_i^0), \bx\>)\sigma(\<g(\bv_i^0), \bx'\>)]] \\
&= \E_{g \sim \Grot, \bv_i^0}[\E_{\bx,\bx' \sim \S^{d-1}}[f(\bx)f(\bx')\sigma(\<\bv_i^0, g^{-1}(\bx)\>)\sigma(\<\bv_i^0, g^{-1}(\bx')\>)]] \\
&= \E_{\bv_i^0}[\E_{g \sim \Grot}[\<f, \sigma(\<\bv_i^0, g^{-1}(\cdot)\>)\>_{\Capitalltwo(\S^{d-1})} \< f, \sigma(\<\bv_i^0, g^{-1}(\cdot)\>)\>_{\Capitalltwo(\S^{d-1})}]] \\
&\stackrel{(c)}{=} \E_{\bv_i^0}[\sum_{r=0}^{\infty} \frac{1}{\dim(\cV_{d,r})} \|\Pi_{\cV_{d,r}} f\|^2_{\Capitalltwo(\S^{d-1})} \|\Pi_{\cV_{d,r}} \sigma(\<\bv_i^0, \cdot\>)\|^2_{\Capitalltwo(\S^{d-1})}] \\
&\geq \Omega(d^{-2l} \cdot \|\Pi_{\cV_{d,l}} f\|^2)
\end{align*}
So combined we have
\begin{align*}
\E_{\bv_i^0}[\EE_{\bx \sim \S^{d-1}}[f(\bx) \sigma(\<\bw_i^0, \bx\>) \mid E_i]^2] \geq \Omega(d^{-C-2l} \cdot \|\Pi_{\cV_{d,l}} f\|^2) - O(d^{-2C}).
\end{align*}
Taking $C = 2l+2c$, we get 
\begin{align*}\E_{\bv_i^0}[\EE_{\bx \sim \S^{d-1}}[f(\bx) \sigma(\<\bw_i^0, \bx\>) \mid E_i]^2] \geq d^{-C}.\end{align*}
So since $m \geq 2C+1$, by Hoeffding's inequality we get
\begin{align*}\PP_{\bW^0}[\|\nabla_{\ba} \ell_{\cD}(\btheta^0)\|^2 \leq m d^{-C-1/2}] \ll 1,\end{align*}
proving the claim.
\end{proof}

The rest of the proof proceeds similarly to the achievability proof of Theorem~\ref{thm:bool-weak-char}. We can upper-bound the smoothness under the event that $\bW^0$ lies in a ball of radius 2, which occurs with high probability. And we can upper-bound the magnitude of the noise by Gaussian tail bounds. Therefore, the $1/\poly(d)$ lower bound on the gradient implies that a $\geq 1/\poly(d)$ amount of progress is made towards learning when training $\ba$ for one step. Finally, we can again take $R = \poly(d)$ since we can ensure that the gradients remain $\poly(d)$-bounded.

\end{proof}

\begin{proof}[Proof of achievability result of Theorem~\ref{thm:sphere-weak-char}]
The proof is analogous to the achievability result of Theorem~\ref{thm:bool-weak-char}, but using Lemma~\ref{lem:gd-sphere-achievability} instead of Lemma~\ref{lem:gd-boolean-achievability}.
\end{proof}

\section{Extension of MSP necessity result, proof of Theorem~\ref{thm:msp-bool}}\label{app:msp}
\begin{proof}[Proof of Theorem~\ref{thm:msp-bool}]
For any set $T \subseteq [P]$, let $\cS^{\not\subseteq T} = \{S \subseteq [P] : \hat{f}_*(S) \neq 0, S \not\subseteq T\}$. Since $h_*$ is not $\ell$-MSP, we can choose $T$ such that $\cS^{\not\subseteq T} \neq \emptyset$, and for all $S \in \cS^{\not\subseteq T}$, we have $|S \sm T| \geq \ell+1$.

Let $G' \subseteq \Gperm$ be the subgroup of permutations on indices $[d] \sm T$. Furthermore, let $\alpha(\bx) = \sum_{S \subseteq T} \hat{f}_*(S) \chi_S(\bx)$. Notice that $\alpha$ is invariant to the action of $G'$, since each permutation $\sigma \in G'$ keeps the indices $T$ fixed. Furthermore, $f_* - \alpha = \sum_{S \in \cS^{\not\subseteq T}} \hat{f}_*(S) \chi_S(\bx)$, so $$\|f_* - \alpha\|^2 > c_{h_*},$$ where $c_{h_*} = \min_{S \subseteq [P], \hat{h}_*(S) \neq 0} |\hat{h}_*(S)|^2 > 0$ is a constant depending only on $h_*$.

Let us now bound the $G'$-alignment of $f_* - \alpha$. For any $\beta \in \Capitalltwo(\cH_d)$, by Cauchy-Schwarz
\begin{align*}
\E_{\sigma \sim G'}[\<f_* \circ \sigma - \alpha \circ \sigma, \beta\>^2_{\Capitalltwo(\cH_d)}] &= \E_{\sigma \sim G'}[\<\sum_{S \in \cS^{\not\subseteq T}} \hat{f}_*(S) \chi_S \circ \sigma, \beta\>^2_{\Capitalltwo(\cH_d)}] \\
&\leq 2^P \sum_{S \in \cS^{\not\subseteq T}} \hat{f}_*(S)^2 \E_{\sigma \sim G'}[\<\chi_S \circ \sigma, \beta\>^2_{\Capitalltwo(\cH_d)}] \\
\end{align*}
And for any $S \in \cS^{\not\subseteq T}$, define $G'(S) = \{\sigma(S) : \sigma \in G'\}$. Then (a) by Parseval's, using the orthogonality of the Fourier coefficients, and (b) since $|S \sm T| \geq \ell+1$ by construction for all $S \in \cS^{\not\subseteq T}$, we have $|G'(S)| = (d - |T|)! / (d - |T| - |S \sm T|)! \geq d^{\ell+1} / 2^P$, for any $d \geq 2P$,
\begin{align*}
\E_{\sigma \sim G'}[\<\chi_S \circ \sigma, \beta\>^2_{\Capitalltwo(\cH_d)}] &= \frac{1}{|G'(S)|} \sum_{S' \in G'(S)} \<\chi_{S'}, \beta\>_{\Capitalltwo(\cH_d)}^2 \stackrel{(a)}{\leq} \frac{1}{|G'(S)|} \|\beta\|^2_{\Capitalltwo(\cH_d)} &\stackrel{(b)}{\leq} \|\beta\|^2_{\Capitalltwo(\cH_d)} 2^P / d^{\ell + 1}.
\end{align*}
We conclude that 
\begin{align*}
\E_{\sigma \sim G'}[\<f_* \circ \sigma - \alpha \circ \sigma, \beta\>^2_{\Capitalltwo(\cH_d)}] &\leq \|\beta\|^2_{\Capitalltwo(\cH_d)} 2^{2P}\sum_{S \in \cS^{\not\subseteq T}} \hat{f}_*(S)^2 / d^{\ell + 1} \leq \|\beta\|^2_{\Capitalltwo(\cH_d)} C_{h_*} / d^{\ell + 1},
\end{align*}
where $C_{h_*}$ is a constant depending only on $h_*$. So $$\mathcal{C}(f_* - \alpha; \cH_d, G') \leq C_{h_*} / d^{\ell + 1}.$$ Set $\eps_0 = c_{h_*} / 2$. By Theorem~\ref{thm:gd-lower} have
\begin{align*}
\PP_{\btheta^k}[\ell_{\cD}(\btheta^k) \leq \eps_0] \leq \frac{\eta L}{2 \tau}\sqrt{\frac{C_{h_*} k}{d^{\ell+1}}} + \frac{2C_{h_*}}{d^{\ell+1} c_{h_*}} \leq \frac{C_{h_*}\eta L}{\tau} \sqrt{\frac{k}{d^{\ell+1}}} + \frac{2C_{h_*}}{d^{\ell+1}}.
\end{align*}
\end{proof}

\section{Hardness results for SGD}

In this section, for $\gamma > 0$, we let $\cD(f,\mu_{\cX},\gamma) \in \cP(\cX \times \R)$ denote the distribution of $(\bx, f(\bx) + \xi)$ where $\bx \sim \mu_{\cX}$ and $\xi \sim \cN(0,\gamma^2)$ is independent noise.

\subsection{Warm-up: hardness from permutation equivariance}

Before proving Theorem~\ref{thm:sgd-sum-mod-8-hard}, we warm up with a simple observation that exploits permutation equivariance.

\begin{theorem}\label{thm:sgd-hard-warm-up}
For any $d$, define $f_d : \cH_d \to \{+1,-1\}$ to be the parity of the first $\floor{d/2}$ bits, i.e., $$f_d(\bx) = \prod_{i=1}^{\floor{d/2}} x_i.$$ Let $\{\fNNd, \mu_{\btheta,d}\}_{d \in \N}$ be a family of networks and initializations satisfying Assumption~\ref{ass:noskip} (FC, i.i.d. initialization). Let $\gamma > 0$, and let $\{n_d\}$ be sample sizes such that $(\fNNd, \mu_{\btheta_d})$-SGD training on $n_d$ samples from $\cD(f_d, \cH_d, \gamma)$ rounded to $\poly(d)$ bits yields parameters $\btheta_d$ with
\begin{align}\label{eq:sgd-perm-hard-pre-markov}
\E_{\btheta_d}[\|f_d - \fNN(\cdot; \btheta_d)\|^2] \leq 0.01.
\end{align}
Then, under $\gamma$-LPGN hardness, it is not possible to run $(\fNNd, \mu_{\btheta,d})$-SGD on $n_d$ samples and evaluate $\fNN(\cdot;\btheta_d)$ in $\poly(d)$ time.
\end{theorem}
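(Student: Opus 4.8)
The plan is to give an average-case reduction from $\gamma$-LPGN, using only the permutation-equivariance of SGD from Proposition~\ref{prop:equi} and the observation of \cite{ng2004feature} that a $\Gperm$-equivariant learner which learns $f_d$ also learns every function in its orbit $\{f_d\circ\sigma:\sigma\in S_d\}=\{\chi_S:|S|=\lfloor d/2\rfloor\}$. Assume, for contradiction, that one can run $(\fNNd,\mu_{\btheta,d})$-SGD on $n_d$ samples from $\cD(f_d,\cH_d,\gamma)$ (rounded to $\poly(d)$ bits) and evaluate the resulting $\fNN(\cdot;\btheta_d)$ in $\poly(d)$ time; in particular $n_d\le\poly(d)$. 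We will build a $\poly(d)$-time randomized solver for $(d,n_d,\gamma)$-LPGN with success probability at least $9/10$, contradicting Assumption~\ref{ass:lpgn-hardness}.

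\emph{Step 1: transfer the loss bound to every balanced parity.} By Assumption~\ref{ass:noskip} and Proposition~\ref{prop:equi}, $\ASGD$ is $\Gperm$-equivariant: for each $\sigma\in S_d$ and each sample sequence $\mathbf{t}=(\bx_i,y_i)_{i\in[n_d]}$ we have $\ASGD(\mathbf{t})\stackrel{d}{=}\ASGD(\sigma\cdot\mathbf{t})\circ\sigma$ with $\sigma\cdot\mathbf{t}=(\sigma(\bx_i),y_i)_{i\in[n_d]}$, the laws being over $\btheta^0\sim\mu_{\btheta,d}$. Fix any $S$ with $|S|=m:=\lfloor d/2\rfloor$ and any $\sigma$ with $\sigma(\{1,\dots,m\})=S$, and apply equivariance with $g=\sigma^{-1}$. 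Using $\chi_T\circ g=\chi_{g(T)}$, $f_d=\chi_{\{1,\dots,m\}}$, and the fact that permutations preserve $\mathrm{Unif}[\cH_d]$, one checks that when $\mathbf{t}$ is i.i.d.\ from $\cD(f_d,\cH_d,\gamma)$ the sequence $\sigma^{-1}\cdot\mathbf{t}$ is i.i.d.\ from $\cD(\chi_S,\cH_d,\gamma)$, while $\|f_d-\ASGD(\mathbf{t})\|^2_{\Capitalltwo(\cH_d)}\stackrel{d}{=}\|\chi_S-\ASGD(\sigma^{-1}\cdot\mathbf{t})\|^2_{\Capitalltwo(\cH_d)}$ (change of variables $\bz=\sigma^{-1}(\bx)$). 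Taking expectations and invoking the hypothesis \eqref{eq:sgd-perm-hard-pre-markov},
\begin{align*}
\E_{\mathbf{s}\sim\cD(\chi_S,\cH_d,\gamma)^{\otimes n_d},\,\btheta^0}\big[\|\chi_S-\ASGD(\mathbf{s})\|^2_{\Capitalltwo(\cH_d)}\big]=\E_{\mathbf{s}'\sim\cD(f_d,\cH_d,\gamma)^{\otimes n_d},\,\btheta^0}\big[\|f_d-\ASGD(\mathbf{s}')\|^2_{\Capitalltwo(\cH_d)}\big]\le 0.01
\end{align*}
for \emph{every} balanced $S$. (Rounding real-valued labels and parameters to $\poly(d)$ bits commutes with permuting input coordinates, so it does not interfere with this argument.)

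\emph{Step 2: the reduction and recovery of $\chi_S(\bq)$.} Given an LPGN instance $(S,\bq,(\bx_i,y_i)_{i\in[n_d]})$, with $S$ uniform among size-$m$ subsets, $\bq\sim\cH_d$ independent of everything else, and $(\bx_i,y_i)_i$ i.i.d.\ from $\cD(\chi_S,\cH_d,\gamma)$: run $(\fNNd,\mu_{\btheta,d})$-SGD on $(\bx_i,y_i)_{i\in[n_d]}$ to obtain $\tilde f=\ASGD((\bx_i,y_i)_i)=\fNN(\cdot;\btheta_d)$, evaluate $\tilde f(\bq)$, and output $\sgn(\tilde f(\bq))$. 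This runs in $\poly(d)$ time by assumption. Since $\bq$ is uniform on $\cH_d$ and independent of $S$ and of $(\btheta^0,(\bx_i,y_i)_i)$, Step~1 gives $\E[(\chi_S(\bq)-\tilde f(\bq))^2]=\E_S\,\E_{\mathbf{s},\btheta^0}[\|\chi_S-\tilde f\|^2_{\Capitalltwo(\cH_d)}]\le 0.01$, so by Markov's inequality $(\chi_S(\bq)-\tilde f(\bq))^2<1$ with probability at least $0.99$; on that event $\tilde f(\bq)$ has the same sign as $\chi_S(\bq)\in\{+1,-1\}$, so the output is correct. Hence the solver succeeds with probability at least $0.99\ge 9/10$, contradicting $\gamma$-LPGN hardness.

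\emph{Main obstacle.} The argument is short; the one place demanding care is the equivariance bookkeeping of Step~1 — pushing a single $\Capitalltwo$-loss guarantee for $\cD(f_d,\cH_d,\gamma)$, \emph{with the same constant and no loss}, to $\cD(\chi_S,\cH_d,\gamma)$ for all balanced $S$ simultaneously, which is precisely where the transitivity of $\Gperm$ on balanced subsets is used — together with the trick that evaluating the trained network at the uniformly random query point $\bq$ converts the average $\Capitalltwo$ error into a small pointwise error with high probability. The full Theorem~\ref{thm:sgd-sum-mod-8-hard} cannot get away with merely relabeling coordinates: it needs sign-flip equivariance and a genuine average-case reduction, which is why its noise parameter degrades to $\gamma/2$.
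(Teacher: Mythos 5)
Your proposal is correct and follows essentially the same route as the paper: both use the $\Gperm$-equivariance of SGD (Proposition~\ref{prop:equi}) to relate learning $f_d$ to learning an arbitrary balanced parity $\chi_S$, and both convert the $\Capitalltwo$ guarantee into a correct sign prediction at the uniformly random query $\bq$ via a Markov bound. The only cosmetic difference is that you transfer the loss bound to every $\chi_S$ and then run SGD on the LPGN samples directly, whereas the paper permutes the LPGN samples into the $f_d$ training distribution — the same change of variables.
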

This result follows straightforwardly from $G_{perm}$-equivariance of SGD in Proposition~\ref{prop:equi}, which states that SGD on FC networks with i.i.d. initialization is blind to the order of the input vectors' coordinates, up to an unknown shared permutation. So if $(\fNNd,\mu_{\btheta,d})$-SGD can learn $f(\bx) = \prod_{i=1}^{\floor{d/2}} x_i$ from $\gamma$-noisy samples, it can learn $f(\bx) = \chi_S(\bx)$ from noisy samples, for any $S \subseteq [d]$ with $|S| = \floor{d/2}$. But that is a $\gamma$-LPGN-hard problem.
\begin{proof}[Proof of Theorem~\ref{thm:sgd-hard-warm-up}]
Formally, consider the algorithm $\ASGD_d$ that runs $(\fNNd, \mu_{\btheta,d})$-SGD on $n_d$ samples $(\bx_i,y_i)_{i \in [n_d]}$ and outputs $\ASGD_d((\bx_i,y_i)_{i \in [n_d]}) : \cH_d \to \R$ given by $[\ASGD_d((\bx_i,y_i)_{i \in [n_d]})](\cdot) = \fNNd(\cdot;\btheta_d)$.
Let us prove that $\ASGD_d$ achieves probability of error $\leq 0.01$ on the $\gamma$-LPGN problem.

Let $(S, \bq, (x_i,y_i)_{i \in [n_d]})$ be a $(d,n_d,\gamma)$-LPGN instance, where $S \subseteq [d]$ is unknown, $\bq \sim \cH_d$ is random, and $(\bx_i,y_i)_{i \in [n_d]}$ is random with $\bx_i \stackrel{i.i.d}{\sim} \cH_d$ and $y_i = \prod_{i \in S} x_i + \xi_i$, for $\xi_i \sim \cN(0,\gamma^2)$. Let $\sigma \in S_d$ be a permutation that sends $\{1,\ldots,\floor{d/2}\}$ to $S$. Then by (a) permutation equivariance of $\ASGD_d$ from Proposition~\ref{prop:equi}; (b) letting $\tilde{\bq} = \sigma(\bq)$ and using that $\chi_S(\bq) = \prod_{i=1}^{\floor{d/2}} \tilde{\bq}_i$; (c) letting $(\tilde{\bx}_i, \tilde{y}_i)_{i \in [n_d]}$ be such that $\tilde{\bx}_i = \sigma(\bx_i)$ and $\tilde{y}_i = y_i$; and finally (d) using that $\tilde{\bq} \sim \cH_d$, and $(\tilde{\bx}_i,\tilde{y}_i)_{i \in [n_d]} \stackrel{i.i.d.}{\sim} \cD(f_*^{(d)}, \mu^{(d)}, \gamma)$ and a Markov bound on \eqref{eq:sgd-perm-hard-pre-markov}, ,
\begin{align*}
\PP\{\sgn([\ASGD_d((\bx_i,y_i)_{i \in [n_d]})](\bq)) = \chi_S(\bq)\}
&\stackrel{(a)}{=} \PP\{\sgn([\ASGD_d((\sigma(\bx_i),y_i)_{i \in [n_d]})](\sigma(\bq))) = \chi_S(\bq)\} \\
&\stackrel{(b)}{=} \PP\{\sgn([\ASGD_d((\sigma(\bx_i),y_i)_{i \in [n_d]})](\tilde{\bq})) = \prod_{i=1}^{\floor{d/2}} \tilde{q}_i\} \\
&\stackrel{(c)}{=} \PP\{\sgn([\ASGD_d((\tilde{\bx}_i,\tilde{y}_i)_{i \in [n_d]})](\tilde{\bq})) = \prod_{i=1}^{\floor{d/2}} \tilde{q}_i\} \\
&\stackrel{(d)}{\geq} 1 - 0.01 \geq 0.99.
\end{align*}
Therefore $\sgn(\ASGD_d)$ solves the $(d,n_d,\gamma)$-LPGN problem. By the $\gamma$-LPGN-hardness assumption this algorithm cannot run in $\poly(d)$ time. So if the $(\fNNd,\mu_{\btheta,d})$-SGD training can be run in $\poly(d)$ time and $\fNN(\cdot;\btheta_d)$ can be evaluated in $\poly(d)$ time, contradicting the $\gamma$-LPGN-hardness assumption.
\end{proof}

\subsection{Hardness from sign-flip equivariance, proof of Theorem~\ref{thm:sgd-sum-mod-8-hard}}

Our second result is not as obvious, and exploits the sign-flip equivariance of training on FC architectures with sign-flip-symmetric initialization (such as Gaussian initialization). Let $f_{\mathrm{mod8},d} : \cH_d \to \{0,\ldots,7\}$ denote the function given by $f_{\mathrm{mod8},d}(\bx) \equiv \sum_i x_i \pmod{8}$. For brevity, we drop the subscript with $d$ and write $\fmodeight$, since $d$ is clear from context.

Restated, our main result is:
\begin{theorem}[Theorem~\ref{thm:sgd-sum-mod-8-hard} restated]\label{thm:sgd-sum-mod-8-hard-restated}
Let $\{\fNNd, \mu_{\btheta,d}\}_{d \in \N}$ be a family of networks and initializations satisfying Assumption~\ref{ass:noskip} (fully-connected) with i.i.d. symmetric initialization. Let $\gamma > 0$, and let $\{n_d\}$ be sample sizes such that $(\fNNd,\mu_{\btheta,d})$-SGD training on $n_d$ samples from $\cD(\fmodeight,\cH_d,\gamma)$ rounded to $\poly(d)$ bits yields parameters $\btheta_d$ with 
\begin{align}\label{eq:sgd-sum-mod-8-hard-pre-markov}
\E_{\btheta_{d}}[\|\fmodeight - \fNN(\cdot;\btheta_d)\|^2] \leq 0.0001.
\end{align}
Then, under $(\gamma/2)$-LPGN hardness, it is not possible to run $(\fNNd,\mu_{\btheta,d})$-SGD on $n_d$ samples and evaluate $\fNN(\cdot;\btheta_d)$ in $\poly(d)$ time.
\end{theorem}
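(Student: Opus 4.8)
The plan is to argue by contradiction. Suppose $(\fNNd,\mu_{\btheta,d})$-SGD on $n_d$ samples runs in $\poly(d)$ time and achieves $\E_{\btheta_d}[\|\fmodeight-\fNN(\cdot;\btheta_d)\|^2]\le 0.0001$; I will build a $\poly(d)$-time algorithm solving $(d,n_d,\gamma/2)$-LPGN with success probability $\ge 9/10$, contradicting Assumption~\ref{ass:lpgn-hardness}. The starting point is the same observation as in the warm-up Theorem~\ref{thm:sgd-hard-warm-up}, but now using the $\Gsignperm$-equivariance of Proposition~\ref{prop:equi} (valid since the initialization is i.i.d.\ symmetric): restricting attention to pure sign flips $g=(\bs,\mathrm{id})$, the same argument shows that for \emph{every} $\bs\in\cH_d$, running SGD on $n_d$ samples from $\cD(\fmodeight(\bs\odot\cdot),\cH_d,\gamma)$ produces parameters $\btheta_d^{(\bs)}$ with $\E[\|\fmodeight(\bs\odot\cdot)-\fNN(\cdot;\btheta_d^{(\bs)})\|^2]\le 0.0001$. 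Hence it suffices to give an average-case reduction turning a $(\gamma/2)$-LPGN instance with hidden $S$, $|S|=\lfloor d/2\rfloor$, into $n_d$ i.i.d.\ samples from $\cD(\fmodeight(\bs^*\odot\cdot),\cH_d,\gamma)$, where $\bs^*$ is the sign vector with $s^*_j=-1$ exactly for $j\in S$, together with a decoder reading $\chi_S(\bq)$ off of any hypothesis close to $\fmodeight(\bs^*\odot\cdot)$.

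The reduction rests on an elementary identity, obtained by counting the coordinates on which $\bs^*\odot\bx$ and $\bx$ differ and using $|\{j\in S:x_j=-1\}|\bmod 2=(1-\chi_S(\bx))/2$:
\[
\fmodeight(\bs^*\odot\bx)=\Big(\sum_{j=1}^{d}x_j-2\lfloor d/2\rfloor+2-2\chi_S(\bx)\Big)\bmod 8 .
\]
Thus, setting $v^{+}(\bx):=\big(\sum_j x_j-2\lfloor d/2\rfloor\big)\bmod 8$ and $v^{-}(\bx):=\big(\sum_j x_j-2\lfloor d/2\rfloor+4\big)\bmod 8$ — both computable from $\bx$ and $d$ alone, and both lying in $\{0,\dots,7\}$ — we have $\fmodeight(\bs^*\odot\bx)=v^{+}(\bx)$ when $\chi_S(\bx)=+1$ and $=v^{-}(\bx)$ when $\chi_S(\bx)=-1$, while $v^{+}(\bx)-v^{-}(\bx)\in\{-4,+4\}$ since these two numbers are congruent mod $8$ and both in $\{0,\dots,7\}$.

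Given an LPGN sample $(\bx_i,y_i)$ with $y_i=\chi_S(\bx_i)+\xi_i$, $\xi_i\sim\cN(0,(\gamma/2)^2)$, I put $\tilde\bx_i:=\bx_i$ and
\[
\tilde y_i:=\frac{v^{+}(\bx_i)+v^{-}(\bx_i)}{2}+\frac{v^{+}(\bx_i)-v^{-}(\bx_i)}{2}\,y_i .
\]
This affine function of $y_i$ equals $v^{+}(\bx_i)$ at $y_i=+1$ and $v^{-}(\bx_i)$ at $y_i=-1$, so $\tilde y_i=\fmodeight(\bs^*\odot\bx_i)+\beta_i\xi_i$ with $\beta_i:=\tfrac12(v^{+}(\bx_i)-v^{-}(\bx_i))\in\{-2,+2\}$; hence $\beta_i\xi_i\sim\cN(0,\gamma^2)$, independent of $\bx_i$. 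Since $\bx_i\sim\cH_d$ is left untouched, $(\tilde\bx_i,\tilde y_i)$ is \emph{exactly} a sample of $\cD(\fmodeight(\bs^*\odot\cdot),\cH_d,\gamma)$ — no rejection or conditioning on $\bx_i$ is required, and the factor-$2$ slope $|\beta_i|=2$ is exactly why noise $\gamma/2$ on the parity becomes noise $\gamma$ on $\fmodeight$, which explains the hypothesis. (Rounding everything to $\poly(d)$ bits only perturbs $\tilde y_i$ by $2^{-\poly(d)}$, absorbed into the rounding already allowed in the statement.) I expect this to be the only non-routine step: the naïve transformation ``reduce $\sum_j x_j-2\lfloor d/2\rfloor+2-2y_i$ modulo $8$'' makes the Gaussian noise wrap around with constant probability whenever $\fmodeight(\bs^*\odot\bx_i)$ is near $0$ or $7$, and attempting to fix this by conditioning $\bx_i$ on a favorable value of $(\sum_j x_j)\bmod 8$ would destroy the uniform input distribution the learner's guarantee needs; interpolating between the two in-range candidates $v^{\pm}(\bx_i)$ sidesteps both problems at once.

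Feeding $(\tilde\bx_i,\tilde y_i)_{i\in[n_d]}$ to SGD and applying the equivariance argument above yields, in $\poly(d)$ time, parameters $\btheta_d$ with $\E[\|\fmodeight(\bs^*\odot\cdot)-\fNN(\cdot;\btheta_d)\|^2]\le 0.0001$. By Markov, with probability $\ge 0.99$ over the training this $L^2$ error is $\le 0.01$; since $\fmodeight(\bs^*\odot\cdot)$ takes only integer values of parity $d\bmod 2$ in $\{0,\dots,7\}$ (hence spaced $\ge 2$ apart), on this event $\PP_{\bq\sim\cH_d}[\,|\fNN(\bq;\btheta_d)-\fmodeight(\bs^*\odot\bq)|\ge 1\,]\le 0.01$, so rounding $\fNN(\bq;\btheta_d)$ to the nearest such integer recovers $\fmodeight(\bs^*\odot\bq)$ with probability $\ge 0.98$ over the training and $\bq$. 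Finally, inverting the identity at $\bq$ gives $2\chi_S(\bq)\equiv \sum_j q_j-2\lfloor d/2\rfloor+2-\fmodeight(\bs^*\odot\bq)\pmod 8$, whose right-hand side is $\equiv 2$ or $\equiv 6\pmod 8$, which determines $\chi_S(\bq)\in\{+1,-1\}$ uniquely; the algorithm outputs this value. It therefore solves $(d,n_d,\gamma/2)$-LPGN with probability $\ge 9/10$ in $\poly(d)$ time, contradicting $(\gamma/2)$-LPGN-hardness and completing the argument.
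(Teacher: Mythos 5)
Your proposal is correct and follows essentially the same route as the paper: sign-flip equivariance transfers the $\fmodeight$ guarantee to $\fmodeight(\bs\odot\cdot)$ for an unknown $\bs$ (the paper packages this as the SM8/SFSM8 problems and Lemma~\ref{lem:sgd-sign-helper}), and your affine relabeling $\tilde y_i=\tfrac{v^++v^-}{2}+\tfrac{v^+-v^-}{2}y_i$ with slope $\pm2$ is the same wraparound-avoiding, noise-doubling trick as the three-case formula in the paper's LPGN-to-SFSM8 reduction (Lemma~\ref{lem:lpgn-to-sfsm8}), followed by the same Markov-plus-rounding decoding of $\chi_S(\bq)$. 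The only differences are presentational (your sign convention for $\bs^*$ and skipping the explicit SFSM8 intermediate problem), so no further comment is needed.
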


The proof of Theorem~\ref{thm:sgd-sum-mod-8-hard} can be also proved via a reduction to LPGN. However, this reduction is much less straightforward.

\subsubsection{The secretly-flipped sum-mod-8 (SFSM8) problem}

We prove the hardness of learning sum-mod-8 with SGD by using only the sign-flip equivariance, and no other properties of the SGD algorithm. The idea is that any sign-flip-equivariant algorithm that can learn $\fmodeight$, must also be capable of solving a much more difficult problem. First, define the problem of outputting sum-mod-8. 
\begin{problem}
The $(d,n,\gamma)$-SM8 (sum-mod-8) problem is parametrized by $\gamma > 0$ and integers $d,n > 0$. It is as follows:
\begin{itemize}
    \item Input: query vector $\bq \sim \cH_d$, samples $(\bx_i,y_i)_{i \in [n]} \stackrel{i.i.d.}{\sim} \cD(\fmodeight,\cH_d,\gamma)$.
\item Task: return $\fmodeight(\bq) \in \{0,\ldots,7\}$.
\end{itemize}
\end{problem}

Notice that SM8 is a trivial problem: an algorithm that was not sign-flip equivariant could ignore the samples $(\bx_i,y_i)_{i \in [n]}$, and simply return $\fmodeight(\bq)$. However, sign-flip equivariance makes solving SM8 much more difficult: we prove that any sign-flip equivariant algorithm that can solve SM8, can also solve the problem SFSM8, defined below.

\begin{problem}
The $(d,n,\gamma)$-secretly-flipped sum-mod-8 (SFSM8) problem is parametrized by $\gamma > 0$, and integers $d,n > 0$. It is as follows:
\begin{itemize}
    \item Unknown: sign-flip vector $\bs \in \cH_d$.
    \item Input: query vector $\bq \sim \cH_d$, modified samples $(\bx_i \odot \bs,y_i)_{i \in [n]}$, where $(\bx_i,y_i)_{i \in [n]} \stackrel{i.i.d.}{\sim} \cD(\fmodeight,\cH_d,\gamma)$.
    \item Task: return $\fmodeight(\bq \odot \bs) \in \{0,\ldots,7\}$.
\end{itemize}
\end{problem}

\begin{lemma}\label{lem:sgd-sign-helper}
Let $\cA$ be a $\Gsign$-equivariant algorithm in the sense of Definition~\ref{def:equi-sgd}. Then $\cA$'s error probability on SM8 equals $\cA$'s error probability on SFSM8.
\end{lemma}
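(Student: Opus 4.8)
First I would reduce the statement to a ``relabeling'' identity. After the change of variable $\bz_i := \bx_i \odot \bs$ in the samples that $\cA$ actually sees, an SFSM8 instance with a fixed hidden sign-flip $\bs$ is \emph{identical}, as a joint law of (observations, desired output), to an instance of ``SM8 with the target $\fmodeight$ replaced by $g_{\bs} := \fmodeight \circ \phi_{\bs}$'', where $\phi_{\bs}(\bx) := \bx \odot \bs$. Indeed, if $\bx_i \sim \cH_d$ are i.i.d.\ and $y_i = \fmodeight(\bx_i) + \xi_i$ with $\xi_i \sim \cN(0,\gamma^2)$, then $\bz_i := \bx_i \odot \bs \sim \cH_d$ are i.i.d.\ and $y_i = \fmodeight(\bz_i \odot \bs) + \xi_i = g_{\bs}(\bz_i) + \xi_i$, so $\cA$ observes i.i.d.\ samples from $\cD(g_{\bs}, \cH_d, \gamma)$ and is asked to output $\fmodeight(\bq \odot \bs) = g_{\bs}(\bq)$. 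Hence it suffices to show: for any $f : \cH_d \to \R$ and any $\bs \in \cH_d$, a $\Gsign$-equivariant $\cA$ has the same error probability when the target is $f$ as when the target is $f \circ \phi_{\bs}$. Applying this with $f = \fmodeight$ and using that $\phi_{\bs}$ is an involution (so $\fmodeight \circ \phi_{\bs} = g_{\bs}$), I get that $\cA$'s error with target $g_{\bs}$ equals its error with target $\fmodeight$; thus the SFSM8 error equals the SM8 error \emph{for every} $\bs$, which is the claim.

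\textbf{The relabeling identity.} Here ``$\cA$ attempts SM8 with target $h$'' means: $\cA$ is fed i.i.d.\ $(\bz_i, w_i)_{i\in[n]}$ with $\bz_i \sim \cH_d$ and $w_i = h(\bz_i)+\xi_i$, computes the random function $\hat f := \cA((\bz_i, w_i)_{i\in[n]})$, and answers $\rho(\hat f(\bq))$ on a query $\bq \sim \cH_d$, where $\rho : \R \to \{0,\ldots,7\}$ is a fixed rounding map; it errs iff $\rho(\hat f(\bq)) \neq h(\bq)$. Fix $\bs$ and take $h = f \circ \phi_{\bs}$. Substituting $\bz_i = \phi_{\bs}(\tilde\bz_i)$ and $\bq = \phi_{\bs}(\tilde\bq)$ and using that $\cH_d$ is $\phi_{\bs}$-invariant and $\phi_{\bs}^2 = \mathrm{id}$, the objects $(\tilde\bz_i)_{i\in[n]}$ and $\tilde\bq$ are again i.i.d.\ uniform, $w_i = f(\tilde\bz_i)+\xi_i$, and the error event becomes $\rho\big(\cA((\phi_{\bs}(\tilde\bz_i), w_i)_{i\in[n]})(\phi_{\bs}(\tilde\bq))\big) \neq f(\tilde\bq)$. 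Now I apply $\Gsign$-equivariance (Definition~\ref{def:equi-sgd}) with $g = \bs$: for \emph{every deterministic} sample set, $\cA((\tilde\bz_i, w_i)_{i\in[n]}) \stackrel{d}{=} \cA((\phi_{\bs}(\tilde\bz_i), w_i)_{i\in[n]}) \circ \phi_{\bs}$ as random functions, so evaluating at the fixed point $\tilde\bq$ gives $\cA((\tilde\bz_i, w_i)_{i\in[n]})(\tilde\bq) \stackrel{d}{=} \cA((\phi_{\bs}(\tilde\bz_i), w_i)_{i\in[n]})(\phi_{\bs}(\tilde\bq))$. Since $f(\tilde\bq)$ depends only on $\tilde\bq$, conditioning on $\big((\tilde\bz_i, w_i)_{i\in[n]}, \tilde\bq\big)$ and taking probability over $\cA$'s internal randomness shows the conditional error probabilities of the target-$(f\circ\phi_{\bs})$ and target-$f$ problems agree; integrating over the (common) law of the conditioning variables gives equality of the unconditional error probabilities. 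This proves the identity, and hence the lemma.

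\textbf{Main obstacle.} Nothing here is conceptually deep; the care is entirely in the bookkeeping. The three points to watch are: (i) orienting the group action correctly and using that each $\bs \in \Gsign$ is its own inverse, so the change of variables is clean and $h \circ \phi_{\bs}$ collapses to the intended target; (ii) remembering that Definition~\ref{def:equi-sgd} is a statement for each \emph{fixed} sample set, so one must first condition on the realized data and query and only then invoke equivariance over $\cA$'s internal randomness; and (iii) observing that the change of variables maps query and correct answer together, so the event that $\rho(\hat f(\bq))$ equals the target value on $\bq$ is genuinely matched up. In particular, the argument shows the SFSM8 error does not depend on the hidden $\bs$, so the worst-case and average-case (over $\bs$) versions coincide, both equal to the SM8 error.
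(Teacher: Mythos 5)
Your proposal is correct and takes essentially the same route as the paper: fix the realized samples and query, invoke the per-sample-set equivariance of Definition~\ref{def:equi-sgd} with the sign-flip $\bs$ (its own inverse), and use the $\Gsign$-invariance of the uniform measure on $\cH_d$ so that the query and the correct answer transform together, which identifies SFSM8 with ``SM8 for the target $\fmodeight(\cdot\odot\bs)$''. Your more explicit bookkeeping is, if anything, cleaner than the paper's two-line version (whose final display compares the transported prediction to $\fmodeight(\bq\odot\bs)$ where the change of variables actually yields $\fmodeight(\bq)$), but the underlying argument is identical.
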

\begin{proof}
By sign-flip equivariance, for any $\bs,\bq$, and any samples $(\bx_i,y_i)_{i \in [n]}$,
$$[\cA((\bx_i \odot \bs,y_i)_{i \in [n]})](\bq\odot \bs) \stackrel{d}{=} [\cA((\bx_i,y_i)_{i\in [n]})](\bq).$$
So drawing $(\bx_i,y_i)_{i \in [n]}  \stackrel{i.i.d.}{\sim} \cD(\fmodeight,\cH_d,\gamma)$, $$\PP[[\cA((\bx_i,y_i)_{i \in [n]})](\bq) = \fmodeight(\bq)] = \PP[[\cA((\bx_i \odot \bs,y_i)_{i \in [n]})](\bq \odot \bs) = \fmodeight(\bq \odot \bs)].$$
\end{proof}

\subsubsection{Reduction from LPGN to SFSM8}
Now we show that there is no algorithm that solves SFSM8 in polynomial time and samples, under the cryptographic assumption that learning parities with Gaussian noise (LPGN) is hard.

\begin{lemma}[Reduction from LPGN to SFSM8]\label{lem:lpgn-to-sfsm8}
Given access to an oracle $\cA$ for $(d,n,\gamma_{\cA})$-SFSM8 that is correct with probability $> 9/10$, one can construct an algorithm $\cB$ for $(d,n,\gamma_{\cA}/2)$-LPGN that is correct with probability $> 9/10$ and runs in one call to $\cA$ and $\poly(n,d)$ additional time. Furthermore, $\gamma_{\cB} = \gamma_{\cA} / 2$.
\end{lemma}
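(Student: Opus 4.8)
I will exhibit an explicit reduction: given the oracle $\cA$ for $(d,n,\gamma_{\cA})$-SFSM8, build $\cB$ for $(d,n,\gamma_{\cA}/2)$-LPGN. On input an LPGN instance $(S,\bq,(\bx_i,y_i)_{i\in[n]})$ with $y_i=\chi_S(\bx_i)+\xi_i$, $\xi_i\sim\cN(0,\gamma_{\cB}^2)$ and $\gamma_{\cB}=\gamma_{\cA}/2$, the algorithm $\cB$ will feed $\cA$ an SFSM8 instance whose \emph{secret} sign-flip vector is $\bs=\bs(S)\in\cH_d$ defined by $s_j=+1$ for $j\in S$ and $s_j=-1$ for $j\notin S$. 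The conceptual obstacle -- and the part to be careful about -- is that $\cB$ does not know $S$, hence cannot compute $\bs$; the reduction must be arranged so that every quantity $\cB$ writes down depends only on the public data $(\bq,(\bx_i,y_i)_i)$ and on the promised value $|S|=\lfloor d/2\rfloor$, while the constructed instance is nonetheless exactly distributed as an SFSM8 instance with this hidden $\bs$.

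\textbf{Two elementary identities.} First, writing $k_S(\bx)=|\{j\in S:x_j=-1\}|$ we have $\sum_{j\in S}x_j=|S|-2k_S(\bx)$ and $\chi_S(\bx)=(-1)^{k_S(\bx)}$, so $\chi_S(\bx)$ is a function of $\sum_{j\in S}x_j\bmod 4$ and conversely: $\chi_S(\bx)=+1$ iff $\sum_{j\in S}x_j\equiv|S|\pmod 4$. Second, $\fmodeight(\bx\odot\bs)\equiv\sum_j s_jx_j\equiv \sum_j x_j-2\sum_{j\notin S}x_j\pmod 8$; since $\sum_j x_j$ is public and $\sum_{j\notin S}x_j\bmod 4=(\sum_j x_j-\sum_{j\in S}x_j)\bmod 4$ is determined by $\chi_S(\bx)$ together with the public quantities $\sum_j x_j$ and $|S|$, it follows that $\fmodeight(\bx\odot\bs)$ must equal one of two values $v^{+}(\bx),\,v^{-}(\bx):=(v^{+}(\bx)+4)\bmod 8$, both computable by $\cB$ from $\bx$, $\sum_j x_j$, and $|S|=\lfloor d/2\rfloor$ -- concretely $v^{+}(\bx)=(-\sum_j x_j+2\lfloor d/2\rfloor)\bmod 8$ -- with $\fmodeight(\bx\odot\bs)=v^{+}(\bx)$ precisely when $\chi_S(\bx)=+1$.

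\textbf{Construction and its distribution.} $\cB$ keeps the query unchanged, $\bq'=\bq$ (uniform, as SFSM8 requires), and for each $i$ sets $\bz_i=\bx_i$ together with
\[
y'_i \;=\; \tfrac12\bigl(v^{+}(\bx_i)+v^{-}(\bx_i)\bigr)\;+\;\tfrac12\bigl(v^{+}(\bx_i)-v^{-}(\bx_i)\bigr)\,y_i .
\]
Plugging in $y_i=\chi_S(\bx_i)+\xi_i$ and using the second identity gives $y'_i=\fmodeight(\bx_i\odot\bs)+\tfrac12(v^{+}(\bx_i)-v^{-}(\bx_i))\,\xi_i$, and since $v^{+}(\bx_i)-v^{-}(\bx_i)\in\{+4,-4\}$ the added noise is $\cN(0,4\gamma_{\cB}^2)=\cN(0,\gamma_{\cA}^2)$; as Gaussians are symmetric it is independent of $\bz_i$, and i.i.d.\ across $i$. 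Writing $\bx_i^{\mathrm{SFSM8}}:=\bx_i\odot\bs$, which is uniform on $\cH_d$ and i.i.d.\ because $\bx_i$ is, one has $(\bz_i,y'_i)=\bigl(\bx_i^{\mathrm{SFSM8}}\odot\bs,\ \fmodeight(\bx_i^{\mathrm{SFSM8}})+\cN(0,\gamma_{\cA}^2)\bigr)$, which is exactly the law of a $(d,n,\gamma_{\cA})$-SFSM8 sample with secret $\bs$. Hence $\cB$ may legitimately run $\cA$ on $(\bq',(\bz_i,y'_i)_{i\in[n]})$.

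\textbf{Decoding, correctness, and cost.} With probability $>9/10$ the oracle returns $\hat v=\fmodeight(\bq\odot\bs)$. From $\hat v\equiv\sum_j q_j-2\sum_{j\notin S}q_j\pmod 8$ and the public $\sum_j q_j$ (note both $\sum_j q_j$ and $\hat v$ are $\equiv d\pmod 2$, so the divisions below are over the integers), $\cB$ recovers $\sum_{j\in S}q_j\equiv\tfrac12(\sum_j q_j+\hat v)\pmod 4$ and outputs $+1$ iff $\tfrac12(\sum_j q_j+\hat v)\equiv\lfloor d/2\rfloor\pmod 4$, else $-1$; by the first identity this equals $\chi_S(\bq)$. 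Thus $\cB$ solves $(d,n,\gamma_{\cA}/2)$-LPGN with probability $>9/10$, makes one call to $\cA$ plus $\poly(n,d)$ arithmetic (all transformations are affine with $O(1)$-size coefficients, so keeping $\poly(d)$-bit precision is harmless), and has $\gamma_{\cB}=\gamma_{\cA}/2$. In the write-up the one point deserving care is confirming that the hidden secret $\bs=\bs(S)$ is consistent across all $n$ samples and the query while never being computed by $\cB$ -- which is exactly what the two-value description $\{v^{+},v^{-}\}$ and the affine formula for $y'_i$ deliver.
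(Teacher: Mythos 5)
Your reduction is correct and is essentially the paper's: both define the secret $\bs$ by $s_j=+1$ on $S$ and $-1$ off $S$, use the identity $\fmodeight(\bx\odot\bs)\equiv 2\chi_S(\bx)+2|S|-2-\sum_j x_j \pmod 8$ to transform each LPGN label affinely (your $v^{\pm}$ interpolation formula is just the paper's explicit case split on $t_i=2|S|-2-\sum_j x_{i,j}\bmod 8$ written in closed form, with the same $\pm 2\xi_i$ noise scaling giving $\gamma_{\cA}=2\gamma_{\cB}$), and both decode $\chi_S(\bq)$ from the oracle's answer via the same congruence, only phrased mod $4$ rather than as the paper's ``ans $\equiv\pm2\pmod 8$'' test.
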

\begin{figure}[t!]
\centering
\begin{algbox}
\textbf{Algorithm} \textsc{Reduce-LPGN-to-SFSM8}

\vspace{2mm}

\textit{Inputs}: query $\bq \in \cH_d$, samples $(\bx_i,y_i)_{i \in [n]}$ from an instance of $(d,n,\gamma_{\cA}/2)$-LPGN, oracle $\cA$ for $(d,n,\gamma_{\cA})$-SFSM8.

\begin{enumerate}
\item For each $i \in [n]$, compute $t_i \in \{0,\ldots,7\}$ by $$t_i \equiv 2|S| - 2 - \sum_{j \in [d]} x_{i,j} \pmod{8},$$ and let $$\tilde{y}_i = \begin{cases} t_i + 2y_i, & t_i \in \{2,3,4,5\} \\ t_i + 4 - 2y_i, & t_i \in \{0,1\} \\ t_i - 4 - 2y_i, & t_i \in \{6,7\} \end{cases}.$$

\item Let $\mbox{ans} = \cA(\bq, (\bx_i,\tilde{\by}_i)_{i \in [n]}) - 2|S| + 2 + \sum_{j \in [d]} x_j$.

\item If $\mbox{ans} \equiv 2 \pmod{8}$, return $1$. Else if $\mbox{ans} \equiv -2 \pmod{8}$, return $-1$. Else, return ``error''.
\end{enumerate}
\vspace{1mm}
\end{algbox}
\caption{Reduction from LPGN to SFSM8 (Lemma~\ref{lem:lpgn-to-sfsm8}). Here, $\gamma_{\cB} = \gamma_{\cA}/2$. Note that $|S| = \floor{d/2}$ is known, so the reduction is efficient.}
\label{fig:lpgn-to-sfsm8}
\end{figure}
\begin{proof}
The pseudocode for the reduction is given in Figure~\ref{fig:lpgn-to-sfsm8}. We prove correctness.
Let $S \subseteq [d]$ be the unknown subset for the LPGN problem. Define $\bs \in \cH_d$ where $s_j = 1$ for all $j \in S$, and $s_j = -1$ for all $j \not\in S$. Then, for any $\bx \in \cH_d$,
\begin{align*}
\sum_{j \in [d]} x_j s_j &= \sum_{j \in [d]} x_j(1+s_j) - \sum_{j \in [d]} x_j \\
&= 2|\{j \in S : x_j = 1\}| - 2|\{j \in S : x_j = -1\}| - \sum_{j \in [d]} x_j \\
&= 2(|S| - 2|\{j \in S : x_j = -1\}|) - \sum_{j \in [d]} x_j \\
&\equiv 2 \prod_{j \in S} x_j + 2|S| - 2 - \sum_{j \in [d]} x_j \pmod{8}
\end{align*}
where in the last line we use $|\{j \in S : x_j = -1\}| \equiv (\prod_{j \in S} x_j - 1)/2 \pmod{2}$.

This guarantees that after Step 1 of the algorithm, the samples $(\bx_i,\tilde{y}_i)_{i \in n}$ are distributed as if they were drawn from an instance of $(d,n,\gamma_{\cA})$-SFSM8 with secret sign-flip vector $\bs$. I.e., $\bx_i \stackrel{i.i.d.}{\sim} \cH_d$, and $\tilde{y}_i = \fmodeight(\bx_i) + \xi_i$ for $\xi_i  \stackrel{i.i.d}{\sim} \cN(0,\gamma_{\cA}^2)$.

By the correctness guarantee of $\cA$, we know that with probability at least 9/10, the output $\cA(\bq,(\bx_i,\tilde{y}_i)_{i \in [n]}) \in \{0,\ldots,7\}$ satisfies $\cA(\bq,(\bx_i,\tilde{y}_i)_{i \in [n]}) \equiv \sum_i q_i s_i \pmod{8}$. So by the above calculations, $\mbox{ans} \equiv 2\prod_{j \in S} q_S$, proving correctness.
\end{proof}

\subsubsection{Proof of Theorem~\ref{thm:sgd-sum-mod-8-hard}}

\begin{proof}[Proof of Theorem~\ref{thm:sgd-sum-mod-8-hard}]
Suppose by contradiction that $(\fNNd,\mu_{\btheta,d})$-SGD can be run in $\poly(d)$ time and the learned function can be evaluated in $\poly(d)$ time. Let $\cA_d = \mathrm{round}(\fNNd(\cdot;\btheta_d))$ be the algorithm that runs $(\fNNd, \mu_{\btheta,d})$-SGD on $n_d$ samples $(\bx_i,y_i)_{i \in [n_d]}$ and returns the learned function, rounded to the nearest integer. By a Markov bound on \eqref{eq:sgd-sum-mod-8-hard-pre-markov}, $\PP\{[\cA_d((\bx_i,y_i)_{i \in [n_d]})](\bq) = \fmodeight(\bq)] \geq 1 - 0.004 \geq 0.99$. Furthermore, $\cA_d$ is $\Gsign$-equivariant by Proposition~\ref{prop:equi}. So by Lemma~\ref{lem:sgd-sign-helper}, $\cA_d$ gives a $\poly(d)$-size circuit for the $(d,n_d,\gamma)$-SFSM8 problem. By the LPGN-to-SFSM8 reduction in Lemma~\ref{lem:lpgn-to-sfsm8}, we see that this contradicts the $(\gamma/2)$-LPGN-hardness assumption.
\end{proof}

\subsection{On the cryptographic assumption that LPGN is hard}\label{app:lpgn}

In our SGD hardness results, we assume hardness of the LPGN (Learning Parities with Gaussian Noise) problem from Definition~\ref{def:lpgn}. The standard hardness assumption in the literature is hardness of LPN (Learning Parities with Noise). The differences are that in LPN: (1) the noise is classification noise instead of Gaussian noise; (2) there is no promise that the unknown subset has size $|S| = \floor{d/2}$. In this appendix, we show that our LPGN assumption can be derived from the LPN assumption, with slightly different parameters.

\begin{definition}\label{def:lpn}
The learning parities with noise, $(d,n,\rho)$-LPN, problem is parametrized by $d,n \in \ZZ_{> 0}$ and $\rho \in (0,1]$. An instance $(S, \bq, (\bx_i,y_i)_{i \in [n]})$ consists of
(i) an unknown subset $S \subseteq [d]$, and (ii) a known query vector $\bq \sim \cH_d$, and i.i.d. samples $(\bx_i,y_i)_{i \in [n]}$ such that $\bx_i \sim \cH_d$ and $y_i = \chi_S(\bx_i)\zeta_i$, where $$\zeta_i \sim \begin{cases} 1, & \mbox{ w.p. } (1+\rho)/2 \\ -1, & \mbox{ w.p. } (1-\rho)/2 \end{cases}.$$ The task is to return $\chi_S(\bq) \in \{+1,-1\}$.
\end{definition}

In order to reduce from LPN to LPGN, let us define promise-LPN:
\begin{definition}\label{def:promise-lpn}
The promise-LPN problem is the LPN problem with the promise that $|S| = \floor{d/2}$.
\end{definition}

We prove the following theorem, where $\rho(d)$-LPN is the LPN problem where can take any number of samples $n$, and the value of $\rho$ depends on $d$.
\begin{theorem}
Suppose that for all constants $C > 0$, and any $\rho(d) \leq 1 - \exp(-C\sqrt{\log(d)})$, there are no $\poly(d)$-time algorithms for $\rho(d)$-LPN. Then, for any constant $\gamma > 0$, there are no $\poly(d)$-time algorithms for $\gamma$-LPGN.
\end{theorem}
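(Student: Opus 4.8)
The plan is to prove the contrapositive: from a $\poly(d)$-time algorithm $\cA$ for $\gamma$-LPGN (with $\gamma$ a fixed constant and sample budget $n_d \le d^{a}$) I build a $\poly(d)$-time algorithm for $\rho(d)$-LPN with $\rho(d) = 1 - \exp(-C\sqrt{\log d})$ for a suitable constant $C = C(\gamma,a)$; invoking the hypothesis with this value of $C$ then yields a contradiction. The construction composes two reductions: (I) a \emph{noise-conversion} step turning $\rho(d)$-classification noise into $\gamma$-Gaussian noise on a per-sample basis, and (II) a \emph{de-promising} step that removes the requirement $|S| = \floor{d'/2}$ in LPGN and arranges that $\cA$ is only ever queried at uniformly distributed points, so that its $9/10$ guarantee applies.

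For (I): a $\rho$-classification-noisy label is $y = \chi_S(\bx)\zeta$ with $\zeta = \pm 1$ of bias $\rho$, and I want $y' = \chi_S(\bx) + \xi$ with $\xi \sim \cN(0,\gamma^2)$. It suffices to find a randomized map $\phi\colon\{\pm1\}\to\R$ applied to $y$ (independently per sample) with $\tfrac{1+\rho}2\phi(b)+\tfrac{1-\rho}2\phi(-b) = \cN(b,\gamma^2)$ for $b=\pm1$; solving the resulting $2\times2$ linear system forces $\phi(1)$ to have the (signed) density $p(x) = \tfrac{(1+\rho)\varphi_\gamma(x-1)-(1-\rho)\varphi_\gamma(x+1)}{2\rho}$ and $\phi(-1)$ to have density $p(-x)$, where $\varphi_\gamma$ is the $\cN(0,\gamma^2)$ density. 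This $p$ is a genuine density except that it goes negative on the far tail $x \lesssim \tfrac{\gamma^2}{2}\log\tfrac{1-\rho}{1+\rho}$, where the Gaussian mass is only $\exp(-\Theta(\gamma^2(\log\tfrac1{1-\rho})^2))$; truncating there produces a valid channel whose per-sample total-variation distance from the target is $\delta = \exp(-\Theta(\gamma^2(\log\tfrac1{1-\rho})^2))$. Over the $n_d=\poly(d)$ samples fed to $\cA$, and over the $\poly(d)$ many calls to $\cA$ that step (II) makes, the accumulated error is $\poly(d)\cdot\delta$, and requiring this to be $o(1)$ is exactly what forces $1-\rho(d) \le \exp(-C\sqrt{\log d})$; the query is passed through unchanged and $\cA$ returns $\chi_S(\cdot)$ directly.

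For (II): pad an LPN$_d$ instance with hidden $S$ to dimension $d'=2d$ by appending a fresh uniform block $\br_i\in\cH_d$ and a fixed block $T$ of the new coordinates, replacing $(\bx_i,y_i)$ by $(\,(\bx_i,\br_i),\,y_i\chi_T(\br_i)\,)$; if $|T| = d-|S|$ then the padded secret $S' = S\cup T$ has $|S'| = d = \floor{d'/2}$, so I guess $|S|\in\{0,\dots,d\}$ and run the rest once per guess. To answer a \emph{fixed} query $\bq_0\in\cH_d$ while only feeding $\cA$ queries uniform on $\cH_{d'}$, use self-reducibility: draw $\br\sim\cH_d$, pad $\bq_0\odot\br$ and $\br$ with fresh uniform blocks to get $\bq^{(1)},\bq^{(2)}\sim\cH_{2d}$, call $\cA$ on each with disjoint noise-converted sample sets (and, if $\cA$'s guarantee is stated over a random secret, after re-randomizing $S'$ by a uniform permutation of $[2d]$, which preserves uniformity of queries), and combine via $\chi_S(\bq_0) = \chi_S(\bq_0\odot\br)\chi_S(\br)$ together with the known corrections $\chi_T(\cdot)$; this answers any fixed query with probability $\ge 1-2(\tfrac1{10}+o(1))$. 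Amplify to error $1/d^2$ per query by a majority vote over $O(\log d)$ independent runs, then recover a candidate secret $\hat S$ for each guess by querying at $\bone_{2d}-2\be_i$ for all $i\in[2d]$ (union bound over $2d$ queries) and intersecting with $[d]$, and finally disambiguate the $d+1$ guesses by testing each $\hat S$ against $\Theta(\log(d)/\rho^2)$ held-out LPN samples: the correct guess yields $\hat S=S$ with empirical $y$-agreement $\ge\tfrac{1+\rho}2$, any $\hat S\ne S$ yields agreement $\tfrac12\pm o(1)$ (Chernoff plus a union bound over guesses), and a wrong guess on which $\cA$ happens to succeed also yields $\hat S=S$ and hence the correct answer; so outputting the common answer of the test-passing guesses is correct with probability $\ge 1-O(1/d)$, in $\poly(d)$ time and $\poly(d)$ samples.

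The main obstacle is the quantitative estimate in step (I): pinning down exactly where $p$ turns negative and how little Gaussian mass sits there, since that tail bound is precisely what dictates the $\exp(-C\sqrt{\log d})$ form of the admissible noise level — anything heavier would make $\poly(d)\cdot\delta$ blow up. Step (II) is essentially routine once one insists on querying $\cA$ only at uniform points and amplifies before any union bound; the remaining care is the bookkeeping that the noise-conversion error, the two-call union bound, the $O(\log d)$ amplification, and the held-out test together still leave a $\ge 9/10$ advantage for LPN.
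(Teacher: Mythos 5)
Your proposal is correct and rests on the same two pillars as the paper's proof: (i) a per-sample channel converting $\rho$-classification noise into $\gamma$-Gaussian noise whose validity fails only on a far tail of Gaussian mass $\exp(-\Theta(\gamma^2\log^2\tfrac{1}{1-\rho}))$ — your explicit signed-density-plus-truncation construction is exactly the rejection kernel of \cite{brennan2018reducibility} that the paper invokes, and your tail estimate is the same calculation that forces the $1-\exp(-C\sqrt{\log d})$ noise floor; and (ii) handling the promise $|S|=\floor{d'/2}$ by padding to dimension $2d$ with fresh uniform coordinates, absorbing a guessed block into the secret, and trying all $d+1$ guesses. Where you genuinely diverge is in how the correct guess is certified and how the query is handled. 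The paper exploits that its LPN and LPGN formulations already come with a uniformly random query, so it simply pads the given query, estimates each guess's success probability on $\Theta(\log d/\rho^2)$ held-out labeled examples (one extra sample per trial serving as a test query), and runs the oracle once with the best guess; no self-reduction, amplification, or secret recovery is needed. You instead treat the query as worst-case, restore uniformity by random self-reducibility ($\chi_S(\bq_0)=\chi_S(\bq_0\odot\br)\chi_S(\br)$), amplify, recover the full secret from queries at $\bone_{2d}-2\be_i$, and validate candidates on held-out samples. Your route costs more machinery and samples (still $\poly(d)$, so the theorem is unaffected) but buys robustness: it works for adversarial queries and does not depend on the success probability being averaged over a uniform query, whereas the paper's guess-selection is shorter precisely because it leans on that formulation. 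Both arguments are sound; the only point to watch in yours is the bookkeeping you already flag — the per-sample TV loss, the two-call union bound, the $O(\log d)$ amplification, and the candidate-validation error must jointly stay below the oracle's $1/10$ slack, which your accounting handles.
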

\begin{proof}
This follows by combining Lemmas~\ref{lem:lpn-to-promise-lpn} and \ref{lem:promise-lpn-to-lpgn}, as outlined in the diagram below:
\begin{align*}
\mbox{LPN} \stackrel{\mathrm{Lemma}~\ref{lem:lpn-to-promise-lpn}}{\longrightarrow} \mbox{promise-LPN} \stackrel{\mathrm{Lemma}~\ref{lem:promise-lpn-to-lpgn}}{\longrightarrow} \mbox{LPGN}\,.
\end{align*}
\end{proof}

\subsubsection{Reduction from LPN to promise-LPN}\label{app:LPN-to-promise-LPN}

\begin{figure}[t!]
\centering
\begin{algbox}
\textbf{Algorithm} \textsc{LPN-to-promise-LPN}

\vspace{2mm}

\textit{Inputs}: query $\bq \in \cH_d$, samples $(\bx_i,y_i)_{i \in [n_{\cB}]}$, from an instance of $(d,\rho,n_{\cB})$-LPN, oracle $\cA$ for $(2d,\rho,n_{\cA})$-promise-LPN.

\begin{enumerate}
\item Let $T = 10000 \log(d) / \rho^2$. Relabel the samples, splitting them into groups as
$$(\bx_i,y_i)_{i \in [n_{\cB}]} =  \left(\bigsqcup_{r \in \{0,\ldots,d\}, t \in [T]} \{(\bx^{(r,t)}_i,y^{(r,t)}_i)\}_{i \in [n_{\cA}+1]}\right) \sqcup (\bx^{(\mathrm{\ast})}_i,y^{(\mathrm{\ast})}_i)_{i \in [n_{\cA}]}$$

\item For each $r \in \{0,\ldots,d\}$ and $t \in [T]$, let  $$\mathrm{ans}_{r,t} = \cA([\bx_{n_{\cA}+1}^{(r,t)}, \bz_{n_{\cA}+1}^{(r,t)}], ([\bx_i^{(r,t)}, \bz_i^{(r,t)}],y_i^{(r,t)} \cdot \prod_{j=1}^{r} z_{ij}^{(r,t)})_{i \in [n_{\cA}]}) \cdot \prod_{j=1}^r z_{n_{\cA}+1,j}^{(r,t)},$$
where $\bz_i^{(r,t)} \sim \cH_d$ is random padding and $[\bx_i^{(r,t)}, \bz_i^{(r,t)}] \in \cH_{2d}$ is the concatenation.
\item For each $r \in \{0,\ldots,d\}$, let $\hat{p}_r = |\{t \in [T] : \mathrm{ans}_{r,t} = y_{n_{\cA+1}}^{(r,t)}\}| / T$
\item Let $(\bz_{i}^{(\ast)})_{i \in [n_{\cA}]} \sim \cH_d$. Let $\hat{r} = \arg\max_r \hat{p}_r$. Return $$\cA([\bq, \bz_{n_{\cA}+1}^{(\ast)}], ([\bx_i^{(\ast)}, \bz_i^{(\ast)}],y_i^{(\ast)} \cdot \prod_{j=1}^{\hat{r}} z_{ij}^{(\ast)})_{i \in [n_{\cA}]}) \cdot \prod_{j=1}^{\hat{r}} z_{n_{\cA}+1,j}^{(\ast)}.$$
\end{enumerate}
\vspace{1mm}
\end{algbox}
\caption{Reduction from LPN to promise-LPN (Lemma~\ref{lem:lpn-to-promise-lpn}). Here $n_{\cB} = dT(1 + n_{\cA}) + n_{\cA}$.}
\label{fig:lpn-to-promise-lpn}
\end{figure}

\begin{lemma}[LPN to promise-LPN]\label{lem:lpn-to-promise-lpn} Given access to an oracle $\cA$ for $(2d,n_{\cA},\rho)$-promise-LPGN that is correct with probability $> 19/20$, one can construct an algorithm $\cB$ for $(d,n_{\cB},\rho)$-LPGN that is correct with probability $> 9/10$ and runs in $O(d\log(d) / \rho^2)$ calls to $\cA$ and $O(n_{\cA} d \log(d) / \rho^2)$ additional time. Furthermore, $n_{\cB} \leq d\log(d) n_{\cA} / \rho^2$.
\end{lemma}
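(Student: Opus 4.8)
The plan is to carry out the reduction \textsc{LPN-to-promise-LPN} of Figure~\ref{fig:lpn-to-promise-lpn}. The point is that a $d$-dimensional LPN instance has a secret $S$ of \emph{unknown} size, but the oracle $\cA$ only works on $2d$-dimensional instances whose secret has the exact size $\floor{2d/2}=d$; so I would embed the given instance into a family of $2d$-dimensional LPN instances indexed by a padding length $r\in\{0,\dots,d\}$, among which the choice $r^*:=d-|S|$ yields a genuine promise-LPN instance, then \emph{detect} from noisy held-out data a usable value of $r$, and finally invoke $\cA$ once more with that padding length.

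\textbf{The embedding and un-padding.} Given LPN samples $(\bx_i,y_i)$ with $\bx_i\sim\cH_d$ and $y_i=\chi_S(\bx_i)\zeta_i$, $\Pr[\zeta_i=1]=(1+\rho)/2$, and a padding length $r$, I draw fresh i.i.d.\ uniform $\bz_i\sim\cH_d$ and form $([\bx_i,\bz_i],\,y_i\prod_{j=1}^r z_{ij})$. The algebraic identity $y_i\prod_{j=1}^r z_{ij}=\chi_{S_r}([\bx_i,\bz_i])\,\zeta_i$, where $S_r:=S\cup\{d+1,\dots,d+r\}$, shows these samples have exactly the law of a $(2d,n_{\cA},\rho)$-LPN instance with secret $S_r$ of size $|S|+r$, with $[\bx_i,\bz_i]\sim\cH_{2d}$. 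For $r=r^*$ we have $|S_{r^*}|=d$, so this is a valid $(2d,n_{\cA},\rho)$-promise-LPN instance; feeding it to $\cA$ with a padded query $[\bq,\bz]$ ($\bz\sim\cH_d$ fresh) returns $\chi_{S_{r^*}}([\bq,\bz])$ with probability $>19/20$, and since $\chi_{S_{r^*}}([\bq,\bz])=\chi_S(\bq)\prod_{j=1}^{r^*}z_j$, multiplying the answer by $\prod_{j=1}^{r^*}z_j$ recovers $\chi_S(\bq)$. For $r\neq r^*$ the promise is violated and I make \emph{no} assumption on $\cA$.

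\textbf{Detection and conclusion.} For each $r$ let $q_r$ be the probability (over the padded samples, a uniform query, the padding, and $\cA$'s coins) that the un-padded answer equals $\chi_S$ of the query; the previous step gives $q_{r^*}>19/20$. Since $|S|$ (hence $r^*$) is unknown, I would instead \emph{certify} a good $r$: run $T=\Theta(\log(d)/\rho^2)$ fresh trials per $r$, each testing the un-padded answer against a held-out \emph{noisy} label. Conditioning on whether the un-padded answer matches the noiseless parity, and using that the held-out label noise is independent of $\cA$'s output, the test succeeds with probability exactly $\tfrac12-\tfrac{\rho}{2}+\rho q_r$; by Hoeffding and a union bound over the $d+1$ values of $r$, with probability $1-o(1)$ all empirical frequencies $\hat p_r$ lie within $\epsilon$ of this, for any fixed small $\epsilon=\Theta(\rho)$ (which is what forces $T=\Theta(\log(d)/\rho^2)$). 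On that event, $\hat p_{\hat r}\ge\hat p_{r^*}\ge\tfrac12-\tfrac{\rho}{2}+\rho q_{r^*}-\epsilon$ while $\hat p_{\hat r}\le\tfrac12-\tfrac{\rho}{2}+\rho q_{\hat r}+\epsilon$, hence $q_{\hat r}\ge q_{r^*}-2\epsilon/\rho>19/20-2\epsilon/\rho\ge 9/10$ with room to spare. The Step~4 call of $\cA$ is a fresh padded instance of padding length $\hat r$ on the true query $\bq$, using samples and coins independent of Steps~1--3, so it outputs $\chi_S(\bq)$ with probability $q_{\hat r}$; combining with the probability of the good detection event (and using that $\cA$'s correctness is $>19/20$ strictly) gives overall success probability $>9/10$. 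The budget is $n_{\cB}=(d+1)T(n_{\cA}+1)+n_{\cA}=O(d\log(d)\,n_{\cA}/\rho^2)$, with $(d+1)T+1=O(d\log(d)/\rho^2)$ oracle calls and $O(n_{\cA}\,d\log(d)/\rho^2)$ extra time.

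\textbf{Main obstacle.} The only non-routine point is why the detection step is needed and why it works. Since nothing is assumed about $\cA$ off-promise, $r^*$ itself cannot be located; but it also need not be — it suffices that $\hat r$ be \emph{any} padding length on which $\cA$ reliably returns the correct parity, and the held-out test certifies exactly this (at resolution $\sim\rho$, which is the source of the $\Theta(\log d/\rho^2)$ overhead), while $r^*$ is guaranteed to pass it. Everything else is bookkeeping of three error budgets — oracle error $<1/20$, detection resolution $\epsilon$, and union-bound slack — to keep the total strictly below $1/10$.
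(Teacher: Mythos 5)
Your proposal is correct and follows essentially the same route as the paper's proof: pad the $d$-dimensional instance with $r$ fresh uniform coordinates so that $r^*=d-|S|$ yields a valid promise instance, estimate each padding length's success rate against held-out noisy labels with $T=\Theta(\log(d)/\rho^2)$ trials, take the arg-max, and make one final call on fresh samples, with the same sample/oracle-call accounting. Your explicit bias formula $\tfrac12-\tfrac{\rho}{2}+\rho q_r$ is in fact a cleaner statement of the calibration step the paper performs.
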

\begin{proof}
The pseudocode for the reduction is given in Figure~\ref{fig:lpn-to-promise-lpn}. We prove correctness. Let $S \subseteq [d]$ be the unknown subset for the LPN problem. For any $r \in \{0,\ldots,d\}$, define the success probability of running $\cA$ with $r$ as $$s_r := \PP[\mathrm{ans}_{r,1} = \chi_S(\bx_{n_{\cA}+1}^{(r,1)})].$$ Notice that if we take $r^* = d-|S|$, then the samples that we feed into $\cA$ are those of the LPN problem with unknown subset $S' = S \cup \{d+1,\ldots,d+r\}$, which has size $|S'| = d = \floor{2d / 2}$. Therefore, the guarantee for $\cA$ implies that if we take $r^* = d-|S|$, then $$s_{r^*} > 19/20.$$ However, we are not given $r^*$ in the input of the LPGN problem, which is the main difficulty. Instead, in the first part of the algorithm we estimate the success probability for each $r \in \{0,\ldots,d\}$ using fresh samples, and at the end run $\cA$ with the $r^*$ that we estimate gives the best success probability.

To prove correctness, let $E$ be the event that for all $r \in \{0,\ldots,d\}$ we have 
$$|\hat{p}_r - \PP[\mathrm{ans}_{r,1} = y_{n_{\cA}+1}^{(r,1)}]| \leq 1/(50\rho).$$
By Hoeffding's inequality, $E$ holds with probability at least $\PP[E] \geq 1 - 2\exp(-8\log(d)) \geq 1/100$. Now notice that $y_{n_{\cA}+1}^{(r,1)}$ is just $\chi_S(\bx_{n_{\cA}+1}^{(r,1)})$ with classification noise so under the event $E$
\begin{align*}
|(\hat{p}_r + \rho/2 - 1/(2\rho)) / \rho - s_r| \leq 1/(50).
\end{align*}
Therefore, since $s_{r^*} \geq 19/20$, under event $E$ we must have $s_{\hat{r}} \geq 19/20 - 1/25 = 9/10 + 1/100$. Finally, since the probability of success of the algorithm is $s_{\hat{r}}$, and $\PP[E] \geq 1/100$, we have that the algorithm's success probability is at least $9/10$.
\end{proof}

\subsubsection{Reduction from promise-LPN to LPGN}\label{app:promise-LPN-to-LPGN}

\begin{figure}[t!]
\centering
\begin{algbox}
\textbf{Algorithm} \textsc{promise-LPN-to-LPGN}

\vspace{2mm}

\textit{Inputs}: query $\bq \in \cH_d$, samples $(\bx_i,y_i)_{i \in [n]}$, from an instance of $(d,\rho,n)$-promise-LPN, oracle $\cA$ for $(d,\gamma,n)$-LPGN.

\begin{enumerate}
\item For each $i \in [n]$, let $\tilde{y}_i = \textsc{RK}(y_i)$, where $\textsc{RK}$ is the rejection kernel $$\textsc{RK} = \textsc{RK}((1+\rho)/2 \to \cN(1,\gamma^2), (1-\rho)/2 \to \cN(-1,\gamma^2))$$ from Lemma~5.1 of \cite{brennan2018reducibility}.

\item Return $\cA(\bq, (\bx_i, \tilde{y}_i)_{i \in n})$
\end{enumerate}

\vspace{1mm}
\end{algbox}
\caption{Reduction from promise-LPN to LPGN (Lemma~\ref{lem:promise-lpn-to-lpgn}).}
\label{fig:promise-lpn-to-lpgn}
\end{figure}

Here, we show that the LPGN hardness assumption follows from the standard LPN hardness assumption with classification noise. The technique is rejection kernels.

\begin{lemma}\label{lem:promise-lpn-to-lpgn}
Given access to an oracle $\cA$ for $(2d,n,\gamma)$-LPGN that is correct with probability $> 49/50$, one can construct an algorithm $\cB$ for $(d,n,\rho)$-promise-LPN that is correct with probability $> 19/20$ and runs in one call to $\cA$ and $\poly(n,d)$ additional time. Furthermore, we can take $\rho \leq 1 - \exp(-C \sqrt{\log(n)} \max(\gamma^2, 1 / \gamma^2))$, for some universal constant $C > 0$.
\end{lemma}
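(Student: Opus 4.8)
The plan is to reduce promise-LPN to LPGN by a sample-wise transformation that converts Bernoulli classification noise into Gaussian noise, leaving the hidden subset $S$ (with $|S| = \floor{d/2}$, the promise needed for LPGN) and the query $\bq$ untouched; the whole reduction is one call to the LPGN oracle $\cA$ preceded by $n$ independent applications of a \emph{rejection kernel}, as in the pseudocode of Figure~\ref{fig:promise-lpn-to-lpgn}. Since both problems ask for $\chi_S(\bq)$, once we argue that the transformed tuple is statistically close to a genuine LPGN instance, we simply output whatever $\cA$ returns.

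First I would record the precise guarantee of the rejection kernel of \cite[Lemma~5.1]{brennan2018reducibility}. For a target error $\delta \in (0,1)$, there is a randomized map $\textsc{RK} = \textsc{RK}((1+\rho)/2 \to \cN(1,\gamma^2),\, (1-\rho)/2 \to \cN(-1,\gamma^2))$, computable in $\poly(\log(1/\delta),\gamma,1/\gamma)$ time, with the property that when its input $b \in \{+1,-1\}$ equals $+1$ with probability $(1+\rho)/2$ the output $\textsc{RK}(b)$ is within total variation distance $\delta$ of $\cN(1,\gamma^2)$, and when $b$ equals $+1$ with probability $(1-\rho)/2$ the output is within TV distance $\delta$ of $\cN(-1,\gamma^2)$ --- provided the source is not too close to deterministic, i.e., $\rho \leq 1 - \exp(-C\sqrt{\log(1/\delta)}\max(\gamma^2,1/\gamma^2))$ for a universal constant $C$. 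I would instantiate this with $\delta = 1/(100n)$; then $\log(1/\delta) = \Theta(\log n)$, which yields exactly the claimed constraint $\rho \leq 1 - \exp(-C'\sqrt{\log n}\max(\gamma^2,1/\gamma^2))$, and the $n$ applications of $\textsc{RK}$ (each with fresh, independent randomness) run in $\poly(n,d)$ total time.

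Next comes correctness. Fix the hidden $S$ with $|S| = \floor{d/2}$, and condition on $\bq$ and on $(\bx_i)_{i \in [n]}$, which the reduction passes through unchanged and which therefore have the correct marginals (uniform on $\cH_d$). Conditioning further on the value of $\chi_S(\bx_i)$: if $\chi_S(\bx_i) = +1$ then the promise-LPN label $y_i = \chi_S(\bx_i)\zeta_i = \zeta_i$ is $+1$ with probability $(1+\rho)/2$, so $\tilde y_i = \textsc{RK}(y_i)$ is within TV distance $\delta$ of $\cN(1,\gamma^2) = \cN(\chi_S(\bx_i),\gamma^2)$; if $\chi_S(\bx_i) = -1$ then $y_i = -\zeta_i$ is $+1$ with probability $(1-\rho)/2$, so $\tilde y_i$ is within TV distance $\delta$ of $\cN(-1,\gamma^2) = \cN(\chi_S(\bx_i),\gamma^2)$. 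As the $\tilde y_i$ are conditionally independent given $(\bx_i)_{i\in[n]}$, tensorization of TV distance together with the triangle inequality (and then averaging over $(\bx_i)_{i\in[n]}$) gives that $(\bq, (\bx_i, \tilde y_i)_{i\in[n]})$ is within TV distance $n\delta \leq 1/100$ of a genuine $(d,n,\gamma)$-LPGN instance with unknown subset $S$. By the data-processing inequality for TV distance, running $\cA$ on this tuple and returning its answer therefore solves the promise-LPN instance with probability at least $49/50 - 1/100 > 19/20$; the running time is one call to $\cA$ plus $\poly(n,d)$.

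The one genuinely delicate point is pinning down the quantitative form of \cite[Lemma~5.1]{brennan2018reducibility}: verifying that simulating the \emph{fixed} Gaussian pair $\cN(\pm 1,\gamma^2)$ from the Bernoulli pair with gap $\rho$, to per-sample TV error $\delta$, requires exactly $1 - \rho \leq \exp(-\Theta(\sqrt{\log(1/\delta)}\,\max(\gamma^2, 1/\gamma^2)))$, where the $\max(\gamma^2,1/\gamma^2)$ factor handles both the small-$\gamma$ regime (the target pair is nearly deterministic) and the large-$\gamma$ regime (the two target Gaussians overlap heavily). The remaining ingredients --- TV tensorization over independent coordinates, the union bound, and the bookkeeping of success probabilities --- are routine.
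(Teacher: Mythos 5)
Your proposal is correct and follows essentially the same route as the paper: apply the Brennan--Bresler rejection kernel to each label to turn the $\Rad$-noise pair into $\cN(\pm 1,\gamma^2)$ with per-sample TV error $O(1/n)$, tensorize, and invoke the LPGN oracle once, losing only $1/100$ in success probability. The only piece you defer --- verifying that the kernel achieves per-sample error $\delta$ exactly when $1-\rho \leq \exp(-\Theta(\sqrt{\log(1/\delta)}\max(\gamma^2,1/\gamma^2)))$ --- is the part the paper works out explicitly via the acceptance set of the rejection kernel and Gaussian tail bounds, but you have correctly identified both the needed statement and where it comes from.
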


\begin{proof}
The pseudocode of the reduction is given in Figure~\ref{fig:promise-lpn-to-lpgn}. We use the rejection kernel technique developed in \cite{brennan2018reducibility} to convert the classification noise of LPN to the additive Gaussian noise of LPGN. Let $\rho = 1 - 2\delta$, where $\delta < 0.1$. Notice that if we define $p = (1+\rho)/2 = 1 - \delta$ and $q = (1-\rho)/2 = \delta$, Lemma~5.1 of \cite{brennan2018reducibility} provides a randomized map $\textsc{RK}$ that runs in time and maps a $\Rad(p)$ random variable to $\cN(1,\gamma^2)$ and a $\Rad(q)$ random variable to $\cN(-1,\gamma^2)$. For some parameter $N > 0$, this map runs in $O(N)$ time and has the guarantee that $$d_{\mathrm{TV}}(\textsc{RK}(\Rad(p)), \cN(1,\gamma^2)) \leq \Delta\quad \mbox{and} \quad d_{\mathrm{TV}}(\textsc{RK}(\Rad(q)),\cN(-1,\gamma^2)) \leq \Delta,$$
where $$\Delta = \PP_{X \sim \cN(1,\gamma^2)}[X \not\in \cS] + (\PP_{X \sim \cN(-1,\gamma^2)}[X \not\in \cS] + \frac{\delta}{1-\delta})^N,$$
and $$\cS = \{x : \frac{\delta}{1-\delta} \leq \frac{\exp(-(x-1)^2 / (2\gamma^2)}{\exp(-(x+1)^2/(2\gamma^2)} \leq \frac{1-\delta}{\delta}\} = \{x : \frac{\delta}{1-\delta} \leq \exp(\frac{2 x }{ \gamma^2}) \leq \frac{1-\delta}{\delta}\}.$$
By standard Gaussian tail bounds, \begin{align*}\PP_{X \sim \cN(1,\gamma^2)}[X \not\in \cS] &= \PP_{X \sim \cN(-1,\gamma^2)}[X \not\in \cS] \leq \exp(-(\gamma^2\log((1-\delta)/\delta) / 2 - 1)^2 / 2\gamma^2).
\end{align*}
So for any $\eps > 0$, as long as $\delta < \min(\exp(-10 \sqrt{\log(1/\eps)} / \gamma^2), \exp(-10 \sqrt{\log(1/\eps)} \gamma^2))$, we have
\begin{align*}
\PP_{X \sim \cN(1,\gamma^2)}[X \not\in \cS] \leq \eps.
\end{align*}
So letting $\eps \leq 1/(2000n^2)$, and $N \geq \log(2000n^2)$, we have $\Delta \leq n / 100$.

Thus, $(\bq, (\bx_i,\tilde{y}_i)_{i \in [n]})$ is $\leq 1/100$ total-variation distance from being drawn from LPGN with unknown set $S$. This means that the algorithm has success probability $49/50 - 1/100 \geq 19/20$.
\end{proof}

\section{On the equivariance of SGD and GD}\label{app:equi}

Recall from Definition~\ref{def:equi-intro} that an algorithm $\cA$ that takes in a distribution $\cD \in \cP(\cX \times \R)$ and outputs a function $\cA(\cD) : \cX \to \R$ is said to be $G$-equivariant if
\begin{align*}
\cA(\cD) \stackrel{d}{=} \cA(g(\cD)) \circ g,
\end{align*}
for any $g \in G$. Here we view the group element as a function $g : \cX \to \cX$, since it acts on the space of inputs $\cX$. We also define $g(\cD)$ to be the distribution of $(g(\bx),y)$, where $(\bx,y) \sim \cD$. In Definitions~\ref{def:equi-gd} and~\ref{def:equi-sgd}, we define the equivariance of GD and SGD, respectively. In Proposition~\ref{prop:equi}, we claim that GD and SGD are $G_{perm}$-equivariant in the case of training FC networks with i.i.d. initialization. Furthermore, these algorithms are $\Gsignperm$-, and $\Grot$-equivariant when the initialization is symmetric and Gaussian, respectively. The equivariances of SGD in Proposition~\ref{prop:equi} are proved by \cite{ng2004feature,li2021convolutional}. They also show extension to other algorithms beyond SGD:
\begin{remark}\label{rem:beyond-sgd}
\cite{li2021convolutional} note that other popular optimizers on FC networks with i.i.d. symmetric initialization, such as AdaGrad and Adam, also yield $\Gsignperm$-equivariance, as well as SGD with a mini-batch, or gradient descent on the loss of the empirical distribution $\ell_{\hat{\cD}}$, where $\hat{\cD} = \frac{1}{n} \sum_{i=1}^n \delta_{(y_i, \bx_i)}$, for $(\bx_i,y_i)_{i \in [n]} \stackrel{i.i.d.}{\sim} \cD$. Therefore our SGD hardness result in Theorem~\ref{thm:sgd-sum-mod-8-hard} applies to training with the above algorithms.
\end{remark}

We limit ourselves in this appendix to prove the equivariances claimed by Proposition~\ref{prop:equi} for GD. We provide the proofs only for completeness, since they are quite similar to the proofs of equivariance of SGD.

In the remainder of this section, suppose that we have an architecture $\fNN(\cdot;\btheta)$ and an initialization $\mu_{\btheta}$ that satisfy Assumption~\ref{ass:noskip}, so that we can write the parameters $\btheta = (\bW, \bpsi)$. Furthermore, let $\cD \subseteq \cP(\R^d \times \R)$ be a data distribution.
\begin{definition}
For any invertible linear transformation $\bM \in GL(d,\R)$, let $\bM$ act on $\btheta$ as $\bM \square \btheta = (\bW \bM, \bpsi)$.
Let $\bM \square \cD$ be the distribution of $(\bM \bx, y)$ for $(\bx, y) \sim \cD$.
\end{definition}
\begin{lemma}\label{lem:equi-gd-helper}
Suppose that for some orthogonal transformation $\bM \in O(d)$, if we draw $\btheta^0 \sim \mu_{\btheta}$, then $\bM \square \btheta^0 \stackrel{d}{=} \btheta^0$.
Let $\btheta^k$ be the weights from $(\fNN, \mu_{\btheta})$-GD on distribution $\cD$, for any number of steps $k$, with any learning rate $\eta > 0$, and with any noise level $\tau > 0$. Similarly, let $\tilde{\btheta}^k$ be the weights from running $(\fNN,\mu_{\btheta})$-GD on distribution $\tilde{\cD} = \bM \square \cD$.

Then, for any $\bx \in \R^d$,
\begin{align*}
\fNN(\bx; \btheta^k) \stackrel{d}{=} \fNN(\bM \bx; \tilde{\btheta}^k)
\end{align*}
\end{lemma}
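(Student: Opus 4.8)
\textbf{Proof plan for Lemma~\ref{lem:equi-gd-helper}.}
The plan is to show by induction on the number of steps $k$ that the entire trajectories are related by the square action of $\bM$, in the precise sense that we can couple the two GD runs so that $\tilde{\btheta}^j = \bM \square \btheta^j$ for every $j \le k$, and hence in particular $\fNN(\bx;\btheta^k) = \gNN(\bW^k \bx;\bpsi^k) = \gNN((\bW^k \bM)(\bM^{-1}\bx);\bpsi^k)$; applying this with $\bM^{-1}\bx$ in place of $\bx$ and using $\bM \in O(d)$ will give the claimed distributional identity $\fNN(\bx;\btheta^k) \stackrel{d}{=} \fNN(\bM\bx;\tilde{\btheta}^k)$. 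The base case $j=0$ is exactly the hypothesis that $\bM \square \btheta^0 \stackrel{d}{=} \btheta^0$, which lets us couple the two random initializations so that $\tilde{\btheta}^0 = \bM\square\btheta^0$ almost surely.

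For the inductive step, I would first establish the key equivariance of the clipped gradient map: for any $\btheta = (\bW,\bpsi)$ and any orthogonal $\bM$,
\begin{align*}
\bg_{\bM\square\cD}(\bM\square\btheta) = \bM \square \bg_{\cD}(\btheta).
\end{align*}
This follows from two observations. First, $\fNN(\bM\bx; \bM\square\btheta) = \gNN((\bW\bM)(\bM\bx);\bpsi) = \gNN(\bW\bx;\bpsi) = \fNN(\bx;\btheta)$, so the residual $y - \fNN(\bM\bx;\bM\square\btheta)$ under $(\bM\bx,y)\sim\bM\square\cD$ equals $y - \fNN(\bx;\btheta)$ under $(\bx,y)\sim\cD$. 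Second, the gradient with respect to $\bW$ transforms as $\nabla_{\bW}\fNN(\bM\bx;\bM\square\btheta) = (\nabla_{\bW}\fNN(\bx;\btheta))\bM$ while the $\bpsi$-gradient is unchanged (chain rule on $\bW \mapsto \bW\bM$, using that the ``$\square$'' action only rescales columns of $\bW$ and leaves $\bpsi$ fixed); since $\bM$ is orthogonal, applying $\bM$ to the $\bW$-block is an isometry of $\R^{m\times d}$ (viewed as $m$ copies of $\R^d$), so the ball-projection $\Pi_{B(0,R)}$ commutes with the map $\btheta \mapsto \bM\square\btheta$. Combining, $\bg_{\bM\square\cD}(\bM\square\btheta) = \bM\square\bg_{\cD}(\btheta)$.

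Given this, the inductive step is mechanical: assuming $\tilde\btheta^j = \bM\square\btheta^j$, the GD update \eqref{GD} gives
\begin{align*}
\tilde\btheta^{j+1} = \tilde\btheta^j - \eta\,\bg_{\tilde\cD}(\tilde\btheta^j) + \tilde\bxi^j = \bM\square\btheta^j - \eta\,(\bM\square\bg_{\cD}(\btheta^j)) + \tilde\bxi^j.
\end{align*}
Since $\bM$ acts linearly on $\btheta$ (it only right-multiplies the $\bW$ block by an orthogonal matrix and fixes $\bpsi$), $\bM\square(\cdot)$ commutes with the affine combination, so this equals $\bM\square(\btheta^j - \eta\,\bg_{\cD}(\btheta^j)) + \tilde\bxi^j$. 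Now I couple the noise: the injected noise $\bxi^j \sim \cN(0,\tau^2\bI)$ in the first run and $\tilde\bxi^j \sim \cN(0,\tau^2\bI)$ in the second run; because $\bM$ restricted to the $\bW$-block is orthogonal and acts as identity on $\bpsi$, the map $\btheta\mapsto\bM\square\btheta$ is an orthogonal transformation of $\R^p$, so $\bM\square\bxi^j \stackrel{d}{=} \bxi^j \sim \cN(0,\tau^2\bI)$, and I set $\tilde\bxi^j := \bM\square\bxi^j$ in the coupling. Then $\tilde\btheta^{j+1} = \bM\square(\btheta^j - \eta\,\bg_{\cD}(\btheta^j) + \bxi^j) = \bM\square\btheta^{j+1}$, completing the induction. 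The main (and essentially only) obstacle is getting the gradient-equivariance identity right — in particular tracking how right-multiplication of $\bW$ by $\bM$ interacts with the chain rule and why orthogonality of $\bM$ is exactly what makes the clipping projection $\Pi_{B(0,R)}$ commute with the action; everything else is a routine unrolling of the recursion with a noise coupling.
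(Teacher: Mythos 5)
Your overall strategy is the same as the paper's (couple the initializations via the invariance hypothesis, prove an equivariance identity for the clipped gradient map, couple the injected Gaussian noise using that the parameter-space action is orthogonal, and induct), but the central identity is false as you state it, because your group action points in the wrong direction. With the conventions $\bM \square \btheta = (\bW\bM, \bpsi)$ and $\bM \square \cD$ the law of $(\bM\bx, y)$, one has $\fNN(\bM\bx; \bM\square\btheta) = \gNN\bigl((\bW\bM)(\bM\bx); \bpsi\bigr) = \gNN(\bW\bM^{2}\bx; \bpsi)$, which equals $\fNN(\bx;\btheta)$ only when $\bM^{2} = \bI$ (sign flips, involutive permutations), not for a general $\bM \in O(d)$ --- and the lemma is invoked for $SO(d)$ and arbitrary permutations. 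Consequently the claimed identity $\bg_{\bM\square\cD}(\bM\square\btheta) = \bM\square\bg_{\cD}(\btheta)$ is false in general (the residuals do not match), and the coupling $\tilde\btheta^{j} = \bM\square\btheta^{j}$ cannot hold: $\bM\square\btheta^{j}$ is distributed as the GD trajectory on data $(\bM^{-1}\bx, y)$, not on $\tilde\cD = \bM\square\cD$. Your closing step also does not land: even granting your coupling you would obtain $\fNN(\bx;\btheta^{k}) = \fNN(\bM^{-1}\bx;\tilde\btheta^{k})$, and substituting $\bM^{-1}\bx$ for $\bx$ does not turn this into the stated conclusion $\fNN(\bx;\btheta^{k}) \stackrel{d}{=} \fNN(\bM\bx;\tilde\btheta^{k})$ unless $\bM = \bM^{-1}$.

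The repair is a direction flip, after which your argument becomes the paper's: couple so that $\tilde\btheta^{j} = \bM^{\top}\square\btheta^{j}$, equivalently $\btheta^{j} = \bM\square\tilde\btheta^{j}$. Then $\fNN(\bM\bx;\tilde\btheta^{k}) = \gNN(\bW^{k}\bM^{\top}\bM\bx;\bpsi^{k}) = \fNN(\bx;\btheta^{k})$, the residuals under $\cD$ and $\tilde\cD$ agree, and the chain rule gives $\nabla_{\bW}\fNN(\bM\bx;\tilde\btheta^{k}) = \nabla_{\bW}\fNN(\bx;\btheta^{k})\,\bM^{\top}$, so the correct gradient identity is $\bg_{\tilde\cD}(\tilde\btheta^{k}) = \bM^{\top}\square\bg_{\cD}(\btheta^{k})$; your two good observations --- that $\Pi_{B(0,R)}$ commutes with the parameter-space action because that action is an isometry, and that the Gaussian noise can be coupled through it --- then carry over verbatim and close the induction. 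The base case is still covered by the hypothesis, since applying the inverse action to $\bM\square\btheta^{0} \stackrel{d}{=} \btheta^{0}$ yields $\bM^{\top}\square\btheta^{0} \stackrel{d}{=} \btheta^{0}$, so the two initializations can be coupled as $\tilde\btheta^{0} = \bM^{\top}\square\btheta^{0}$. As written, though, the key step fails for exactly the groups the lemma is used for, so the orientation of the action is a genuine error, not a cosmetic one.
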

\begin{proof}
We claim that we can couple $\btheta^k$ and $\tilde{\btheta}^k$ such that $\btheta^k = \bM \square \tilde{\btheta}^k$ almost surely. Therefore, for any $\bx$:
\begin{align}\label{eq:equi-gd-linear-helper}
\fNN(\bx;\btheta^k) = \fNN(\bx; \bM \square \tilde{\btheta}^k) = \gNN(\tilde{\bW}^k \bM \bx; \tilde{\bpsi}^k) = \fNN(\bM \bx; \tilde{\btheta}^k),
\end{align}
which proves the lemma. It remains to show our coupling inductively on $k$. The base case $k = 0$ is assumed. For the inductive step, assume that the coupling is true for $k \geq 0$, and prove it for $k+1$. By (a) using \eqref{eq:equi-gd-linear-helper},
\begin{align*}\bg_{\tilde{\cD}}(\tilde{\btheta}^{k}) &= \E_{(\bx,y) \sim \cD}[(\fNN(\bM \bx; \tilde{\btheta}^{k}) - y) \Pi_{B(0,R)}\nabla_{\btheta} \fNN(\bM \bx; \tilde{\btheta}^{k})] \\
&\stackrel{(a)}{=} \E_{(\bx,y) \sim \cD}[(\fNN(\bx;\btheta^{k}) - y) \Pi_{B(0,R)}\nabla_{\btheta} \fNN(\bM \bx; \tilde{\btheta}^{k})]
\end{align*}
Furthermore, by (a) the chain rule for differentiation combined with \eqref{eq:equi-gd-linear-helper}, (b) using that $\bM^{\top} \in O(d)$ preserves the $\|\cdot\|_2$ norm,  
\begin{align*}
\Pi_{B(0,R)}\nabla_{\btheta} \fNN(\bM \bx; \tilde{\btheta}^{k}) &= \Pi_{B(0,R)}[\nabla_{\bW} \gNN(\tilde{\bW}^k \bM \bx, \tilde{\bpsi}^k), \nabla_{\bpsi} \gNN(\tilde{\bW}^k \bM \bx, \tilde{\bpsi}^k)] \\
&\stackrel{(a)}{=} \Pi_{B(0,R)}[\nabla_{\bW} \gNN(\bW^k \bx, \bpsi^k) \bM^{\top}, \nabla_{\bpsi} \gNN(\bW^k \bx, \bpsi^k)] \\
&= \bM^{\top} \square (\Pi_{B(0,R)} [\nabla_{\bW} \gNN(\bW^k \bx, \bpsi^k), \nabla_{\bpsi} \gNN(\bW^k \bx, \bpsi^k)])
\end{align*}
By linearity of the $\square$ operation, it follows that
\begin{align*}
\bg_{\tilde{\cD}}(\tilde{\btheta}^k) = \bM^{\top} \square \bg_{\cD}(\btheta^k).
\end{align*}
So since $\bM \in O(d)$ we have $\bM^{-1} = \bM^{\top}$, implying $$\bg_{\cD}(\btheta^k) = \bM \square \bg_{\tilde{\cD}}(\tilde{\btheta}^k).$$ Also couple the added noises $\xi^k$ and $\tilde{\xi}^k$ so that $\xi^k = \bM \square \tilde{\xi}^k$, which can be done since $\bM \in O(d)$ and Gaussians are orthogonal-invariant. By linearity of the $\square$ operation, and the inductive hypothesis, it holds that $\btheta^{k+1} = \bM \square \tilde{\btheta}^{k+1}$ almost surely. The proof of the claim follows by induction.
\end{proof}

The above lemma immediately implies the GD equivariances claimed in Proposition~\ref{prop:equi}.
\begin{proof}[Proof of GD equivariance in Proposition~\ref{prop:equi}]
Notice that $G_{perm}$, $\Gsignperm$, and $\Grot$ can be identified with subgroups of $O(d)$. For i.i.d. initialization $\mu_{\bW} = \mu_w^{\otimes (m \times d)}$, for any permutation matrix $\bM$ we have $\bM \square \btheta^0 \stackrel{d}{=} \btheta^0$. Similarly, if $\mu_{w}$ is symmetric then for any signed permutation matrix $\bM$ we have $\bM \square \btheta^0 \stackrel{d}{=} \btheta^0$. Finally, if $\mu_w$ is Gaussian then for any rotation matrix $\bM \in SO(d)$ we have $\bM \square \btheta^0 \stackrel{d}{=} \btheta^0$. Lemma~\ref{lem:equi-gd-helper} implies the GD equivariances claimed in Proposition~\ref{prop:equi}.
\end{proof}

\end{document}